\documentclass{article}

\usepackage{microtype}
\usepackage{graphicx}
\usepackage{subcaption}
\usepackage{booktabs} 

\usepackage{multirow}
\usepackage{pifont}
\usepackage{amssymb}
\usepackage{amsthm}
\usepackage{subcaption}
\usepackage{graphicx}
\usepackage[table]{xcolor}
\definecolor{summarygray}{gray}{0.92} 
\usepackage{makecell}
\usepackage{hhline}
\usepackage{enumitem}
\usepackage{tabularx} 
\usepackage{pifont}        
\newcommand{\cmark}{\ding{51}}%
\newcommand{\xmark}{\ding{55}}%
\usepackage{wrapfig}
\usepackage{float}
\usepackage{hyphenat}

\usepackage{hyperref}

\usepackage[accepted]{icml2026}

\usepackage{amsmath}
\usepackage{amssymb}
\usepackage{mathtools}
\usepackage{amsthm}

\usepackage[capitalize,noabbrev]{cleveref}

\theoremstyle{plain}
\newtheorem{theorem}{Theorem}[section]
\newtheorem{proposition}[theorem]{Proposition}
\newtheorem{lemma}[theorem]{Lemma}
\newtheorem{corollary}[theorem]{Corollary}
\theoremstyle{definition}

\newtheorem{assumption}[theorem]{Assumption}
\theoremstyle{remark}
\newtheorem{remark}[theorem]{Remark}

\usepackage[textsize=tiny]{todonotes}

\icmltitlerunning{AugMask: Stochastic Augmentation and Masking}

\begin{document}

\twocolumn[
  \icmltitle{
  AugMask: Training Diffusion Models on Incomplete Tabular Data\\via Stochastic Augmentation and Masking
}



  \icmlsetsymbol{corr}{*}

  \begin{icmlauthorlist}
    \icmlauthor{Jungkyu Kim}{yonsei}
    \icmlauthor{Taeyoung Park}{yonsei,corr}
    \icmlauthor{Kibok Lee}{yonsei,corr}
  \end{icmlauthorlist}

  \icmlaffiliation{yonsei}{Department of Statistics and Data Science, Yonsei University, South Korea}

  \icmlcorrespondingauthor{Kibok Lee}{kibok@yonsei.ac.kr}
  \icmlcorrespondingauthor{Taeyoung Park}{tpark@yonsei.ac.kr}

  \icmlkeywords{Machine Learning, ICML}
  \vskip 0.3in
]



\printAffiliationsAndNotice{\textsuperscript{*}Co-corresponding authors.}

\begin{abstract}
Score-based diffusion models have emerged as prominent deep generative models; however, their application to tabular data remains challenging because their backbones assume fully specified inputs, whereas real-world tabular data often contain missing values.
We propose \textbf{AugMask}, a plug-and-play training framework that adapts missing-unaware backbones to incomplete data by separating conditioning from supervision.
AugMask 1) constructs numeric inputs via conditional stochastic augmentation using
lightweight auxiliary models, and 2) applies denoising supervision
only to observed coordinates. In effect, augmented missing entries
serve as \textit{uncertain conditioning context rather than training targets}.
We connect this training rule to a Rao--Blackwellized objective and show that marginalizing missing entries yields a variance-weighted sensitivity penalty, discouraging over-reliance on uncertain completions.
Across diverse datasets and missingness regimes, AugMask enables standard diffusion-based tabular generators to outperform specialized missing-aware baselines.

\end{abstract}

\section{Introduction}
Deep generative models have been successfully applied to synthesize tabular data, addressing challenges including data scarcity and privacy~\cite{jordon2018pate}.
However, real-world tabular data often contain missing values, hindering the direct application of these models to practical settings~\cite{cui2024tabular}.
Standard generative architectures are typically designed under the assumption of fully observed inputs. 
This makes them \textit{missing-unaware}, creating a critical mismatch with the inherent incompleteness of real-world tabular data.

Bridging this gap is non-trivial because standard neural-network-based generators cannot inherently handle undefined values (NaNs) and therefore require fully specified inputs. 
A common way is to make a model \textit{missing-aware} by first replacing missing entries with a constant placeholder, such as zeros, column means, or special tokens, and then modifying the objective or inputs to reduce the placeholder's influence~\citep{mattei2019miwae, nazabal2020handling, ouyang2023missdiff}. Depending on the method, this may involve weighting or masking the loss to observed coordinates, or providing the observation mask as the condition.
However, zero-filling forces the model to treat missing entries identically to valid zeros~\citep{collier2020vaes} and can induce \emph{sparsity bias}, making predictions depend on the missingness level~\citep{yi2019not}.
In turn, constant placeholders induce an \emph{artificially consistent} input pattern and can introduce imputation-induced distortions in the learned network~\citep{ipsen2022deal}.
These modifications can change which coordinates contribute to the objective, but not what the model receives as input.
Even when missing coordinates are excluded from the loss, the filled values remain in the input and can become deterministic missingness cues.

On the other hand, instead of using a constant placeholder, missing entries can be iteratively updated during training via \textit{self-imputation}, as in diffusion-based \textit{Expectation-Maximization} (EM) approaches such as DiffPuter~\citep{zhang2025diffputer}.
While this can mitigate imputation-induced distortions from constant placeholders, it introduces a fundamental optimization challenge, where the model is trained on its own evolving predictions. 
Because diffusion objectives \textit{seek diversity rather than accuracy}~\citep{chen2024rethinking}, such a strategy can yield overly dispersed completions in practice. 
Feeding these shifting completions back into training creates a \textit{moving target}, resulting in noisy gradients and non-stationary dynamics that complicate convergence.

These issues suggest that training from incomplete data should provide stable numeric inputs without treating filled-in missing values as ground-truth targets.
We propose \textbf{AugMask}, a \emph{training strategy} that separates these roles.
It uses \emph{stochastic augmentation} to construct plausible context-dependent inputs and \textit{masks} the loss so that only originally observed coordinates contribute to training.
The stochasticity of the augmentation is crucial as it encodes uncertainty inherent in missingness.
High conditional variance discourages reliance on uncertain fills, while low variance permits stronger use as context.
A local analysis of the objective formalizes this by showing that marginalizing missing coordinates over plausible completions yields a \emph{variance-weighted sensitivity penalty}.
Our contributions are as follows:
\begin{itemize}
    \item \textbf{A training principle for incomplete tabular diffusion.}
    We introduce AugMask\footnote{Code is available at: \url{https://github.com/normal-kim/AugMask}}, which separates conditioning from supervision: stochastic completions provide numeric, uncertainty-aware context, while the denoising loss is applied only to observed coordinates.
    
    \item \textbf{A marginalization view of stochastic augmentation.}
    We analyze AugMask through a Rao--Blackwellized objective and show that marginalizing uncertain missing entries yields a variance-weighted sensitivity penalty, explaining why stochastic completions are useful beyond
    point-wise imputation accuracy.
    
    \item \textbf{A plug-and-play recipe for missing-unaware backbones.}
    We provide a lightweight framework that adapts standard score-based diffusion models for incomplete tabular data without architectural changes, and validate it across datasets, missingness regimes, and missing-aware baselines.
\end{itemize}

\section{Preliminaries}

We observe partially observed samples $(\mathbf{x}^{\mathrm{obs}},\mathbf{m})$ and train a denoiser $D_\theta$ by applying the loss only to truly observed coordinates. 
Restricting the denoising loss to observed coordinates is a standard choice under missingness, but it does \emph{not} solve by itself the practical issue that standard neural networks cannot process inputs containing $\mathrm{NaN}$. Missing parts must therefore be replaced with numeric values, which then become part of the denoiser's \emph{conditioning} signal.

\paragraph{Score-based diffusion models.}
Let $\mathbf{X}^0\sim p_0$ denote a clean sample at diffusion time $t=0$ from the data, and sample $t\sim \pi(t)$, where $\pi$ is a distribution over noise levels with the corresponding noise scale $\sigma(t)$. 
The forward process that corrupts the clean sample is defined as $q(\mathbf{X}^t \mid \mathbf{X}^0, t)=\mathcal{N}(\mathbf{X}^0,\sigma(t)^2 I)$.
Under Gaussian corruption, the denoising score matching is equivalent (up to a $t$-dependent scaling) to the denoising autoencoder (DAE) objective~\citep{vincent2011connection}. We use the DAE form for clarity:
\begin{equation}
\mathcal{L}_\theta^{\mathrm{DAE}}
=\mathbb{E}_{t\sim \pi,\ \mathbf{X}^0,\ \mathbf{X}^t\sim q(\cdot\mid \mathbf{X}^0,t)}
\left[\left\| D_\theta(\mathbf{X}^t,t)-\mathbf{X}^0\right\|_2^2\right].
\label{eq:dae}
\end{equation}

\paragraph{Incomplete data.}
Let $\mathbf{X}=(X_1,\dots,X_d)^\top$ denote a random tabular sample, and let $\mathbf{M}\in\{0,1\}^d$ be an observation mask with $M_j=1$ if $X_j$ is observed.
Let $\mathbf{X}^{\mathrm{obs}}$ denote the observed parts of $\mathbf{X}$ induced by $\mathbf{M}$; accordingly, we observe a realization of $(\mathbf{X}^{\mathrm{obs}},\mathbf{M})$ rather than that of the complete $\mathbf{X}$, which we write as $(\mathbf{x}^{\mathrm{obs}},\mathbf{m})$.
In our implementation, we write missing entries as:
\begin{equation}
\tilde{x}_{j} =
\begin{cases}
x_{j}^{\mathrm{obs}}, & m_j = 1,\\
\mathrm{NaN}, & m_j = 0.
\end{cases}
\label{eq:incomplete}
\end{equation}

Our main method focuses on ignorable missingness, namely Missing Completely At Random (MCAR) and Missing At Random (MAR). Details are provided in Appendix~\ref{app:missing}.

\paragraph{Masked denoising objective and the completion problem.}
Following the likelihood-bound perspective for diffusion models~\cite{song2021maximum}, prior work shows that restricting the denoising loss to observed coordinates gives a principled objective under incomplete data~\citep{ouyang2023missdiff}.
In practice, this objective is evaluated as an empirical average over partially observed samples $(\mathbf{x}^{\mathrm{obs}},\mathbf{m})$ from the training set, with randomness from $t\sim\pi$ and the forward noising process.
However, since $D_\theta$ cannot take $\mathrm{NaN}$ as input, missing entries must be replaced with \textit{numeric completions}.
A common choice is a constant placeholder $\mathbf{c}^{\mathrm{pl}}$ (e.g., zero-filling) that forms
\begin{equation}
\bar{x}^{0}_{j} := m_{j}\, x_{j}^{\mathrm{obs}} + (1 - m_{j})\, c^{\mathrm{pl}}_j.
\label{eq:placeholder}
\end{equation}
With $\bar{\mathbf{x}}^0$ as input to the forward process, the denoising objective over the observed coordinates becomes
{%
\thinmuskip=.6mu 
\medmuskip=.8mu plus 2mu minus 4mu 
\thickmuskip=1mu plus 5mu 
\begin{equation}
\mathcal{L}_{\theta}^{\mathrm{MissDiff}}
:= \mathbb{E}_{t\sim\pi,\bar{\mathbf{x}}^t\sim q(\cdot\mid\bar {\mathbf{x}}^0,t)}
\left[\left\| \mathbf{m} \odot \big( D_\theta(\bar{\mathbf{x}}^{t}, t) - \bar{\mathbf{x}}^{0} \big) \right\|_2^2\right].
\label{eq:dsm-missing}
\end{equation}
}%

Notably, masking removes supervision on missing coordinates, but it \emph{does not remove conditioning}. 
The denoiser still receives the completion values $\mathbf{c}^{\mathrm{pl}}$ as part of its input, so the model may learn to depend on a deterministic signal correlated with missingness.

\begin{figure}[!t]
\centering    \includegraphics[width=1.0\linewidth]{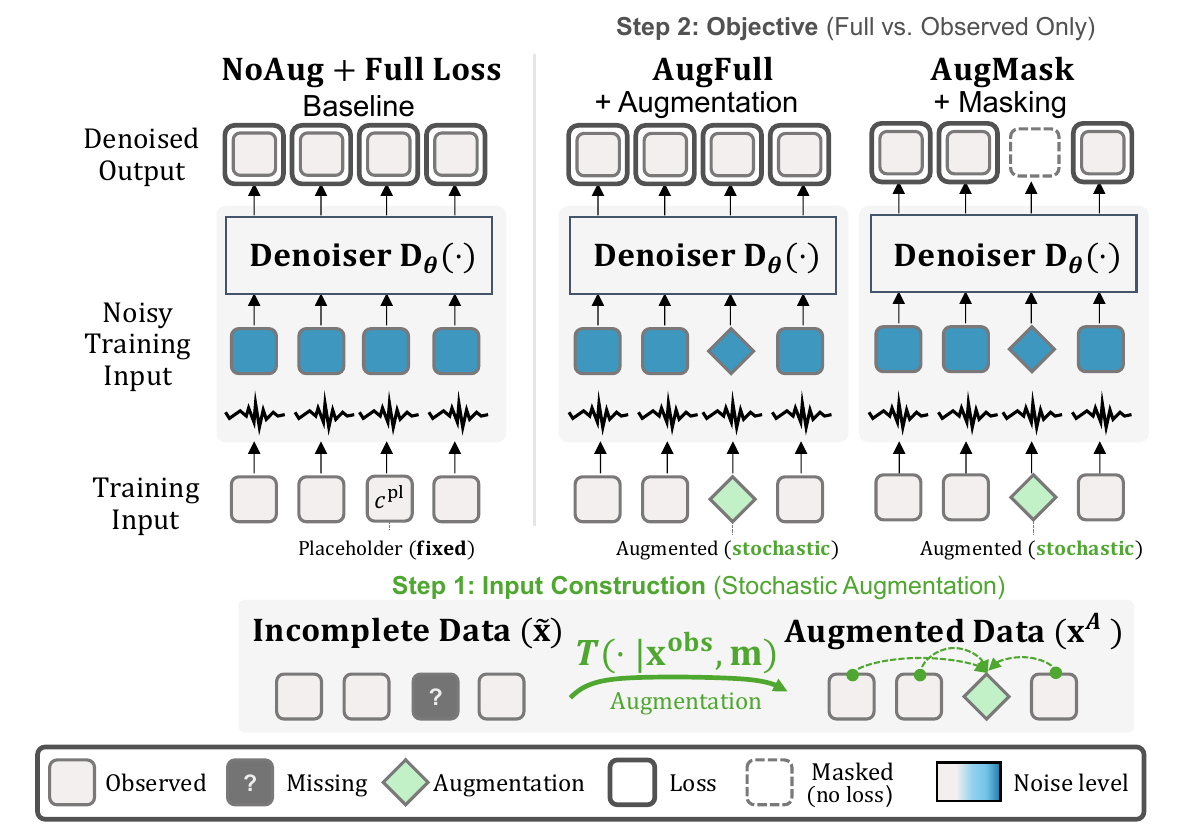}
\caption{\textbf{AugMask: Two-step training strategy.} Stochastic conditional augmentation for inputs and loss masking for targets.}
\label{fig:schematic}
\end{figure}

\section{Method}

\subsection{Augmented Data}

To provide a fully numeric input without forcing the denoiser to condition on a deterministic placeholder, we replace missing entries with samples from a context-dependent augmentation rule.
For each feature $j$, let $T(\cdot\mid \mathbf{x}^{\mathrm{obs}},\mathbf{m})$ denote the distribution used to fill $X_j$ when $m_j=0$.
We draw $Z_j \sim T(\cdot\mid \mathbf{x}^{\mathrm{obs}},\mathbf{m})$ and define the augmented clean input
\begin{equation}
\label{eq:augdata}
x^{A,0}_{j}
:=
m_{j}\,x_{j}^{\mathrm{obs}} + (1-m_{j})\,Z_j .
\end{equation}
The constant-placeholder in Eq.~\eqref{eq:placeholder} is recovered as the degenerate case
$T=\delta_{c^{\mathrm{pl}}_j}$, for which
$\mathrm{Var}_{T}(Z_j\mid \mathbf{x}^{\mathrm{obs}},\mathbf{m})=0$.
The choice of $T$ determines the information that enters the denoiser through the completed input.
We distinguish three families.
A \textbf{constant placeholder} is context-free and deterministic.
A \textbf{deterministic conditional augmentation} is context-dependent but still uses a point estimate, e.g.,
$T=\delta_{\hat\mu_j(\mathbf{x}^{\mathrm{obs}},\mathbf{m})}$.
A \textbf{stochastic conditional augmentation} is both context-dependent and non-degenerate, sampling completions with nonzero conditional variance.
We use the term \emph{augmentation} deliberately. Sampled completions provide possible conditioning contexts for the denoiser, not ground-truth targets to reconstruct.

\subsection{AugMask: Augmented Conditioning with Observed-Only Supervision}
\label{sec:augmask_objective}

AugMask trains a feature-space denoiser on incomplete tabular data through two steps, illustrated in Fig.~\ref{fig:schematic}.
First, \textbf{augmentation} constructs a fully numeric input $\mathbf{x}^{A,0}$ by sampling missing entries from $T(\cdot\mid\mathbf{x}^{\mathrm{obs}},\mathbf{m})$.
Second, \textbf{observed-only supervision} applies the denoising loss only to the coordinates that were originally observed.
Thus, augmented missing entries enter the model through noisy input $\mathbf{x}^{A,t}$, but are not treated as supervised targets.

Given $\mathbf{x}^{A,0}$ and its noised version
$\mathbf{x}^{A,t}\sim q(\cdot\mid\mathbf{x}^{A,0},t)$, AugMask optimizes
\begin{equation}
\label{eq:augmask}
\mathcal{L}_\theta^{\mathrm{AugMask}}
:= \mathbb{E}_{\mathbf{Z},\,t,\,\mathbf{x}^{A,t}} \left[
\left\| \mathbf{m}\odot \bigl( D_\theta(\mathbf{x}^{A,t},t)-\mathbf{x}^{A,0}
\bigr)
\right\|_2^2
\right].
\end{equation}
The expectation includes incomplete training sample, augmentation draw, diffusion time, and forward noise.
The mask removes supervision on the augmented values, while the augmented values provide the conditioning context.

For comparison, \textbf{AugFull} uses the same augmented inputs but applies the denoising loss to all coordinates:
\begin{multline}
\label{eq:augfull}
\mathcal{L}_\theta^{\mathrm{AugFull}}
:=
\mathbb{E}
\left[
\left\|
D_\theta(\mathbf{x}^{A,t},t)-\mathbf{x}^{A,0}
\right\|_2^2
\right] \\
=
\mathcal{L}_\theta^{\mathrm{AugMask}}
+
\mathbb{E}
\left[
\left\|
(\mathbf{1}-\mathbf{m})\odot
\bigl(
D_\theta(\mathbf{x}^{A,t},t)-\mathbf{x}^{A,0}
\bigr)
\right\|_2^2
\right].
\end{multline}
Thus, AugFull treats completed missing entries as targets, whereas AugMask uses them only as context.
This comparison isolates whether the model should fit the completed values or only use them as context.

\begin{remark}[AugMask vs. AugFull]
AugFull can reduce estimation variance when the augmentation rule $T$ is accurate and informative, but it can introduce bias when $T$ is misspecified because the model is supervised on inaccurately completed values.
AugMask avoids this pseudo-target bias by not supervising originally missing coordinates.
Appendix~\ref{app:extra} gives a Gaussian toy example in which the additional AugFull term transitions from variance reduction to bias domination as augmentation mismatch increases.
\end{remark}

\begin{remark}[Scope of applicability]
AugFull is broadly applicable to missing-unaware generative models because it only requires complete numeric inputs. 
AugMask, in contrast, relies on an observed-coordinate reconstruction objective and is therefore most natural for denoising or score-based models operating directly in feature space.
Extending observed-only supervision to latent diffusion, VAEs, or adversarial objectives is nontrivial because missing coordinates interact with inference, decoding, or discrimination; we leave these extensions to future work.
\end{remark}

\subsection{Theoretical Analysis: Conditional Variance as Sensitivity Regularization}
\label{sec:local_analysis}

We next explain why stochasticity in $T$ is useful beyond point-wise imputation accuracy.
The analysis studies how the denoiser's reconstruction of observed coordinates changes as an augmented missing coordinate varies.

\paragraph{One-missing-coordinate setup.}
To build intuition, suppose only coordinate $j$ may be missing and the remaining block $-j$ is observed.
Fix a clean observed block $\mathbf{x}^0_{-j}$.
For a completion value $z$, define
$\mathbf{x}^{A,0}(z):=(\mathbf{x}^0_{-j},z)$ and
$\mathbf{x}^{A,t}(z)=\mathbf{x}^{A,0}(z)+\sigma(t)\boldsymbol{\varepsilon}$ with
$\boldsymbol{\varepsilon}\sim\mathcal{N}(\mathbf{0},I)$.
Let
\begin{equation}
\label{eq:g_theta}
g_\theta(z)
:=
\mathbb{E}_{t,\boldsymbol{\varepsilon}}
\left[
\left\|
D_\theta(\mathbf{x}^{A,t}(z),t)_{-j}
-
\mathbf{x}^0_{-j}
\right\|_2^2
\right]
\end{equation}
be the observed-coordinate reconstruction loss induced by completion $z$.
Any augmentation rule $T$ induces the marginalized per-sample objective
\begin{equation}
\label{eq:l_T}
l^{T}(\theta;\mathbf{x}^0_{-j})
:=
\mathbb{E}_{Z_j\sim T(\cdot\mid\mathbf{x}^0_{-j})}
\big[g_\theta(Z_j)\big].
\end{equation}
As an oracle benchmark within this family, define the\textbf{ Rao--Blackwellized (RB) ideal} by using the true conditional distribution:
\begin{equation}
\label{eq:l_RB}
l^{\mathrm{RB}}(\theta;\mathbf{x}^0_{-j})
:=
\mathbb{E}_{Z_j\sim p_{\mathrm{data}}(\cdot\mid\mathbf{x}^0_{-j})}
\big[g_\theta(Z_j)\big].
\end{equation}

\begin{proposition}[Local RB approximation]
\label{prop:rb_local}
Let
$\mu_j:=\mathbb{E}[Z_j\mid\mathbf{x}^0_{-j}]$ and
$\sigma_j^2:=\mathrm{Var}(Z_j\mid\mathbf{x}^0_{-j})$
under the oracle conditional distribution.
Under the local smoothness and moment conditions in Appendix~\ref{app:proofs},
\begin{equation}
\label{eq:rb_taylor}
l^{\mathrm{RB}}(\theta;\mathbf{x}^0_{-j})
=
g_\theta(\mu_j)
+
\frac{1}{2}\sigma_j^2 g_\theta''(\mu_j)
+
R_j ,
\end{equation}
where $R_j$ is a higher-order local remainder.
Moreover, writing the mean reconstruction function as $\bar\phi_\theta(z):= \mathbb{E}_{t,\boldsymbol{\varepsilon}}
\left[
D_\theta(\mathbf{x}^{A,t}(z),t)_{-j}
\right]$, and the error as $
e_\theta(z):=\bar\phi_\theta(z)-\mathbf{x}^0_{-j},$ Eq.~\eqref{eq:l_RB} admits the approximation:
\begin{equation}
\label{eq:sensitivity_penalty}
l^{\mathrm{RB}}(\theta;\mathbf{x}^0_{-j})
\approx
\|e_\theta(\mu_j)\|_2^2
+\sigma_j^2\|\bar\phi_\theta'(\mu_j)\|_2^2,
\end{equation}
where derivatives are with respect to the scalar value $z$.

\end{proposition}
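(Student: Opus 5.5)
The plan is to Taylor-expand $g_\theta$ to second order around the conditional mean $\mu_j$ and take expectations under the oracle conditional. We assume $g_\theta$ is $C^3$ in a neighbourhood of $\mu_j$, or $C^2$ with a Lipschitz second derivative, and that $Z_j$ has a finite third moment under $p_{\mathrm{data}}(\cdot\mid\mathbf{x}^0_{-j})$. Then
\[
g_\theta(z)=g_\theta(\mu_j)+g_\theta'(\mu_j)(z-\mu_j)+\tfrac12 g_\theta''(\mu_j)(z-\mu_j)^2+r(z),
\]
with $|r(z)|\le \tfrac16 L|z-\mu_j|^3$.

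Taking $\mathbb{E}_{Z_j\sim p_{\mathrm{data}}(\cdot\mid\mathbf{x}^0_{-j})}$, the linear term vanishes because $\mathbb{E}[Z_j-\mu_j]=0$. The quadratic term gives $\tfrac12\sigma_j^2 g_\theta''(\mu_j)$. We set $R_j:=\mathbb{E}[r(Z_j)]$, which is bounded by $\tfrac{L}{6}\mathbb{E}|Z_j-\mu_j|^3$. This yields Eq.~\eqref{eq:rb_taylor}.

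For the sensitivity form, we first decompose $g_\theta$ into bias and variance parts. Write $D_\theta(\mathbf{x}^{A,t}(z),t)_{-j}=\bar\phi_\theta(z)+\xi_\theta(z;t,\boldsymbol{\varepsilon})$, where $\xi_\theta$ has mean zero over $(t,\boldsymbol{\varepsilon})$. Then
\[
g_\theta(z)=\|e_\theta(z)\|_2^2+v_\theta(z),\qquad v_\theta(z):=\mathbb{E}_{t,\boldsymbol{\varepsilon}}\|\xi_\theta(z;t,\boldsymbol{\varepsilon})\|_2^2 .
\]
Differentiating twice, using differentiation under $\mathbb{E}_{t,\boldsymbol{\varepsilon}}$ justified by dominated derivatives, gives
\[
g_\theta''(\mu_j)=2\|\bar\phi_\theta'(\mu_j)\|_2^2+2\,e_\theta(\mu_j)^\top\bar\phi_\theta''(\mu_j)+v_\theta''(\mu_j),
\]
since $e_\theta'=\bar\phi_\theta'$.

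Substituting into Eq.~\eqref{eq:rb_taylor}, the factor $\tfrac12\cdot 2$ produces exactly $\sigma_j^2\|\bar\phi_\theta'(\mu_j)\|_2^2$. We fold $v_\theta(\mu_j)$ together with $\|e_\theta(\mu_j)\|_2^2$, since the variance $v_\theta$ is the $z$-independent-order noise floor of the denoiser. The approximation ``$\approx$'' then amounts to dropping three quantities:
\begin{enumerate}[label=(\roman*)]
\item the curvature--residual cross term $\sigma_j^2 e_\theta(\mu_j)^\top\bar\phi_\theta''(\mu_j)$, which is small near a good fit where $e_\theta(\mu_j)\approx 0$, or when $\bar\phi_\theta$ is locally nearly linear;
\item $\tfrac12\sigma_j^2 v_\theta''(\mu_j)$, the curvature of the noise-induced variance;
\item $R_j$.
\end{enumerate}
I will state these explicitly as the neglected terms in the appendix conditions. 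The reading of $g_\theta(\mu_j)$ as $\|e_\theta(\mu_j)\|_2^2$ should also be made explicit there, either by absorbing $v_\theta$ or by noting that it does not affect the sensitivity penalty.

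The main obstacle is making the ``$\approx$'' honest. Eq.~\eqref{eq:rb_taylor} is an exact identity with an explicit remainder, but Eq.~\eqref{eq:sensitivity_penalty} needs assumptions under which the dropped terms are of lower order. My first attempt would be a small-$\sigma_j$ regime, with $\mathbb{E}|Z_j-\mu_j|^3=O(\sigma_j^3)$ and bounded third derivatives. In that regime the remainder is $o(\sigma_j^2)$, but the cross term and the $v_\theta''$ term are both $O(\sigma_j^2)$, so they must be controlled by an additional local-linearity condition ($\bar\phi_\theta''$ and $v_\theta''$ small, or $e_\theta(\mu_j)$ small). I would state that condition as part of the appendix assumptions.
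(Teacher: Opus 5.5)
Your proposal is correct and follows the paper's proof essentially step for step. The paper also Taylor-expands $g_\theta$ to third order about $\mu_j$, lets the linear term vanish under the oracle conditional, and bounds the remainder by a constant times $\mathbb{E}|Z_j-\mu_j|^3$. It then uses the same bias--variance split $g_\theta=\|e_\theta\|_2^2+s_\theta$ over the diffusion randomness (your $v_\theta$ is its $s_\theta$) and the identity $g_\theta''=2\|\bar\phi_\theta'\|_2^2+2e_\theta^\top\bar\phi_\theta''+s_\theta''$. Finally it drops the cross term, the variance terms and $R_j$, exactly as you do (see Eq.~\eqref{eq:rb_taylor_decomp2}).

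You improve on the paper in one place. The paper simply calls the penalty the dominant correction, whereas you note that the dropped cross and $s_\theta''$ terms are of the same order $\sigma_j^2$ as the retained penalty, so an extra local-linearity condition is needed to justify the ``$\approx$''.

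One caveat applies to both you and the paper. Your cubic bound on $r(z)$ requires the bound on $g_\theta'''$ (or a Lipschitz bound on $g_\theta''$) over the whole support of $Z_j$, not just a neighbourhood of $\mu_j$. The paper's Step~3 has the same gap, since it silently assumes $|\Delta|\le r$.
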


\paragraph{Interpretation.}
The second term in Eq.~\eqref{eq:sensitivity_penalty} is the leading nonnegative component of the local RB correction.
It measures how strongly the denoiser's mean reconstruction of the observed block changes as the completed value $z$ varies.
When the conditional variance $\sigma_j^2$ is large, the objective penalizes sensitivity to that uncertain completion, discouraging the denoiser from relying too strongly on it.
When $\sigma_j^2$ is small, the completion is more reliable and can be used more directly as context.

This view clarifies the role of different completion rules.
A constant placeholder has zero augmentation variance and may also have mismatched mean. It is therefore both biased and unregularized.
A deterministic conditional mean can reduce mean mismatch, but it remains degenerate and therefore omits the variance-weighted sensitivity term.
Stochastic conditional augmentation activates this term, so the denoiser's reliance on completed context adapts to conditional uncertainty.
Thus, AugMask does not require the augmentation rule to be a perfect point imputer; its role is to provide plausible context together with uncertainty.
The approximation is local: higher-order terms may matter for heavily skewed or heavy-tailed conditionals, as detailed in Appendix~\ref{app:proofs}.

\paragraph{Extension to multiple missing coordinates.}
Let $S=\{j:m_j=0\}$ and $O=\{j:m_j=1\}$ denote the missing and observed coordinate blocks for a sample.
For a missing block $S$, define the observed-coordinate reconstruction loss induced by a block completion $\mathbf{z}_S$ as
\begin{equation}
    g_\theta(\mathbf{z}_S)
:=
\mathbb{E}_{t,\boldsymbol{\varepsilon}}
\left[
\left\|
D_\theta(\mathbf{x}^{A,t}(\mathbf{z}_S),t)_O
-
\mathbf{x}^0_O
\right\|_2^2
\right],
\end{equation}
where $\mathbf{x}^{A,0}(\mathbf{z}_S)_O=\mathbf{x}^0_O$ and
$\mathbf{x}^{A,0}(\mathbf{z}_S)_S=\mathbf{z}_S$.
A general augmentation rule induces
\begin{equation}
l^T(\theta;\mathbf{x}^0_O,\mathbf{m})
=
\mathbb{E}_{\mathbf{Z}_S\sim T(\cdot\mid \mathbf{x}^0_O,\mathbf{m})}
\left[
g_\theta(\mathbf{Z}_S)
\right].    
\end{equation}

The joint RB ideal corresponds to
$T^\star(\cdot\mid \mathbf{x}^0_O,\mathbf{m})
=
p_{\mathrm{data}}(\cdot\mid \mathbf{x}^0_O,\mathbf{m})$.
A local multivariate Taylor expansion around the conditional mean
$\boldsymbol{\mu}=\mathbb{E}[\mathbf{Z}_S\mid \mathbf{x}^0_O,\mathbf{m}]$
with covariance
$\Sigma=\mathrm{Cov}(\mathbf{Z}_S\mid \mathbf{x}^0_O,\mathbf{m})$
gives
\begin{equation}
\label{eq:multi_rb}
l^{\mathrm{RB}}(\theta;\mathbf{x}^0_O,\mathbf{m})
\approx
g_\theta(\boldsymbol{\mu})
+
\frac{1}{2}
\mathrm{tr}
\left(
H_g(\boldsymbol{\mu})\Sigma
\right).
\end{equation}

Under a Gauss--Newton sensitivity approximation,
$H_g(\boldsymbol{\mu})\approx 2\Gamma$ with
$\Gamma=J_{\bar\phi}(\boldsymbol{\mu})^\top J_{\bar\phi}(\boldsymbol{\mu})$,
where $J_{\bar\phi}$ is the Jacobian of the mean reconstruction function with respect to the missing block.
Then the leading term becomes
\begin{equation}
\mathrm{tr}(\Gamma\Sigma)
=
\underbrace{
\sum_{j\in S}\Gamma_{jj}\Sigma_{jj}
}_{P_{\mathrm{diag}}}
+
\underbrace{
2\sum_{\substack{j<k\\j,k\in S}}
\Gamma_{jk}\Sigma_{jk}
}_{\Delta}.    
\label{multiple_coordinates}
\end{equation}

The diagonal term $P_{\mathrm{diag}}$ is the additive variance-weighted sensitivity penalty captured by coordinate-wise stochastic augmentation, while $\Delta$ is the cross-coordinate interaction correction involving both conditional covariance and denoiser coupling.

An augmenter can model the full covariance $\Sigma$ and capture both $P_{\mathrm{diag}}$ and $\Delta$.
Our practical augmenter uses the per-coordinate sampler in Eq.~\eqref{eq:factorized_T}, which captures the diagonal uncertainty terms but ignores cross-coordinate interactions.
A useful diagnostic for this gap is
$\frac{|\Delta|}{P_{\mathrm{diag}}}
\le
\max_{j\in S}
\sum_{k\neq j}
I_k A_{jk}|\rho_{jk\mid O}|,$
where $I_k:=\mathbf{1}\{k\in S\}$ is the indicator for the missing parts, 
$A_{jk}:=|\Gamma_{jk}|/\sqrt{\Gamma_{jj}\Gamma_{kk}}$ measures denoiser coupling, and
$\rho_{jk\mid O}:=\Sigma_{jk}/\sqrt{\Sigma_{jj}\Sigma_{kk}}$ is the conditional correlation.
Averaging $I_k$ introduces the co-missingness rate $q_{jk}:=\mathrm{Pr}(I_k=1|I_j=1, x_O)$, yielding a co-missingness-weighted quantity, $\sum_{k\neq j} q_{jk}A_{jk}|\rho_{jk\mid O}|$, that bounds $\frac{|\Delta|}{P_{\mathrm{diag}}}$ on average.
Thus, the factorized approximation is most accurate when co-missing features are weakly correlated or coupled by the denoiser, and may be less accurate under block missingness, strong redundancy, or MNAR mechanisms.

\begin{algorithm}[!t]
\caption{Factorized Conditional Stochastic Augmentation via Per-Feature Models}
\label{alg:stochastic}
\begin{algorithmic}[1]
\STATE \textbf{Input:} Incomplete dataset $\mathcal{D}=\{(\tilde{\mathbf{x}}_i,\mathbf{m}_i)\}_{i=1}^N$; smoothing $\alpha$; stability $\epsilon$; category cardinalities $\{K_j\}$.
\STATE \textbf{Output:} Augmented dataset $\mathcal{D}^A$ (constructed once and kept fixed during training).
\STATE \textbf{Models:} $\hat f^{\mathrm{NUM}(\mu)}_j$ is a conditional mean regressor for $X_j$,
$\hat f^{\mathrm{NUM}(\sigma)}_j$ is a log-variance regressor, and
$\hat f^{\mathrm{CAT}}_j$ is a conditional classifier (probability predictor).

\STATE \textbf{Initialize} $\mathbf{x}^A_i \leftarrow \tilde{\mathbf{x}}_i$ for all $i$.
\FOR{$j=1$ \textbf{to} $d$}
  \STATE $\Omega_{\mathrm{obs}} \leftarrow \{\, i \mid m_{ij}=1 \,\}$.
  \STATE Let $\mathbf{u}_i = (\tilde{\mathbf{x}}_{i,-j},\mathbf{m}_{i,-j})$.
  \STATE $\mathcal{X}_{\mathrm{train}} \leftarrow \{ \mathbf{u}_i \mid i\in\Omega_{\mathrm{obs}} \}$.

  \IF{feature $j$ is continuous}
    \STATE Fit $\hat f^{\mathrm{NUM}(\mu)}_j$ using $\mathcal{X}_{\mathrm{train}}$ and targets $\{\tilde x_{ij}\}_{i\in\Omega_{\mathrm{obs}}}$.
    \STATE $r_i \leftarrow \tilde x_{ij}-\hat f^{\mathrm{NUM}(\mu)}_j(\mathbf{u}_i)$ for $i\in\Omega_{\mathrm{obs}}$.
    \STATE $v_i \leftarrow \log(r_i^2+\epsilon)$ for $i\in\Omega_{\mathrm{obs}}$.
    \STATE Fit $\hat f^{\mathrm{NUM}(\sigma)}_j$ using $\mathcal{X}_{\mathrm{train}}$ and targets $\{v_i\}_{i\in\Omega_{\mathrm{obs}}}$.

    \FOR{$i=1$ \textbf{to} $N$}
      \IF{$m_{ij}=0$}
        \STATE $\hat\mu_{ij} \leftarrow \hat f^{\mathrm{NUM}(\mu)}_j(\mathbf{u}_i)$.
        \STATE $\hat\sigma_{ij} \leftarrow \exp\!\big(\tfrac{1}{2}\hat f^{\mathrm{NUM}(\sigma)}_j(\mathbf{u}_i)\big)$.
        \STATE $x^A_{ij} \sim \mathcal{N}(\hat\mu_{ij},\hat\sigma_{ij}^2)$.
      \ENDIF
    \ENDFOR

  \ELSIF{feature $j$ is categorical}
    \STATE Fit $\hat f^{\mathrm{CAT}}_j$ using $\mathcal{X}_{\mathrm{train}}$ and labels $\{\tilde x_{ij}\}_{i\in\Omega_{\mathrm{obs}}}$.
    \FOR{$i=1$ \textbf{to} $N$}
      \IF{$m_{ij}=0$}
        \STATE $\hat{\mathbf{p}}_{ij} \leftarrow \hat f^{\mathrm{CAT}}_j(\mathbf{u}_i)$.
        \STATE $\hat{\mathbf{p}}_{ij} \leftarrow (1-\alpha)\hat{\mathbf{p}}_{ij} + \alpha \cdot \tfrac{1}{K_j}\mathbf{1}$.
        \STATE $x^A_{ij} \sim \mathrm{Categorical}(\hat{\mathbf{p}}_{ij})$.
      \ENDIF
    \ENDFOR
  \ENDIF
\ENDFOR
\STATE $\mathcal{D}^A \leftarrow \{(\mathbf{x}^A_i,\mathbf{m}_i)\}_{i=1}^N$.
\STATE \textbf{return} $\mathcal{D}^A$.
\end{algorithmic}
\end{algorithm}


\subsection{Practical Implementation: Factorized Conditional Stochastic Augmentation}
\label{sec:practical_aug}

AugMask is defined for any missing-block augmentation rule
$T(\cdot\mid \mathbf{x}^{\mathrm{obs}},\mathbf{m})$.
While $T$ may, in principle, be a joint conditional sampler, we use a lightweight factorized variant as default:
\begin{equation}
\label{eq:factorized_T}
T_{\mathrm{F}}
(\mathbf{z}_S\mid \mathbf{x}^{\mathrm{obs}},\mathbf{m})
=
\prod_{j\in S}
T_j
\left(
z_j
\mid
\tilde{\mathbf{x}}_{-j},\mathbf{m}_{-j}
\right).
\end{equation}
Each factor conditions on all available covariates and their missingness pattern, but samples missing coordinates independently given this context.
Thus, the factorization is a practical implementation choice of the AugMask objective.

Motivated by Proposition~\ref{prop:rb_local}, the practical goal is to estimate both conditional location and uncertainty rather than to optimize a standalone imputer.
Algorithm~\ref{alg:stochastic} constructs a single augmented dataset by fitting per-feature models on observed cells without any hyperparameter tuning and sampling missing cells from the corresponding factors.

\begin{table}[t]
\centering
\caption{\textbf{Augmentation rules and computational cost.}
The highlighted column is our default factorized stochastic implementation of $T$.
The bottom row reports the mean wall-clock time for constructing the augmented dataset on Adult.}
\label{tab:augmentation_taxonomy}
\resizebox{\columnwidth}{!}{%
\begin{tabular}{l cccc ccc >{\columncolor{gray!15}}c}
\toprule
& \multicolumn{4}{c}{\textbf{Univariate Baselines}} & \multicolumn{4}{c}{\textbf{Multivariate Baselines}} \\
\cmidrule(lr){2-5} \cmidrule(lr){6-9}
\textbf{Property} & Zero & Mean & Noise & E-CDF & MICE & Hyper. & LGBM-D & \textbf{LGBM-S} \\
\midrule
Stochastic?    & \xmark & \xmark & \cmark & \cmark & \xmark & \xmark & \xmark & \cmark \\
Multivariate?  & \xmark & \xmark & \xmark & \xmark & \cmark & \cmark & \cmark & \cmark \\
Model-based?   & \xmark & \xmark & \xmark & \xmark & \cmark & \cmark & \cmark & \cmark \\
\midrule
Efficiency     & \textbullet & \textbullet & \textbullet & \textbullet & $\circ$ & $\circ$ & \textbullet & \textbullet \\
Time (s)       & 0.041 & 0.084 & 0.107 & 0.176 & 26.22 & 109.4 & 6.04 & 9.87 \\
\bottomrule
\multicolumn{9}{l}{\small \textbullet ~High efficiency \quad $\circ$ ~Iterative/Lower efficiency}
\end{tabular}%
}
\end{table}

\begin{figure}[!t]
\centering
\includegraphics[width=1.0\linewidth]{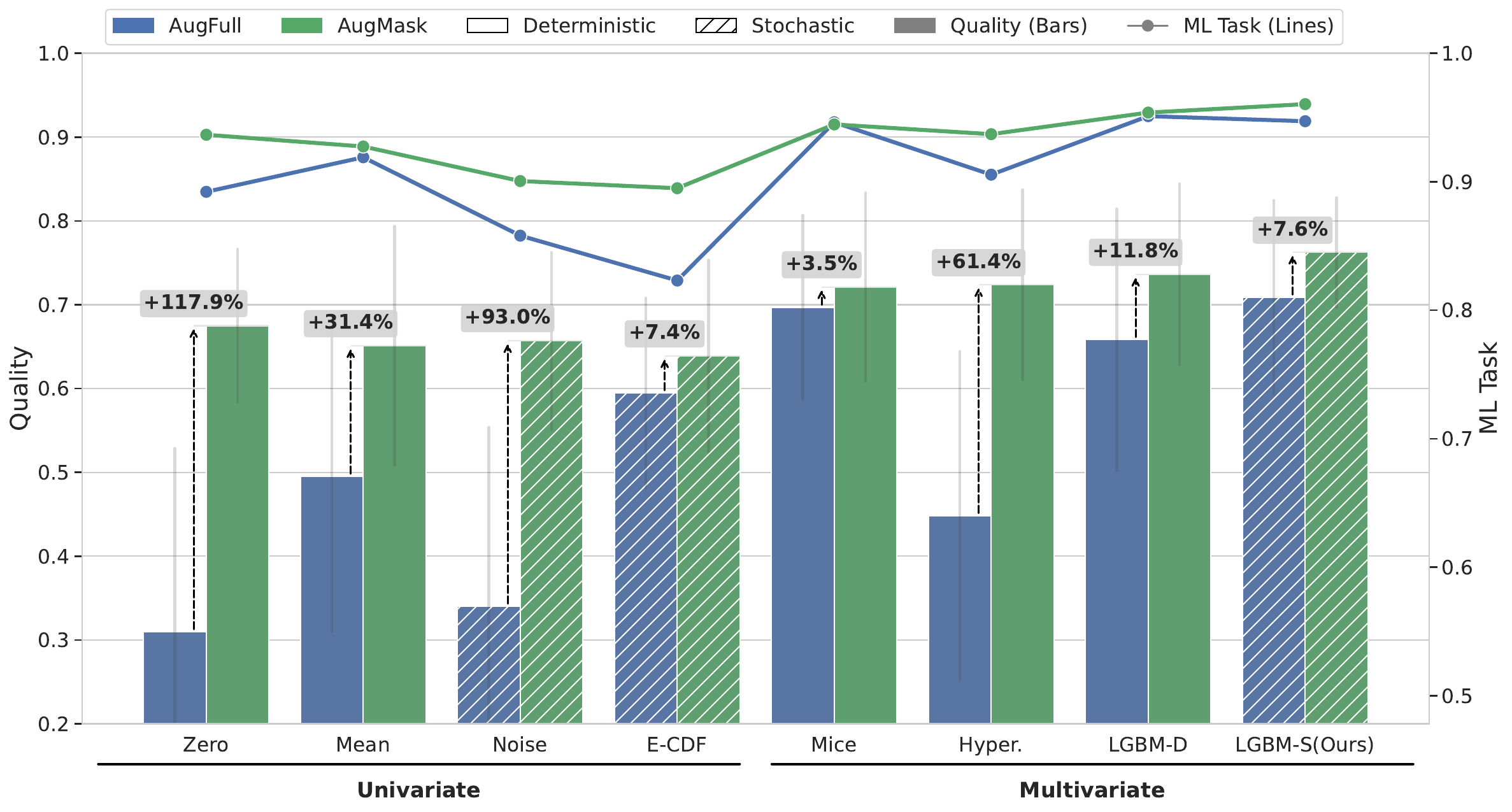}
\caption{\textbf{Comparison of augmentation rules and masking.} 
    Sample Quality (bars) and ML Task (lines) metrics across univariate and multivariate methods. 
    \textbf{AugMask} (green) shows consistent gains over \textbf{AugFull} (blue). 
    Deterministic modules are shown as solid; stochastic modules are shown with hatched patterns.}
    \label{fig:masking_analysis}
\end{figure}

\paragraph{Feature-wise factors.}
For continuous features, $T_j$ is modeled as a heteroskedastic Gaussian:
we fit a conditional mean regressor
$\hat\mu_{ij}\approx
\mathbb{E}[X_{ij}\mid \tilde{\mathbf{x}}_{i,-j},\mathbf{m}_{i,-j}]$
and a second regressor on log-squared residuals to estimate
$\hat\sigma^2_{ij}$~\citep{harvey1976estimating}, then sample
$Z_{ij}\sim\mathcal{N}(\hat\mu_{ij},\hat\sigma^2_{ij})$.
For categorical features, $T_j$ is a conditional classifier estimating
$\hat{\mathbf{p}}_{ij}$, from which we sample
$Z_{ij}\sim\mathrm{Categorical}(\hat{\mathbf{p}}_{ij})$ after label smoothing
$\hat{\mathbf{p}}_{ij}\leftarrow
(1-\alpha)\hat{\mathbf{p}}_{ij}+\alpha K_j^{-1}\mathbf{1}$
with $\alpha=0.05$.

\paragraph{Augmentation rule selection and scope.}
We instantiate Eq.~\eqref{eq:factorized_T} with LightGBM~\citep{ke2017lightgbm} using default hyperparameters for all datasets and features.
Gradient-boosted trees are efficient for mixed-type tabular data and can directly incorporate the mask $\mathbf{m}_{i,-j}$ as context~\citep{grinsztajn2022tree}.
Unless stated otherwise, we use \textbf{LGBM-S}; \textbf{LGBM-D} denotes the deterministic variant that uses LightGBM point predictions without sampling.
Table~\ref{tab:augmentation_taxonomy} summarizes alternatives. See Appendix~\ref{app:aug-dist} for more details.

Figure~\ref{fig:masking_analysis} supports two design choices. 
First, AugMask improves over AugFull when both use the same augmented inputs, suggesting that completions should serve as context rather than targets.
Second, stochastic conditional augmentation provides better context than its deterministic variant.
The factorized LightGBM stage is run once offline and can be parallelized across features, making it practical for the moderate-dimensional benchmarks studied here.
For ultra-high-dimensional data, strong block missingness, or dependence among co-missing features, AugMask can instead be paired with a joint conditional sampler for $T$.

\paragraph{Constructing the augmented dataset.}
For each training sample, we draw the missing block once,
$\mathbf{Z}_{i,S_i}\sim T(\cdot\mid \mathbf{x}^{\mathrm{obs}}_i,\mathbf{m}_i)$,
construct $\mathbf{x}^A_i$, and keep
$\mathcal{D}^A=\{(\mathbf{x}^A_i,\mathbf{m}_i)\}_{i=1}^N$
fixed during diffusion training.
This gives stochastic but stationary conditioning: uncertainty enters through the initial draw, while inputs do not drift across epochs.
Appendix~\ref{app:aug-dist} discusses cycling cached augmentations.


\begin{figure}[!t]
\centering
\includegraphics[width=1.0\linewidth]{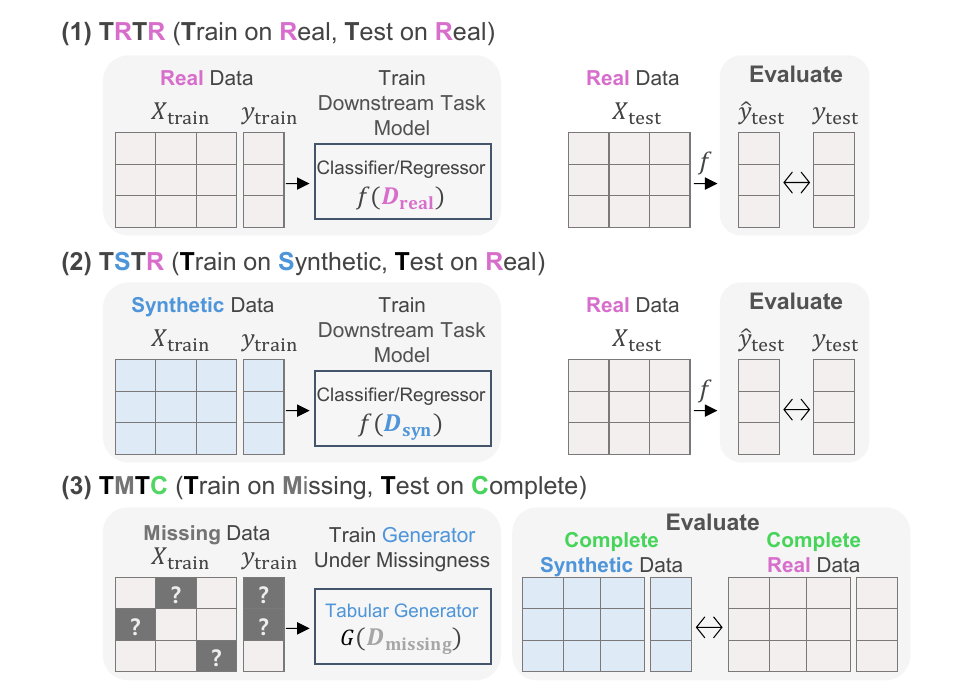}
\caption{\textbf{Evaluation protocols for missing-aware generators}. (1) \textbf{TRTR} (\textbf{T}rain on \textbf{R}eal, \textbf{T}est on \textbf{R}eal) trains and evaluates downstream models on complete real data, providing an upper bound.
(2) \textbf{TSTR} (\textbf{T}rain on \textbf{S}ynthetic, \textbf{T}est on \textbf{R}eal) trains downstream task models on \textit{synthetic data} and evaluates on real test data~\citep{esteban2017real}.
(3) \textbf{TMTC} (\textbf{T}rain on \textbf{M}issing, \textbf{T}est on \textbf{C}omplete) trains generator on incomplete data and its synthetic samples are evaluated against the complete real data. }
\label{fig:TMTC}
\end{figure}

\section{Experiments} \label{exp}
We assess whether our training strategy enables \textit{missing-unaware} models to generate faithful complete synthetic tabular data when they are trained from incomplete data.

\subsection{Experimental Setup}

\begin{table}[t]
\centering
\caption{Baseline models categorized by their missing-awareness.}
\label{tab:baseline_taxonomy}
\resizebox{\columnwidth}{!}{%
\begin{tabular}{lll}
\toprule
\textbf{Category} & \textbf{Model} & \textbf{Reference} \\
\midrule
\multirow{4}{*}{\textit{Missing-aware}} 
& MIWAE & \cite{mattei2019miwae} \\
& MissDiff & \cite{ouyang2023missdiff} \\
& ForestDiff & \cite{jolicoeur2024generating} \\
& DiffPuter & \cite{zhang2025diffputer} \\
\midrule
\multirow{7}{*}{\textit{Missing-unaware}} 
& TVAE / CTGAN & \cite{xu2019modeling} \\
& ARF & \cite{watson2023adversarial} \\
& TabSyn & \cite{zhang2023mixed} \\
& TabDDPM & \cite{kotelnikov2023tabddpm} \\
& TabDiff & \cite{shi2025tabdiff} \\
& CDTD & \cite{mueller2025continuous} \\
\bottomrule
\end{tabular}
}
\end{table}

\paragraph{Baselines.}
We assess our training strategy on \textit{missing-unaware} models against \textit{missing-aware} baselines (Table~\ref{tab:baseline_taxonomy}).

\paragraph{Training strategies.}
We consider three strategies (Fig.~\ref{fig:schematic}):
(1) Baseline (\textbf{NoAug}) uses feature-wise mean/mode filling with no loss masking, serving as control. (2) \textbf{AugFull} applies stochastic augmentation (Alg.~\ref{alg:stochastic}) without loss-masking (applied to all missing-unaware baselines), measuring the gain from augmentation. (3) 
\textbf{AugMask} further applies loss-masking to observed parts (applied to diffusion baselines only), measuring the gain from loss-masking. 

\textbf{Datasets and Missingness.}
We follow the prior dataset curation of~\citet{zhang2023mixed}; see Appendix~\ref{app:datasets}.
We simulate MCAR/MAR missingness at ratios $\{0.1,0.3,0.5,0.7,0.9\}$~(\ref{app:missing}).
For preprocessing, we fit the transformer/scaler on observed entries~(\ref{app:preprocessing}) before constructing the augmented dataset (sampled once for stability;~\ref{app:aug-dist}).
We split train/val/test as 60/20/20.

\paragraph{Implementation details.}
We provide details in the Appendix: baseline implementations~(\ref{app:baselines}), loss masking for diffusion models~(\ref{app:loss-masking}), and hardware specifications~(\ref{app:hardware}).



\subsection{Evaluation Protocols Under Missingness}
\label{sec:protocols}
We evaluate \textbf{generative quality}, which measures how faithful the synthetic data are, and \textbf{machine learning utility}, which measures how useful they are for downstream tasks. 

\textbf{Evaluation metrics.}
We evaluate using the protocols described in Figure~\ref{fig:TMTC}, focusing on two main measures.
\begin{enumerate}
    \item \textbf{Generative Quality:} It measures how accurately the generator recovers the data distribution from incomplete observations.
Following~\citet{zhang2023mixed}, we report \textit{sample-level} fidelity criterion with (i) \textbf{$\alpha$-Precision} and (ii) \textbf{$\beta$-Recall}~\cite{alaa2022faithful}, and (iii) \textit{feature-level} pairwise correlations metric (\textbf{Trend}) and (iv) \textit{column-level} density proximity (\textbf{Shape}). 
    \item \textbf{Machine Learning Utility:} We evaluate downstream task performance using AUROC (classification) or RMSE (regression) under \textbf{TSTR} protocol and normalize the values using the \textbf{TRTR} performance.
\end{enumerate}

\begin{table*}[t]
\centering
\small
\setlength{\tabcolsep}{2.5pt}
\setlength{\extrarowheight}{2pt}

\caption{
\textbf{MCAR Results (90 Scenarios): Comparative Rank Analysis.} 
Cells display Average Rank ($\downarrow$) $\pm$ {\small Standard Deviation} across 90 scenarios: 6 datasets $\times$ 5 missing ratios [0.1--0.9] $\times$ 3 seeds. 
Ranks are computed per experimental block; \textbf{Summary Rank} is the mean rank across all metrics within each block. 
$\Delta$ denotes absolute rank gain over the \texttt{+NoAug} baseline. 
$^{**}$ and $^{*}$ indicate statistical \textbf{Gain} and \textbf{Match} via Wilcoxon test ($p < 0.05$) against the \textit{best-performing missing-aware baseline} per block among: MIWAE, MissDiff, DiffPuter, or ForestDiff. 
Boldface denotes results statistically superior to this entire baseline group.
See Appendix~\ref{app:results} for MAR and detailed results.
}
\label{tab:main_mar_rank_gain}

\resizebox{\linewidth}{!}{
\begin{tabular}{l | *{4}{cc} | cc | > {}w{c}{0.9cm} >{}w{c}{0.9cm}}
\hline \hline

& \multicolumn{8}{c|}{\textbf{Generative Quality}} 
& \multicolumn{2}{c|}{\textbf{Utility}} 
& \multicolumn{2}{c}{} \\ 
\hhline{~|--------|--|~~}
\textbf{Method}

& \multicolumn{2}{c}{\textbf{$\alpha$-Precision}}
& \multicolumn{2}{c}{\textbf{$\beta$-Recall}}
& \multicolumn{2}{c}{\textbf{Trend}}
& \multicolumn{2}{c}{\textbf{Shape}}
& \multicolumn{2}{c|}{\textbf{ML Task}}
& \multicolumn{2}{c}{\multirow{-2}{*}{\textbf{Summary}}}
\\
\hhline{~|--------|--|--}

& Rank  & $\Delta$
& Rank  & $\Delta$
& Rank  & $\Delta$
& Rank  & $\Delta$
& Rank  & $\Delta$
& \textbf{Rank} & $\Delta$
\\
\hline \hline

\multicolumn{13}{l}{\textbf{(A) Missing-aware baselines}} \\
\hline
MIWAE & 6.9 $\pm$ {\small 3.8} & -- & 10.0 $\pm$ {\small 4.6} & -- & 8.9 $\pm$ {\small 4.6} & -- & 8.4 $\pm$ {\small 4.6} & -- & 10.4 $\pm$ {\small 4.9} & -- & 8.9 & -- \\
MissDiff & 16.1 $\pm$ {\small 4.6} & -- & 17.3 $\pm$ {\small 4.4} & -- & 13.3 $\pm$ {\small 4.0} & -- & 15.7 $\pm$ {\small 2.9} & -- & 16.8 $\pm$ {\small 2.7} & -- & 15.8 & -- \\
DiffPuter & 12.7 $\pm$ {\small 4.5} & -- & 13.2 $\pm$ {\small 4.9} & -- & 12.8 $\pm$ {\small 3.9} & -- & 12.8 $\pm$ {\small 3.7} & -- & 17.2 $\pm$ {\small 4.2} & -- & 13.7 & -- \\
ForestDiff & 10.7 $\pm$ {\small 4.1} & -- & 5.4 $\pm$ {\small 4.9} & -- & 8.4 $\pm$ {\small 4.3} & -- & 9.5 $\pm$ {\small 4.0} & -- & 6.4 $\pm$ {\small 3.7} & -- & 8.1 & -- \\

\hline
\multicolumn{13}{l}{\textbf{(B) Missing-unaware baselines}} \\
\hline
TVAE & 18.1 $\pm$ {\small 3.3} & -- & 18.1 $\pm$ {\small 2.6} & -- & 17.7 $\pm$ {\small 1.9} & -- & 18.7 $\pm$ {\small 2.2} & -- & 15.7 $\pm$ {\small 3.0} & -- & 17.7 & -- \\
\rowcolor{gray!10} $\quad \llcorner$ +AugFull & 11.7 $\pm$ {\small 5.2} & \textcolor{black!60!green}{(+6.4)} & 11.8 $\pm$ {\small 4.8} & \textcolor{black!60!green}{(+6.3)} & 10.6 $\pm$ {\small 2.8} & \textcolor{black!60!green}{(+7.2)} & 11.0 $\pm$ {\small 1.8} & \textcolor{black!60!green}{(+7.7)} & 10.9 $\pm$ {\small 3.4} & \textcolor{black!60!green}{(+4.9)} & 11.2 & \textcolor{black!60!green}{(+6.5)} \\
CTGAN & 12.4 $\pm$ {\small 3.9} & -- & 16.7 $\pm$ {\small 3.0} & -- & 15.9 $\pm$ {\small 3.1} & -- & 16.5 $\pm$ {\small 2.6} & -- & 17.6 $\pm$ {\small 2.6} & -- & 15.8 & -- \\
\rowcolor{gray!10} $\quad \llcorner$ +AugFull & 12.1 $\pm$ {\small 4.7} & \textcolor{black!60!green}{(+0.4)} & 13.6 $\pm$ {\small 5.0} & \textcolor{black!60!green}{(+3.1)} & 12.4 $\pm$ {\small 3.2} & \textcolor{black!60!green}{(+3.5)} & 11.1 $\pm$ {\small 3.6} & \textcolor{black!60!green}{(+5.4)} & 15.0 $\pm$ {\small 3.8} & \textcolor{black!60!green}{(+2.7)} & 12.8 & \textcolor{black!60!green}{(+3.0)} \\
ARF & 17.1 $\pm$ {\small 2.7} & -- & 18.4 $\pm$ {\small 1.9} & -- & 20.4 $\pm$ {\small 0.8} & -- & 20.0 $\pm$ {\small 0.9} & -- & 16.7 $\pm$ {\small 3.3} & -- & 18.5 & -- \\
\rowcolor{gray!10} $\quad \llcorner$ +AugFull & 8.4 $\pm$ {\small 5.3} & \textcolor{black!60!green}{(+8.7)} & 12.1 $\pm$ {\small 4.8} & \textcolor{black!60!green}{(+6.3)} & 18.4 $\pm$ {\small 2.9} & \textcolor{black!60!green}{(+2.0)} & 12.1 $\pm$ {\small 5.7} & \textcolor{black!60!green}{(+7.9)} & 11.0 $\pm$ {\small 5.2} & \textcolor{black!60!green}{(+5.7)} & 12.4 & \textcolor{black!60!green}{(+6.1)} \\
TabSyn & 13.6 $\pm$ {\small 3.6} & -- & 13.7 $\pm$ {\small 2.0} & -- & 13.3 $\pm$ {\small 2.8} & -- & 14.9 $\pm$ {\small 3.4} & -- & 13.2 $\pm$ {\small 3.5} & -- & 13.7 & -- \\
\rowcolor{gray!10} $\quad \llcorner$ +AugFull & 6.8 $\pm$ {\small 2.4} & \textcolor{black!60!green}{(+6.8)} & 7.6 $\pm$ {\small 1.9} & \textcolor{black!60!green}{(+6.1)} & \textbf{5.3 $\pm$ {\small 1.9}$^{**}$} & \textcolor{black!60!green}{(+8.0)} & 5.0 $\pm$ {\small 2.6}$^{*}$ & \textcolor{black!60!green}{(+9.8)} & 6.5 $\pm$ {\small 3.3} & \textcolor{black!60!green}{(+6.7)} & 6.3$^{*}$ & \textcolor{black!60!green}{(+7.5)} \\

\hline
TabDDPM & 15.5 $\pm$ {\small 3.2} & -- & 12.0 $\pm$ {\small 2.3} & -- & 15.0 $\pm$ {\small 2.2} & -- & 13.5 $\pm$ {\small 1.9} & -- & 11.4 $\pm$ {\small 2.9} & -- & 13.5 & -- \\
\rowcolor{gray!10} $\quad \llcorner$ +AugFull & 7.1 $\pm$ {\small 5.6}$^{*}$ & \textcolor{black!60!green}{(+8.3)} & 7.5 $\pm$ {\small 3.1} & \textcolor{black!60!green}{(+4.5)} & 8.8 $\pm$ {\small 5.1} & \textcolor{black!60!green}{(+6.2)} & 7.5 $\pm$ {\small 5.0}$^{*}$ & \textcolor{black!60!green}{(+6.0)} & 6.5 $\pm$ {\small 4.0} & \textcolor{black!60!green}{(+4.9)} & 7.5 & \textcolor{black!60!green}{(+6.0)} \\
\rowcolor{blue!7} $\quad \llcorner$ \textbf{+AugMask} & 8.0 $\pm$ {\small 5.4} & \textbf{\textcolor{black!60!green}{(+7.4)}} & 5.3 $\pm$ {\small 4.2}$^{*}$ & \textbf{\textcolor{black!60!green}{(+6.7)}} & 8.6 $\pm$ {\small 5.1} & \textbf{\textcolor{black!60!green}{(+6.4)}} & 7.6 $\pm$ {\small 4.9}$^{*}$ & \textbf{\textcolor{black!60!green}{(+5.9)}} & 5.7 $\pm$ {\small 5.3}$^{*}$ & \textbf{\textcolor{black!60!green}{(+5.7)}} & 7.0$^{*}$ & \textbf{\textcolor{black!60!green}{(+6.4)}} \\
CDTD & 13.1 $\pm$ {\small 3.3} & -- & 14.4 $\pm$ {\small 3.3} & -- & 13.2 $\pm$ {\small 2.8} & -- & 15.5 $\pm$ {\small 2.4} & -- & 12.8 $\pm$ {\small 3.4} & -- & 13.8 & -- \\
\rowcolor{gray!10} $\quad \llcorner$ +AugFull & 5.6 $\pm$ {\small 2.6}$^{*}$ & \textcolor{black!60!green}{(+7.5)} & 6.4 $\pm$ {\small 2.6} & \textcolor{black!60!green}{(+8.1)} & \textbf{4.9 $\pm$ {\small 2.3}$^{**}$} & \textcolor{black!60!green}{(+8.3)} & 5.3 $\pm$ {\small 1.7}$^{*}$ & \textcolor{black!60!green}{(+10.3)} & 6.8 $\pm$ {\small 2.9} & \textcolor{black!60!green}{(+6.0)} & 5.8$^{*}$ & \textcolor{black!60!green}{(+8.0)} \\
\rowcolor{blue!7} $\quad \llcorner$ \textbf{+AugMask} & \textbf{4.0 $\pm$ {\small 3.3}$^{**}$} & \textbf{\textcolor{black!60!green}{(+9.1)}} & 4.2 $\pm$ {\small 3.2}$^{*}$ & \textbf{\textcolor{black!60!green}{(+10.3)}} & \textbf{3.0 $\pm$ {\small 2.2}$^{**}$} & \textbf{\textcolor{black!60!green}{(+10.2)}} & \textbf{3.5 $\pm$ {\small 2.4}$^{**}$} & \textbf{\textcolor{black!60!green}{(+12.1)}} & 4.4 $\pm$ {\small 2.7}$^{*}$ & \textbf{\textcolor{black!60!green}{(+8.4)}} & \textbf{3.8$^{**}$} & \textbf{\textcolor{black!60!green}{(+10.0)}} \\
TabDiff & 16.4 $\pm$ {\small 3.1} & -- & 11.0 $\pm$ {\small 3.1} & -- & 12.1 $\pm$ {\small 2.8} & -- & 13.1 $\pm$ {\small 3.2} & -- & 11.1 $\pm$ {\small 3.8} & -- & 12.7 & -- \\
\rowcolor{gray!10} $\quad \llcorner$ +AugFull & 6.1 $\pm$ {\small 4.0}$^{*}$ & \textcolor{black!60!green}{(+10.3)} & 6.1 $\pm$ {\small 3.7} & \textcolor{black!60!green}{(+4.8)} & \textbf{3.8 $\pm$ {\small 2.3}$^{**}$} & \textcolor{black!60!green}{(+8.3)} & \textbf{4.0 $\pm$ {\small 2.3}$^{**}$} & \textcolor{black!60!green}{(+9.1)} & 5.9 $\pm$ {\small 2.9} & \textcolor{black!60!green}{(+5.3)} & \textbf{5.2$^{**}$} & \textcolor{black!60!green}{(+7.6)} \\
\rowcolor{blue!7} $\quad \llcorner$ \textbf{+AugMask} & \textbf{4.7 $\pm$ {\small 4.9}$^{**}$} & \textbf{\textcolor{black!60!green}{(+11.7)}} & \textbf{3.3 $\pm$ {\small 4.4}$^{**}$} & \textbf{\textcolor{black!60!green}{(+7.7)}} & \textbf{2.3 $\pm$ {\small 2.2}$^{**}$} & \textbf{\textcolor{black!60!green}{(+9.8)}} & \textbf{3.0 $\pm$ {\small 3.0}$^{**}$} & \textbf{\textcolor{black!60!green}{(+10.1)}} & \textbf{3.0 $\pm$ {\small 2.4}$^{**}$} & \textbf{\textcolor{black!60!green}{(+8.1)}} & \textbf{3.2$^{**}$} & \textbf{\textcolor{black!60!green}{(+9.5)}} \\
\bottomrule
\end{tabular}
}
\label{table:main}
\end{table*}


\begin{figure*}[!h]
\centering
\includegraphics[height=5.7cm, width=1.0\linewidth]{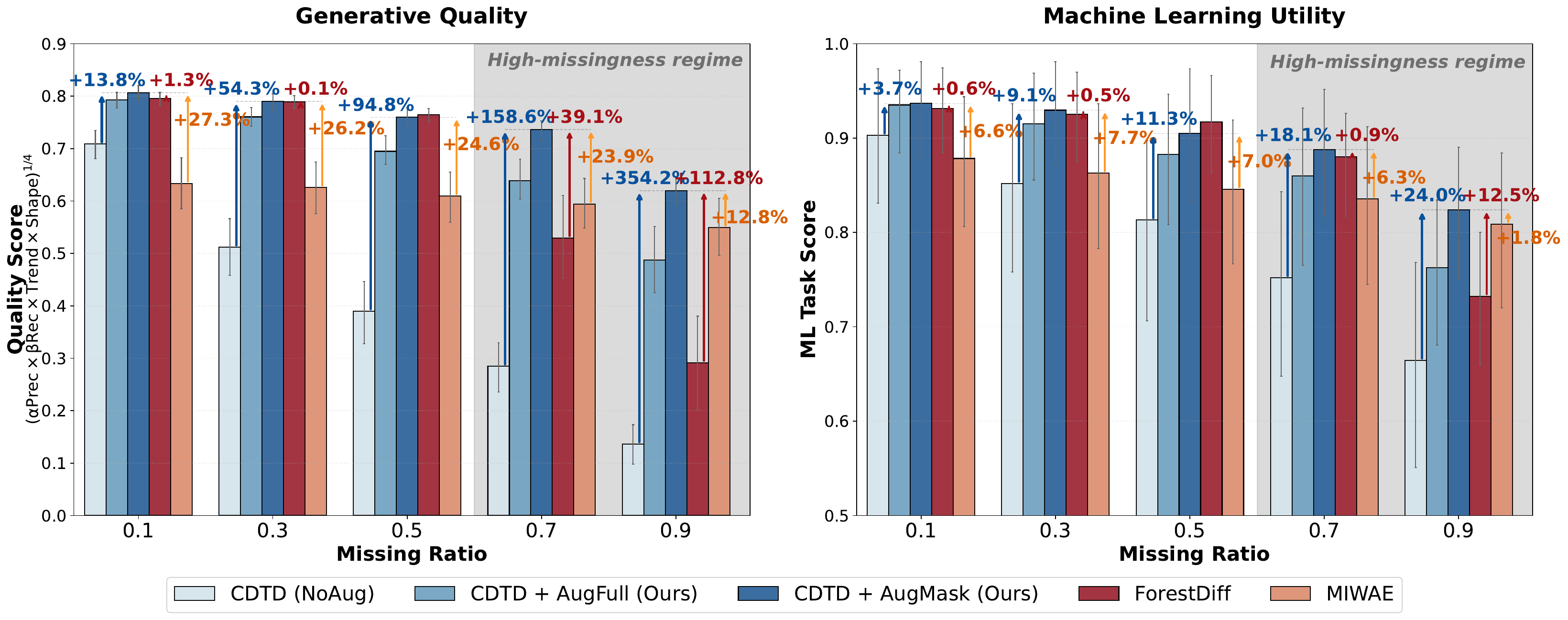}
\caption{\textbf{Robustness across missing ratios (MCAR).} Our \textit{stochastic augmentation} and \textit{masking} strategy for the missing-unaware \textbf{CDTD} baseline shows a hierarchical gain (\textit{NoAug $\rightarrow$ AugFull $\rightarrow$ AugMask}) that matches or surpasses top missing-aware baselines (\textbf{ForestDiff}, \textbf{MIWAE}) in \textbf{Sample Quality} (Left) and \textbf{Utility} (Right). 
See Appendix \ref{app:results} for MAR and further comparison.}
    \label{fig:robustness_results}
\end{figure*}


\section{Analysis}

\subsection{Quantitative Comparison to State of the Art}
Table~\ref{table:main} shows two effects of AugMask.
First, conditional stochastic augmentation alone already provides useful missing-aware context:
\textbf{+AugFull} improves missing-unaware baselines across architectures, including latent diffusion
(TabSyn: Trend$^{**}$ +8.0; Shape$^{*}$ +9.8).
Second, for feature-space diffusion backbones, masking the loss further improves performance by preventing completed entries from becoming supervised pseudo-targets.
With \textbf{+AugMask}, score-based diffusion reaches the best overall ranks among compared methods
(TabDiff: 3.2$^{**}$; CDTD: 3.8$^{**}$), while also improving downstream utility
(ML Task +8.1/+8.4).
Thus, the gain comes not only from constructing complete inputs, but also from how those completions are used during training:
as conditioning context rather than targets.


\begin{figure*}[t]
\centering
\includegraphics[width=\linewidth]{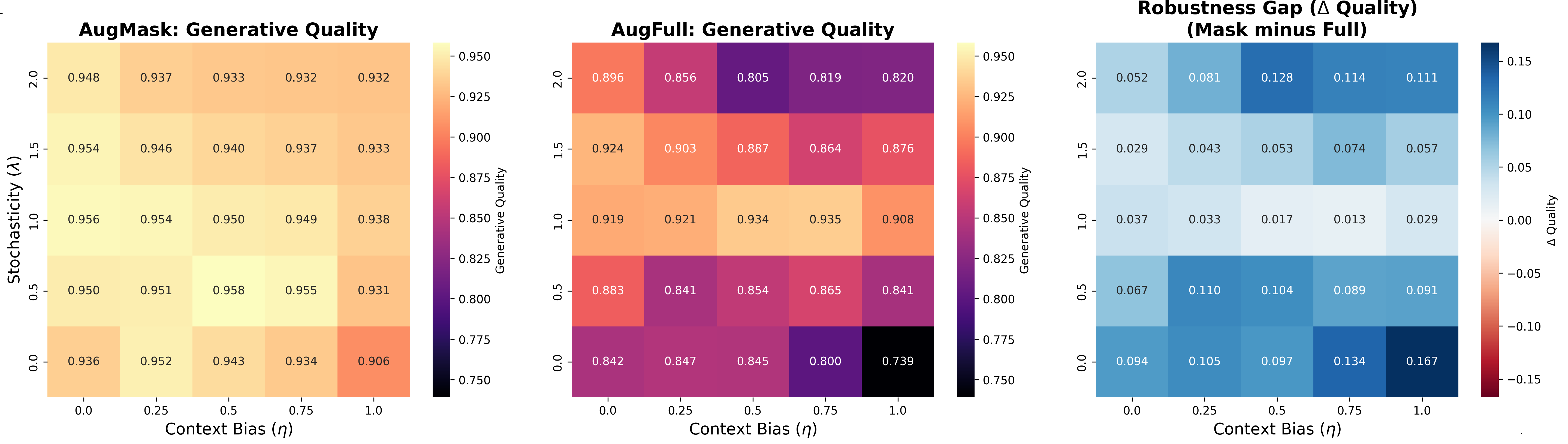}
\caption{\textbf{Sensitivity to augmentation misspecification.}
We perturb one fitted LGBM-S augmenter by varying context bias $\eta$ and stochasticity $\lambda$.
Here, $\eta=0$ preserves the conditional LGBM estimate and $\eta=1$ reverts to a marginal baseline; $\lambda=1$ keeps the estimated uncertainty, while $\lambda=0$ gives deterministic filling.
The right panel reports $\Delta$ Quality = AugMask $-$ AugFull under matched augmented inputs.
The gap remains positive across all settings, showing that AugMask is less vulnerable to misspecified completions because it uses them as context rather than supervised targets.}
\label{fig:misspecification_gap}
\end{figure*}


\begin{figure*}[!ht]
\centering    \includegraphics[width=1.0\linewidth]{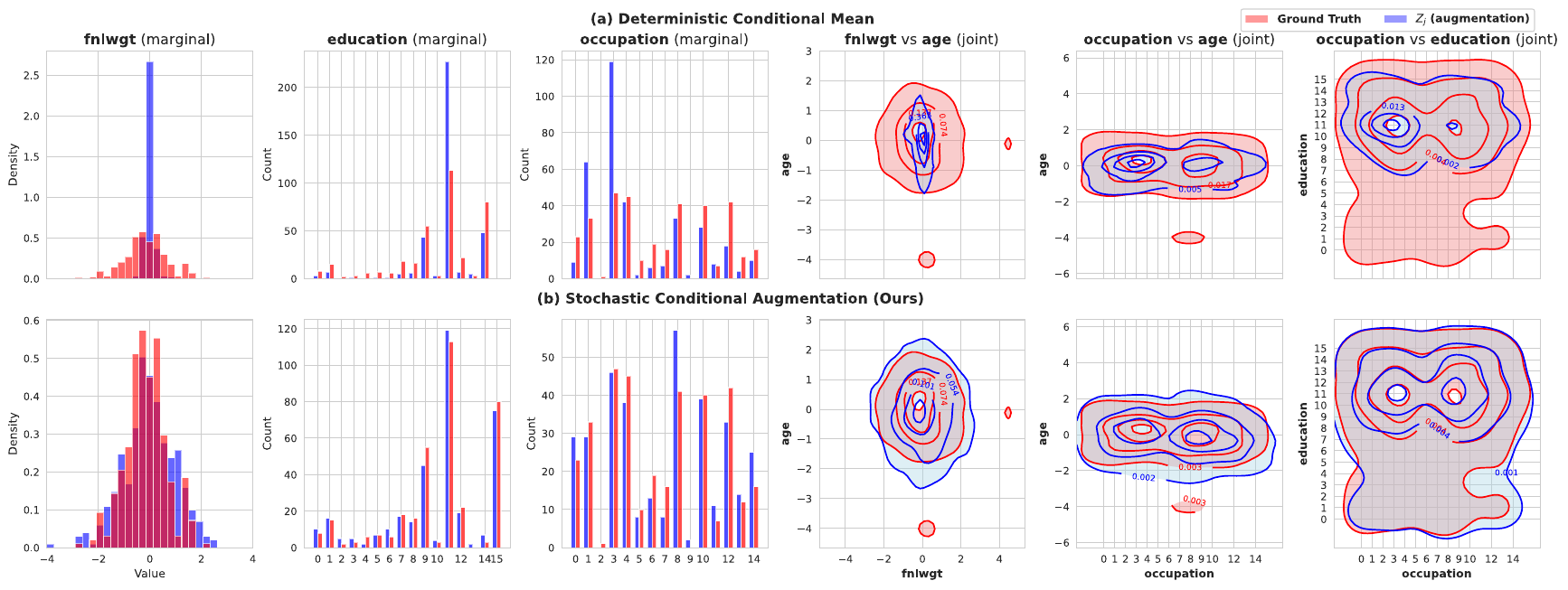}
\caption{\textbf{Visualizing augmentations (LGBM-D vs. LGBM-S).} 
    We overlay \textit{marginal} densities (columns 1-3) and \textit{joint} kernel densities (columns 4-6) of the ground truth (unknown) and augmented data.
    \textbf{(a) Deterministic conditional mean (LGBM-D):} Significant \textit{shrinkage} occurs from ignoring conditional variance (e.g., \texttt{fnlwgt}), resulting in artificial spikes and restricted support in the joint plots. 
    \textbf{(b) Stochastic conditional augmentation (Ours, LGBM-S):} It captures marginal and conditional variability. 
    While (a) matches first moments, preserving second moment provides better context about the data distribution. 
    Categorical values are integer-encoded for clarity.}
    \label{fig:augmentation_fidelity}
\end{figure*}

\begin{figure}[!ht]
\centering
\includegraphics[width=1.0\linewidth]{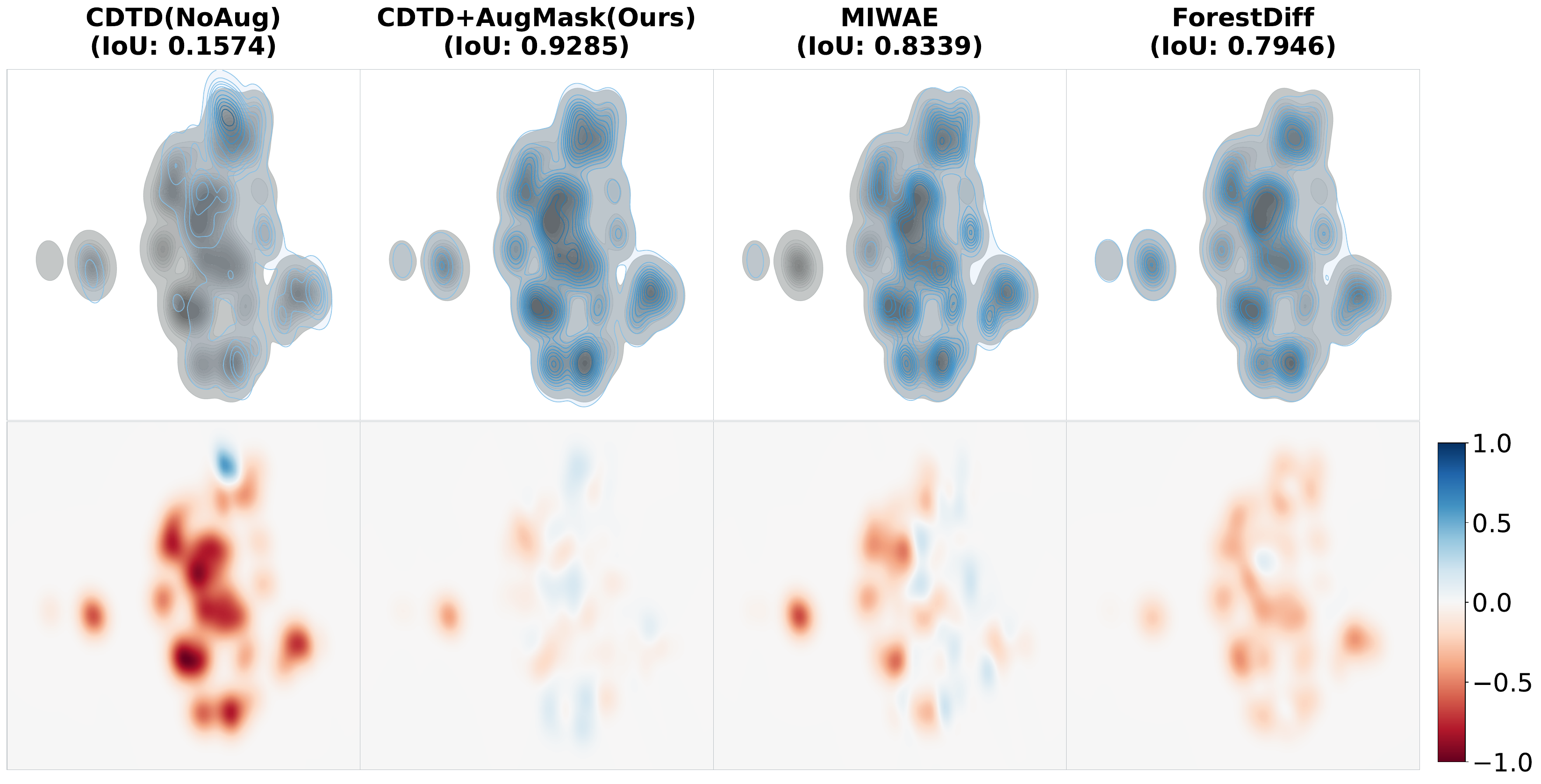}
\caption{\textbf{UMAP comparison of real and synthetic data.} Generators are trained on incomplete Adult dataset ($p_{\mathrm{miss}}=0.7$).
    \textbf{Top row}: Grey kernel density in the background indicates the complete real data. 
    Blue contours represent synthetic density, with overlap measured by IoU.
    \textbf{Bottom row}: Kernel density estimate residuals ($\mathrm{KDE}_{\mathrm{Syn}} - \mathrm{KDE}_{\mathrm{Real}}$). 
    Blue indicates synthetic over-representation; red indicates mode collapse relative 
    to ground truth.}
    \label{fig:umap_comparison}
\end{figure}


\subsection{Robustness Across Missing Ratios}
Figure~\ref{fig:robustness_results} shows that the benefit of AugMask becomes more pronounced as missingness increases.
The trajectory follows a clear hierarchy:
\textbf{NoAug} degrades sharply under high missingness, \textbf{AugFull} recovers part of the lost distributional structure through stochastic context, and \textbf{AugMask} further stabilizes both sample quality and downstream utility.
At the extreme $p_{\mathrm{miss}}=0.9$ regime, AugMask avoids the severe degradation observed in other missing-aware baselines.
Under high uncertainty, completions are useful as context, but treating them as ground-truth targets can amplify errors.
\subsection{Sensitivity to Augmentation Misspecification}
\label{sec:misspecification}

A natural question is whether AugMask depends mainly on having a strong augmenter.
We test this by starting from one fitted LGBM-S augmenter and perturbing only its context bias $\eta$ and stochasticity $\lambda$, while using the same augmented inputs for AugMask and AugFull.
For each missing continuous feature, we sample $
Z_{ij}\sim \mathcal{N}\!\left(
\tilde{\mu}_{ij}(\eta),
\tilde{\sigma}_{ij}^2(\lambda)
\right),
\tilde{\mu}_{ij}(\eta)
=
(1-\eta)\hat{\mu}_{ij}+\eta\bar{\mu}_j,
\tilde{\sigma}_{ij}(\lambda)=\lambda\hat{\sigma}_{ij},$
where $(\hat{\mu}_{ij},\hat{\sigma}_{ij})$ are the LGBM-S conditional estimates and $\bar{\mu}_j$ is the marginal mean.
For categorical features, define
$\mathbf{q}_{ij}(\eta)
=
(1-\eta)\hat{\mathbf{p}}_{ij}
+
\eta\bar{\boldsymbol{\pi}}_j,
\tilde{\mathbf{p}}_{ij}(\eta,\lambda)
=
\frac{\mathbf{q}_{ij}(\eta)^{1/\lambda}}
{\sum_k q_{ijk}(\eta)^{1/\lambda}}
(\lambda>0),$
where $\hat{\mathbf{p}}_{ij}$ is the conditional class-probability estimate and $\bar{\boldsymbol{\pi}}_j$ is the marginal class frequency.
For $\lambda=0$, we use the deterministic value, i.e., a point mass at $\arg\max_k q_{ijk}(\eta)$.
Thus, $\eta=0$ preserves conditional context, while $\eta=1$ reverts to marginal context; $\lambda=1$ keeps the estimated stochasticity, $\lambda<1$ makes samples under-dispersed, $\lambda>1$ makes them over-dispersed and $\lambda=0$ gives a deterministic variant.

Figure~\ref{fig:misspecification_gap} shows that the robustness gap is positive across the entire $5\times5$ sweep.
The endpoints recover original rules: $(\eta,\lambda)=(0,1)$ is \textbf{LGBM-S}, $(0,0)$ is closest to \textbf{LGBM-D}, and $(1,0)$ collapses to marginal mean/mode filling.
As $\eta$ or $\lambda$ deviates from the default setting, AugFull becomes more vulnerable because it is trained to reconstruct misspecified inputs.

\subsection{Qualitative Analysis}
Figure~\ref{fig:augmentation_fidelity} visualizes why deterministic conditional completion can be insufficient.
Although \textbf{LGBM-D} matches conditional means, it collapses conditional variability, producing spiky marginals and restricted joint support.
In contrast, \textbf{LGBM-S} restores realistic marginal and joint variation by sampling from an estimated conditional distribution.
This is consistent with the variance-weighted sensitivity view: stochastic completions expose the denoiser to plausible alternatives rather than a single overconfident point estimate.

Figure~\ref{fig:umap_comparison} shows the effect on generated samples under heavy missingness.
Without augmentation, the baseline diffusion model (CDTD) suffers severe support mismatch and mode collapse
(IoU $=0.157$).
With AugMask, the generated density overlaps the real-data density much more closely
(IoU $=0.929$), capturing the modes better.
Together with the misspecification study, these results indicate that AugMask improves the distributional coverage.

\section{Related Work}
\paragraph{Generative models for complete/incomplete tabular data.}
Deep generative models for complete tabular data include GAN-based~\citep{xu2019modeling, zhao2021ctab}, VAE-based~\citep{ma2020vaem}, and diffusion/score-based approaches~\citep{kotelnikov2023tabddpm, kim2022stasy, lee2023codi, mueller2025continuous, shi2025tabdiff}. 
In contrast, methods for incomplete data have primarily focused on imputation~\citep{yoon2018gain, mattei2019miwae, ipsen2020not, zheng2022diffusion, zhang2025diffputer}.
Fewer methods directly address generation from incomplete data: MissDiff~\citep{ouyang2023missdiff} utilizes an observed-only loss but neglects input conditioning, while a recent tree-based model~\citep{jolicoeur2024generating} addresses both imputation and generation.

\paragraph{Marginal score matching.}
\citet{givens2025score} formulate score matching under missingness by integrating out missing entries so that the objective depends only on observed coordinates, and estimate this objective via importance sampling.
AugMask pursues a similar marginalization principle in a practical training pipeline, where stochastic conditional augmentation samples plausible missing values as input context, while the loss is applied only to originally observed coordinates.
This yields an objective regularized by variance-induced sensitivity that reduces reliance on uncertain missing entries without requiring importance sampling.

\paragraph{Augmentation as implicit regularization.}
Stochastic perturbations and data augmentation are often viewed as implicit regularization, where injected randomness induces Jacobian-type penalties and improves robustness~\citep{bishop1995training,wager2013dropout,rifai2011contractive,dao2019kernel}.
Here, stochasticity instead addresses \emph{missingness}. Values are unobserved, yet diffusion backbones require fully specified inputs.
We sample context-conditioned augmentations, using their uncertainty to limit dependence on missing coordinates.
This links marginalizing out missing values to sensitivity regularization; both make missing entries effectively ignorable, with uncertainty setting the strength.

\section{Conclusion}
We propose \textbf{AugMask}, a plug-and-play training strategy that enables missing-unaware diffusion models to train on incomplete tabular data.
By using conditional stochastic augmentation while excluding completed entries from supervision, AugMask achieves the best performance among the compared methods.

\noindent\textbf{Scope and limitations.}
Our strategy targets score-based models that operate directly in feature space, where observed-only supervision is natural.
Extending AugMask to latent diffusion pipelines~\cite{zhang2023mixed} is non-trivial, as missingness-induced uncertainty must be transmitted through the encoder/decoder and the training objective redefined in latent space. 
Handling missingness in such high-dimensional settings is an important direction.
Finally, we focus on generation; extensions to imputation-specific objectives and privacy-related settings are left for future work.

\section*{Acknowledgements}

This work was supported by the National Research Foundation of Korea (NRF) grant funded by the Korea government (MSIT) (RS-2023-00217705, RS-2024-00341749, RS-2026-25487501),
the MSIT (Ministry of Science and ICT), Korea, under
the ICAN (ICT Challenge and Advanced Network of HRD) support program (RS-2023-00259934),
Developing the Next-Generation General AI with Reliability, Ethics, and Adaptability (RS-2025-02283048),
and the Leading Generative AI Human Resources Development (RS-2026-25544647)
supervised by the IITP (Institute for Information \& Communications Technology Planning \& Evaluation), and
the Ministry of Trade, Industry and Resources (MOTIR), Korea, under the project ``Industrial Technology Infrastructure Program'' (RS-2024-00466693).

\section*{Impact Statement}
This paper studies training score-based diffusion models for incomplete tabular data. 
Improved generative modeling under missingness can help researchers and practitioners construct synthetic benchmarks, study robustness to missing data, and reduce the barrier to using tabular datasets in settings where missing values are common.
At the same time, AugMask does not provide formal privacy guarantees; generated samples should be audited for memorization and sensitive attribute leakage before release. 
Missingness patterns can also encode social, clinical, institutional, or measurement biases.
We suggest domain-specific validation, fairness and privacy audits, and transparent reporting of the data source, missingness, and evaluation protocol before using generated data in high-stakes applications.

\bibliography{main}
\bibliographystyle{icml2026}

\newpage
\appendix
\onecolumn

\section*{Appendix Table of Contents}  
\hyperref[app:proofs]{\ref*{app:proofs}.\ Proofs}\par
\hyperref[app:extra]{\ref*{app:extra}.\ When Extra Supervision from Augfull Helps}\par

\hyperref[app:impl]{\ref*{app:impl}.\ Implementation Details}\par
\hspace*{1.25em}\hyperref[app:datasets]{\ref*{app:datasets}.\ Datasets}\par
\hspace*{1.25em}\hyperref[app:datasets]{\ref*{app:preprocessing}.\ Preprocessing}\par
\hspace*{1.25em}\hyperref[app:missing]{\ref*{app:missing}.\ Missing Mechanisms \& Mask Generation}\par
\hspace*{1.25em}\hyperref[app:aug-dist]{\ref*{app:aug-dist}.\ Constructing Augmented Dataset}\par
\hspace*{1.25em}\hyperref[app:baselines]{\ref*{app:baselines}.\ Baseline Tabular Generative Models}\par
\hspace*{2.5em}\hyperref[app:baseline-aware]{\ref*{app:baseline-aware}.\ Missing-Aware}\par
\hspace*{2.5em}\hyperref[app:baseline-unaware]{\ref*{app:baseline-unaware}.\ Missing-Unaware}\par
\hspace*{1.25em}\hyperref[app:loss-masking]{\ref*{app:loss-masking}.\ Applying Loss Masking to Tabular Diffusion Models}\par
\hspace*{1.25em}\hyperref[app:hardware]{\ref*{app:hardware}.\ Hardware Specifications}\par

\hyperref[app:metrics]{\ref*{app:metrics}.\ Evaluation Metrics for Synthetic Tabular Data}\par
\hyperref[app:results]{\ref*{app:results}.\ Detailed Experimental Results}\par

\smallskip

\section{Proofs}
\label{app:proofs}
\paragraph{Section structure.}
The structure of this section is as follows: (i) We establish the Rao-Blackwellization for augmentation-induced objectives as the variance reducing objective, and (ii) prove the second-order local approximation of the RB ideal around the conditional mean (Proposition~\ref{prop:rb_local}). 

\begin{lemma}[Rao-Blackwellization for augmentation-induced objectives]
\label{app:rb_proof}
Let $(\mathbf{X}^0, \mathbf{M}) \sim p_{\mathrm{data}}$ be a random sample and mask, and let
$\mathbf{X}^0_{\mathrm{obs}}$ denote the observed part.
Let $T(\cdot \mid \mathbf{x}_{\mathrm{obs}}, \mathbf{m})$ be any augmentation rule for the missing part.
Draw a context $(\mathbf{X}^0_{\mathrm{obs}}, \mathbf{M})$, sample an augmentation
$Z \sim T(\cdot \mid \mathbf{X}^0_{\mathrm{obs}}, \mathbf{M})$, and evaluate the (random) loss
\[
U_T := l(\theta; \mathbf{X}^0_{\mathrm{obs}}, \mathbf{M}, Z).
\]
Define the conditional expectation of this loss given the observed context as
\[
\bar{U}_T:=
\mathbb{E}\bigl[ U_T \mid \mathbf{X}^0_{\mathrm{obs}}, \mathbf{M} \bigr]
\]
We view $U_T$ as single-sample (random) estimate of the population objective $L^T(\theta)=\mathbb{E}[U_T]$.
Then the following properties hold:

\begin{enumerate}[label=(\alph*)]
\item \textbf{Population identity.}
\[
L^T(\theta)
:= \mathbb{E}[U_T]
= \mathbb{E}[\bar{U}_T]
= \mathbb{E}_{\mathbf{X}^0_{\mathrm{obs}}, \mathbf{M}}\bigl[l^{T}(\theta; \mathbf{X}^0_{\mathrm{obs}}, \mathbf{M})\bigr].
\]

\item \textbf{Variance reduction.}
\[
\mathrm{Var}(U_T)
=
\mathbb{E}\bigl[\mathrm{Var}(U_T \mid \mathbf{X}^0_{\mathrm{obs}}, \mathbf{M})\bigr]
+\mathrm{Var}(\bar{U}_T)
\quad\Longrightarrow\quad
\mathrm{Var}(\bar{U}_T) \le \mathrm{Var}(U_T).
\]
The term $\mathbb{E}[\mathrm{Var}(U_T \mid \mathbf{X}^0_{\mathrm{obs}}, \mathbf{M})]$ is the \textit{avoidable variance} induced by augmentation randomness beyond what is determined by the observed context.

\item \textbf{Minimal MSE (best predictor using the incomplete data).}
For any estimator $g(\mathbf{X}^0_{\mathrm{obs}}, \mathbf{M})$ that depends solely on the observed context, 
\[
\mathbb{E}\bigl[(U_T-\bar{U}_T)^2\bigr]
\;\le\;
\mathbb{E}\bigl[(U_T-g(\mathbf{X}^0_{\mathrm{obs}}, \mathbf{M}))^2\bigr],
\]
with equality if and only if $g(\mathbf{X}^0_{\mathrm{obs}}, \mathbf{M}) = \bar{U}_T$.
\end{enumerate}

\noindent

\end{lemma}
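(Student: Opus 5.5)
The plan is to treat all three parts as consequences of elementary properties of conditional expectation with respect to the $\sigma$-algebra $\mathcal{G}:=\sigma(\mathbf{X}^0_{\mathrm{obs}},\mathbf{M})$. Throughout, I will assume $\mathbb{E}[U_T^2]<\infty$. This holds, for instance, if $l$ is a squared denoising loss with finite second moments under the noising and augmentation draws. Under this assumption $\bar U_T=\mathbb{E}[U_T\mid\mathcal{G}]$ is well defined and square integrable. I will also note that, conditionally on $(\mathbf{X}^0_{\mathrm{obs}},\mathbf{M})=(\mathbf{x}_{\mathrm{obs}},\mathbf{m})$, the only randomness in $U_T$ comes from $Z\sim T(\cdot\mid\mathbf{x}_{\mathrm{obs}},\mathbf{m})$ (together with $t$ and the forward noise, if these are absorbed into $l$). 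Hence $\bar U_T=l^T(\theta;\mathbf{X}^0_{\mathrm{obs}},\mathbf{M})$ by the definition of $l^T$ in Eq.~\eqref{eq:l_T} and its multivariate analogue.

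For part (a), I would apply the tower property, $\mathbb{E}[U_T]=\mathbb{E}[\mathbb{E}[U_T\mid\mathcal{G}]]$, and then substitute the identification $\bar U_T=l^T$ from the previous paragraph. The only subtlety is measurability, since $T$ must be a regular conditional (Markov) kernel. This makes $l^T(\theta;\cdot)$ a measurable function of the context, so Fubini applies to the joint law of (context, $Z$).

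For part (b), I would invoke the law of total variance. Write $U_T-\mathbb{E}U_T=(U_T-\bar U_T)+(\bar U_T-\mathbb{E}\bar U_T)$, expand the square, and show that the cross term vanishes. It does because $\bar U_T-\mathbb{E}\bar U_T$ is $\mathcal{G}$-measurable and $\mathbb{E}[U_T-\bar U_T\mid\mathcal{G}]=0$, using (a) for $\mathbb{E}\bar U_T=\mathbb{E}U_T$. The first squared term equals $\mathbb{E}[\mathrm{Var}(U_T\mid\mathcal{G})]$ and the second equals $\mathrm{Var}(\bar U_T)$. The inequality then follows from nonnegativity of the conditional variance.

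For part (c), I would use the $L^2$ projection characterization. For any square-integrable $\mathcal{G}$-measurable $g$, write $U_T-g=(U_T-\bar U_T)+(\bar U_T-g)$. The same orthogonality argument as in (b) kills the cross term, giving
\[
\mathbb{E}[(U_T-g)^2]=\mathbb{E}[(U_T-\bar U_T)^2]+\mathbb{E}[(\bar U_T-g)^2].
\]
Equality in the stated inequality therefore holds if and only if $\mathbb{E}[(\bar U_T-g)^2]=0$, that is, $g=\bar U_T$ almost surely. The statement's ``if and only if'' should be read in this a.s.\ sense.

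The only step that takes care is the orthogonality/cross-term argument. It needs integrability, which requires $g\in L^2$; if $\mathbb{E}[g^2]=\infty$, the right-hand side is infinite and the inequality is trivial. The rest is routine.
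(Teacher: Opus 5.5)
Your proposal is correct and follows the paper's proof essentially step for step: the tower property for (a), the law of total variance for (b), and the decomposition $U_T-g=(U_T-\bar U_T)+(\bar U_T-g)$ for (c), where the cross term vanishes after conditioning on $(\mathbf{X}^0_{\mathrm{obs}},\mathbf{M})$. Your extra care---assuming $\mathbb{E}[U_T^2]<\infty$, reading the equality case as almost-sure, and explicitly identifying $\bar U_T$ with $l^T(\theta;\mathbf{X}^0_{\mathrm{obs}},\mathbf{M})$---makes precise points that the paper leaves implicit.
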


\begin{proof}[Proof of Lemma~\ref{app:rb_proof}]
\textbf{(a)} By the law of iterated expectations,
\[
\mathbb{E}[\bar{U}_T]
=
\mathbb{E}\bigl[\mathbb{E}[U_T \mid \mathbf{X}^0_{\mathrm{obs}}, \mathbf{M}]\bigr]
=
\mathbb{E}[U_T].
\]
By definition, $\mathbb{E}[U_T]$ matches the population loss $L^T(\theta)$.

\textbf{(b)} We apply the law of total variance conditioning on the pair $(\mathbf{X}^0_{\mathrm{obs}}, \mathbf{M})$:
\[
\mathrm{Var}(U_T)
=
\mathbb{E}\bigl[\mathrm{Var}(U_T \mid \mathbf{X}^0_{\mathrm{obs}}, \mathbf{M})\bigr]
+
\mathrm{Var}\bigl(\mathbb{E}[U_T \mid \mathbf{X}^0_{\mathrm{obs}}, \mathbf{M}]\bigr)
=
\mathbb{E}\bigl[\mathrm{Var}(U_T \mid \mathbf{X}^0_{\mathrm{obs}}, \mathbf{M})\bigr]
+
\mathrm{Var}(\bar{U}_T).
\]
Since variances are non-negative, $\mathrm{Var}(\bar{U}_T) \le \mathrm{Var}(U_T)$.

\textbf{(c)} Let $g(\mathbf{X}^0_{\mathrm{obs}}, \mathbf{M})$ be any estimator depending only on the observed context. Decompose the error $U_T - g$ as:
\[
U_T - g = (U_T - \bar{U}_T) + (\bar{U}_T - g).
\]
Squaring and taking expectations yields:
\[
\mathbb{E}[(U_T - g)^2] = \mathbb{E}[(U_T - \bar{U}_T)^2] +
\mathbb{E}[(\bar{U}_T - g)^2] +
2\mathbb{E}\bigl[(U_T - \bar{U}_T)(\bar{U}_T - g)\bigr].
\]
We analyze the cross-term by conditioning on the context $(\mathbf{X}^0_{\mathrm{obs}}, \mathbf{M})$. Since the term $(\bar{U}_T - g)$ depends only on $(\mathbf{X}^0_{\mathrm{obs}}, \mathbf{M})$, it is fixed given the context and can be moved out of the conditional expectation:
\[
\mathbb{E}\bigl[(U_T - \bar{U}_T)(\bar{U}_T - g) \mid \mathbf{X}^0_{\mathrm{obs}}, \mathbf{M}\bigr]
=
(\bar{U}_T - g) \cdot \mathbb{E}\bigl[U_T - \bar{U}_T \mid \mathbf{X}^0_{\mathrm{obs}}, \mathbf{M}\bigr].
\]
By definition, $\mathbb{E}[U_T \mid \mathbf{X}^0_{\mathrm{obs}}, \mathbf{M}] = \bar{U}_T$, so the inner expectation is zero. 
The relation simplifies to:
\[
\mathbb{E}[(U_T - g)^2]
=
\mathbb{E}[(U_T - \bar{U}_T)^2]
+
\mathbb{E}[(\bar{U}_T - g)^2]
\;\ge\;
\mathbb{E}[(U_T - \bar{U}_T)^2].
\]
Equality holds if and only if $\mathbb{E}[(\bar{U}_T - g)^2] = 0$, implying $g = \bar{U}_T$.
\end{proof}


\begin{remark}[Fixed-context view vs.\ population view]
\label{rem:RB-fixed-vs-pop}
In the main text, we fix an observed block $\mathbf{x}^0_{\mathrm{obs}}$ (defined by a mask $\mathbf{m}$) and study randomness arising strictly from the augmentation $Z$ (and the diffusion process).
In this fixed-context view, the quantity
$
l^{T}(\theta; \mathbf{x}^0_{\mathrm{obs}}, \mathbf{m})
=
\mathbb{E}_{Z \sim T(\cdot \mid \mathbf{x}^0_{\mathrm{obs}}, \mathbf{m})}
\bigl[l(\theta; \mathbf{x}^0_{\mathrm{obs}}, Z)\bigr]
$
is deterministic and serves as the objective function for that specific input.
For completeness, Lemma~\ref{app:rb_proof} establishes the corresponding properties over the population where both the data $\mathbf{X}^0$ and the mask $\mathbf{M}$ are random.
This formalizes the variance-reduction interpretation of Rao-Blackwellization: marginalizing out the $Z$ yields a lower-variance estimator of the population loss.
\end{remark}

\newpage
\paragraph{Proof roadmap for Proposition \ref{prop:rb_local}}
(Step 1) Taylor expand $g_\theta(Z_j)$ around $\mu_j$ with a third-order remainder.
(Step 2) Take conditional expectation over $Z_j\mid \mathbf{x}^0_{-j}$ to obtain
$g_\theta(\mu_j)+\frac12\sigma_j^2 g_\theta''(\mu_j)$ plus a remainder term.
(Step 3) Bound the remainder using Assumption~\ref{assump:reg}.
(Steps 4--6, interpretation) Rewrite $g_\theta$ via a bias-variance decomposition over diffusion randomness,
differentiate to expose a sensitivity penalty, and plug into the Taylor approximation.

\begin{assumption}[Local smoothness and moments]
\label{assump:reg}
(i) (\emph{Local smoothness}) $g$ is three times continuously differentiable on a neighborhood of
$\mu_j(\mathbf{x}^0_{-j}):=\mathbb{E}[X^0_j\mid X^0_{-j}=\mathbf{x}^0_{-j}]$, and
$\sup_{|z-\mu_j|\le r}|g^{(3)}(z)|<\infty$ for some $r>0$. \\
(ii) (\emph{Finite third moment}) $\mathbb{E}[|X^0_j-\mu_j|^3\mid X^0_{-j}=\mathbf{x}^0_{-j}]<\infty$.
\end{assumption}

\begin{proposition}[Local approximation of the RB ideal]
\label{prop:rb_taylor_app}
Recall the RB ideal in Eq.~\eqref{eq:l_RB}. Let
$Z_j \sim p_{\mathrm{data}}(\cdot \mid \mathbf{x}^0_{-j},\mathbf{m})$,
$\mu_j:=\mathbb{E}[Z_j\mid \mathbf{x}^0_{-j},\mathbf{m}]$,
$\sigma_j^2:=\mathrm{Var}(Z_j\mid \mathbf{x}^0_{-j},\mathbf{m})$,
and let $W$ collect the randomness in Eq.~\eqref{eq:g_theta} (i.e., the sampled diffusion time and forward noise).
Define the scalar loss function $g_\theta(z):=l_{-j}(\theta;\mathbf{x}^0_{-j},z)$.
Under standard smoothness conditions (Assumption~\ref{assump:reg} in Appendix~\ref{app:proofs}),
\begin{equation}
\label{eq:rb_taylor_core2}
l^{\mathrm{RB}}(\theta;\mathbf{x}^0_{-j})
\approx
g_\theta(\mu_j) + \frac{1}{2}\sigma_j^2 g_\theta''(\mu_j).
\end{equation}
Moreover, define
$\bar\phi(u):=\mathbb{E}_W[\phi_\theta(u;W)]$,
$s(u):=\mathrm{tr}\,(\mathrm{Var}_W(\phi_\theta(u;W)))$,
and $e(u):=\bar\phi(u)-\mathbf{x}^0_{-j}$,
where $\mathbf{x}^0_{-j}$ and $\mathbf{m}$ are fixed throughout.
Then $g_\theta(u)=\|e(u)\|_2^2+s(u)$ and
\begin{align}
\label{eq:rb_taylor_decomp2}
l^{\mathrm{RB}}(\theta;\mathbf{x}^0_{-j})
\approx\;&
\|e(\mu_j)\|_2^2 +\sigma_j^2\|\bar\phi'(\mu_j)\|_2^2
+\sigma_j^2\, e(\mu_j)^\top \bar\phi''(\mu_j)
\nonumber\\
&\quad
+s(\mu_j) +\frac{1}{2}\sigma_j^2 s''(\mu_j),
\end{align}
where derivatives are with respect to the scalar fill value $u$ (holding $\mathbf{x}^0_{-j}$ fixed).
\end{proposition}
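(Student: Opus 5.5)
The plan follows the roadmap stated above and proceeds in two parts: first the Taylor expansion of $g_\theta$ around $\mu_j$, then a bias--variance rewriting of $g_\theta$ that exposes the sensitivity term.

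\textbf{Taylor expansion.} By Assumption~\ref{assump:reg}(i), Taylor's theorem with Lagrange remainder on $\{|z-\mu_j|\le r\}$ gives
\[
g_\theta(z)=g_\theta(\mu_j)+g_\theta'(\mu_j)(z-\mu_j)+\tfrac12 g_\theta''(\mu_j)(z-\mu_j)^2+\tfrac16 g_\theta^{(3)}(\xi_z)(z-\mu_j)^3 .
\]
We then take the conditional expectation over $Z_j\sim p_{\mathrm{data}}(\cdot\mid \mathbf{x}^0_{-j},\mathbf{m})$. The linear term vanishes because $\mathbb{E}[Z_j-\mu_j]=0$, and the quadratic term becomes $\tfrac12\sigma_j^2 g_\theta''(\mu_j)$. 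This yields Eq.~\eqref{eq:rb_taylor_core2} with remainder
\[
R_j=\mathbb{E}\bigl[(g_\theta(Z_j)-\text{quadratic Taylor polynomial})\bigr].
\]

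\textbf{Controlling the remainder (main obstacle).} Inside the ball, $|R_j|$ is bounded by $\tfrac16\sup_{|z-\mu_j|\le r}|g^{(3)}|\,\mathbb{E}|Z_j-\mu_j|^3$, which is finite by Assumption~\ref{assump:reg}(ii). Outside the ball, the assumption gives no control on $g$. I would handle this region by stating the result as a local approximation. The approach is to split the expectation on $\{|Z_j-\mu_j|\le r\}$ versus its complement and bound the tail contribution using Chebyshev/Markov with the third moment. This additionally requires $g_\theta$ to have at most polynomial growth, which holds in practice since $D_\theta$ is Lipschitz and $g_\theta$ is then of quadratic growth. I expect this step to be the weak point, so the statement is naturally read with ``$\approx$'' meaning up to an $O(\mathbb{E}|Z_j-\mu_j|^3)$ remainder.

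\textbf{Bias--variance rewriting.} Writing $\phi_\theta(u;W)=D_\theta(\mathbf{x}^{A,t}(u),t)_{-j}$, the usual decomposition of $\mathbb{E}_W\|\phi-\mathbf{x}^0_{-j}\|^2$ gives
\[
g_\theta(u)=\|e(u)\|_2^2+s(u),
\]
since the cross term has zero mean. Interchanging differentiation and $\mathbb{E}_W$, which is justified by dominated convergence under the same smoothness, we differentiate twice:
\[
\tfrac{d^2}{du^2}\|e\|^2=2\|\bar\phi'\|^2+2e^\top\bar\phi'' .
\]
Substituting $g''=2\|\bar\phi'\|^2+2e^\top\bar\phi''+s''$ into the Taylor approximation gives Eq.~\eqref{eq:rb_taylor_decomp2} directly.

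The main-text form Eq.~\eqref{eq:sensitivity_penalty} then follows by dropping three terms. The curvature--error term $e^\top\bar\phi''$ is dropped as a Gauss--Newton step, valid when the error at $\mu_j$ is small or $\bar\phi$ is locally near-linear. The diffusion-noise terms $s$ and $s''$ are dropped because they are present for any fill rule and do not involve the mean-reconstruction sensitivity.
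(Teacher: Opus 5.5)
Your proposal follows the paper's proof essentially step for step: a Lagrange-remainder Taylor expansion of $g_\theta$ at $\mu_j$, a conditional expectation that kills the linear term, the bias--variance identity $g_\theta=\|e\|_2^2+s$, and substitution of $g_\theta''=2\|\bar\phi'\|_2^2+2e^\top\bar\phi''+s''$ at $\mu_j$. The one difference is the remainder: the paper simply asserts $|\Delta|\le r$ so that $|g_\theta^{(3)}(\xi_j)|\le M_j$ holds everywhere, whereas your split into $\{|Z_j-\mu_j|\le r\}$ and its complement, using quadratic growth of $g_\theta$ on the tail, actually justifies the $O(\mathbb{E}|Z_j-\mu_j|^3)$ bound without that unstated support restriction.
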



\begin{proof}
\label{app:proof_3.1}
\noindent\textbf{Step 1: Local Taylor expansion.} 
Fix $\mathbf{x}^0_{-j}$ and let $Z_j\sim p_{\mathrm{data}}(\cdot\mid \mathbf{x}^0_{-j})$.
Define
\[
\mu_j:=\mathbb{E}[Z_j\mid \mathbf{x}^0_{-j}],
\qquad
\Delta:=Z_j-\mu_j,
\qquad
\sigma_j^2:=\mathrm{Var}(Z_j\mid \mathbf{x}^0_{-j})=\mathbb{E}[\Delta^2\mid \mathbf{x}^0_{-j}].
\]
Recall $g_\theta(z):=l_{-j}(\theta;\mathbf{x}^0_{-j},z)$.
By Assumption~\ref{assump:reg}(i), $g_\theta$ is three times continuously differentiable in a neighborhood of $\mu_j$.
Taylor's theorem gives
\begin{equation}
\label{eq:step1_delta_taylor}
g_\theta(\mu_j+\Delta)
=
g_\theta(\mu_j)
+g_\theta'(\mu_j)\Delta
+\frac{1}{2}g_\theta''(\mu_j)\Delta^2
+R_\theta(\Delta),
\end{equation}
where
\begin{equation}
\label{eq:step1_delta_remainder}
R_\theta(\Delta)
=
\frac{1}{6}\,g_\theta^{(3)}(\xi_j)\,\Delta^3
\end{equation}
for some arbitrary $\xi_j$ lying between $\mu_j$ and $\mu_j+\Delta$.

\vspace{1em}
\noindent\textbf{Step 2: Take conditional expectation under the RB ideal.} 
By definition,
\[
l^{\mathrm{RB}}(\theta;\mathbf{x}^0_{-j})
=
\mathbb{E}\!\left[g_\theta(Z_j)\mid \mathbf{x}^0_{-j}\right]
=
\mathbb{E}\!\left[g_\theta(\mu_j+\Delta)\mid \mathbf{x}^0_{-j}\right].
\]
Taking $\mathbb{E}[\cdot\mid \mathbf{x}^0_{-j}]$ on both sides of Eq.~\eqref{eq:step1_delta_taylor} yields
\begin{align}
l^{\mathrm{RB}}(\theta;\mathbf{x}^0_{-j})
&=
g_\theta(\mu_j)
+g_\theta'(\mu_j)\,\mathbb{E}[\Delta\mid \mathbf{x}^0_{-j}]
+\frac{1}{2}g_\theta''(\mu_j)\,\mathbb{E}[\Delta^2\mid \mathbf{x}^0_{-j}]
+\mathbb{E}\!\left[R_\theta(\Delta)\mid \mathbf{x}^0_{-j}\right] \\
&=
g_\theta(\mu_j)
+\frac{1}{2}\sigma_j^2\,g_\theta''(\mu_j)
+\mathbb{E}\!\left[R_\theta(\Delta)\mid \mathbf{x}^0_{-j}\right],
\label{eq:step2_delta_condexp}
\end{align}
where $\mathbb{E}[\Delta\mid \mathbf{x}^0_{-j}]=0$ and
$\mathbb{E}[\Delta^2\mid \mathbf{x}^0_{-j}]=\sigma_j^2$.

\vspace{1em}
\noindent\textbf{Step 3: Bound the third-order remainder.} 
Under Assumption~\ref{assump:reg}(i), there exists $r>0$ such that
$M_j:=\sup_{|z-\mu_j|\le r}\big|g_\theta^{(3)}(z)\big|<\infty,$ (a smoothness constant).
Since we have $|\xi_j-\mu_j|\le |\Delta|\le r$,  $|g_\theta^{(3)}(\xi_j)|\le M_j$ and
$|R_\theta(\Delta)|\le \frac{M_j}{6}\,|\Delta|^3.$ 
Taking conditional expectation and using Assumption~\ref{assump:reg}(ii) (finite third moment) yields
\begin{equation}
\label{eq:step3_remainder_bound}
\Big|\mathbb{E}\!\left[R_\theta(\Delta)\mid \mathbf{x}^0_{-j}\right]\Big|
\;\le\;
\frac{M_j}{6}\,\mathbb{E}\!\left[|\Delta|^3\mid \mathbf{x}^0_{-j}\right],
\end{equation}
which is the local remainder of approximation.
Combining Eq.~\eqref{eq:step2_delta_condexp} with \eqref{eq:step3_remainder_bound} justifies the local approximation.

\vspace{1em}
\noindent\textbf{Remark (relative error scaling).}
Relative to the second-order term scale $\sigma_j^2$, the approximation error is controlled by
$M_j\,\sigma_j$ up to a standardized third absolute moment:
\[
\frac{\big|\mathbb{E}[R_\theta(\Delta)\mid \mathbf{x}^0_{-j}]\big|}{\sigma_j^2}
\;\lesssim\;
M_j\,\sigma_j \cdot \frac{\mathbb{E}[|\Delta|^3\mid \mathbf{x}^0_{-j}]}{\sigma_j^3}.
\]
In particular, when the conditional distribution is not too heavy-tailed so that
$\mathbb{E}[|\Delta|^3]/\sigma_j^3$ is moderate, the relative approximation error primarily scales with the local smoothness
constant $M_j$ and the conditional spread $\sigma_j$.

\vspace{1em}

\noindent\textbf{Step 4: Bias--variance decomposition.} 
Recall that for a fixed completion $z_j$,
\[
g_\theta(z_j)
:=l_{-j}(\theta;\mathbf{x}^0_{-j},z_j)
=\mathbb{E}_W\Big[\big\|\phi_\theta(z_j;W)-\mathbf{x}^0_{-j}\big\|_2^2\Big],
\]
where $W=(t,\boldsymbol{\varepsilon})$ collects diffusion randomness.
Let $x:=\mathbf{x}^0_{-j}$, $\phi:=\phi_\theta(z_j;W)$, and $\bar\phi:=\bar\phi_\theta(z_j)=\mathbb{E}_W[\phi_\theta(z_j;W)]$.
Then
\begin{align*}
g_\theta(z_j)
&=\mathbb{E}_W\big[\|\phi-x\|_2^2\big]
=\mathbb{E}_W\big[\|\phi-\bar\phi+\bar\phi-x\|_2^2\big]\\
&=\mathbb{E}_W\big[\|\phi-\bar\phi\|_2^2\big]+\|\bar\phi-x\|_2^2
+2\big\langle \mathbb{E}_W[\phi-\bar\phi],\,\bar\phi-x\big\rangle\\
&=\|\bar\phi_\theta(z_j)-x\|_2^2
+\mathbb{E}_W\big[\|\phi_\theta(z_j;W)-\bar\phi_\theta(z_j)\|_2^2\big],
\end{align*}
where the cross term vanishes since $\mathbb{E}_W[\phi-\bar\phi]=0$.
Defining $s_\theta(z_j):=\mathbb{E}_W[\|\phi_\theta(z_j;W)-\bar\phi_\theta(z_j)\|_2^2]$ yields
$g_\theta(z_j)=\|\bar\phi_\theta(z_j)-\mathbf{x}^0_{-j}\|_2^2+s_\theta(z_j)$.


\vspace{1em}

\noindent\textbf{Step 5: Differentiate and plugin} 
Define $e_\theta(z_j):=\bar\phi_\theta(z_j)-\mathbf{x}^0_{-j}\in\mathbb{R}^{d-1}$, so that
$g_\theta(z_j)=\|e_\theta(z_j)\|_2^2+s_\theta(z_j)$.
We obtain
\begin{equation}
\label{eq:step5_gsecond}
g_\theta''(z_j)
=
2\big\|\bar\phi_\theta'(z_j)\big\|_2^2
+2\,e_\theta(z_j)^\top \bar\phi_\theta''(z_j)
+s_\theta''(z_j).
\end{equation}
\vspace{1em}
Combining Eq.~\eqref{eq:step2_delta_condexp} with Eq.~\eqref{eq:step5_gsecond} and evaluating at $z_j=\mu_j$ gives
\begin{align}
l^{\mathrm{RB}}(\theta;\mathbf{x}^0_{-j})
&=
g_\theta(\mu_j)
+\frac{1}{2}\sigma_j^2\,g_\theta''(\mu_j)
+\mathbb{E}\!\left[R_\theta(\Delta)\mid \mathbf{x}^0_{-j}\right] \notag\\
&=
g_\theta(\mu_j)
+\sigma_j^2\big\|\bar\phi_\theta'(\mu_j)\big\|_2^2
+\sigma_j^2\big(\bar\phi_\theta(\mu_j)-\mathbf{x}^0_{-j}\big)^\top \bar\phi_\theta''(\mu_j)
+\frac{1}{2}\sigma_j^2\,s_\theta''(\mu_j)
+\mathbb{E}\!\left[R_\theta(\Delta)\mid \mathbf{x}^0_{-j}\right].
\label{eq:step6_final}
\end{align}
Up to the third-order remainder term, the dominant correction
$\sigma_j^2\|\bar\phi_\theta'(\mu_j)\|_2^2$ penalizes sensitivity of the mean reconstruction
to the missing coordinate, with weight proportional to the oracle conditional variance $\sigma_j^2$.
\end{proof}


\newpage

\section{When Extra Supervision from AugFull Helps}
\label{app:extra}


We now instantiate a simple Gaussian model that captures the key factors controlling the AugMask--AugFull tradeoff:
(i) dependence between the missing coordinate and the observed block, (ii) missingness rate, (iii) sample size, and (iv) dimension.

\subsection{A Toy Gaussian Setup and a Surrogate Target}
\paragraph{Data model (one shared factor)}
Fix the missing coordinate index $j$ and write $\mathbf{X}^0=(\mathbf{X}^0_{-j},X^0_j)\in\mathbb{R}^{d-1}\times\mathbb{R}$.
Let $S\sim\mathcal{N}(0,1)$ be a shared latent factor, and generate
\begin{align}
X^0_{k} &= S + \eta_k, \qquad &&k\neq j,\;\;\eta_k \overset{i.i.d.}{\sim}\mathcal{N}(0,\sigma_o^2), \label{eq:toy_obs}\\
X^0_{j} &= \rho S + \eta_j, \qquad &&\eta_j \sim \mathcal{N}(0,\sigma_m^2), \label{eq:toy_mis}
\end{align}
independently of $S$, where $\rho\in\mathbb{R}$ controls the coupling between $X^0_j$ and the observed block.
Define the observed-block average
\begin{equation}
\label{eq:toy_Y}
Y := \frac{1}{d-1}\sum_{k\neq j} X^0_k
= S + \bar{\eta},
\qquad
\bar{\eta} := \frac{1}{d-1}\sum_{k\neq j}\eta_k
\sim \mathcal{N}\!\Big(0,\frac{\sigma_o^2}{d-1}\Big).
\end{equation}

\paragraph{Missingness and augmentation mismatch}
Let $M_j\sim \mathrm{Bernoulli}(1-p_{\mathrm{mss}})$ indicate whether coordinate $j$ is observed ($M_j=1$) or missing ($M_j=0$).
For simplicity we assume MCAR, i.e., $M_j$ is independent of $(S,\mathbf{X}^0)$.

When $M_j=1$, the $j$th entry is available; when $M_j=0$, we replace it with an augmented value $Z_j$.
To model mismatch of the augmentation rule in a tractable way, we assume
\begin{equation}
\label{eq:toy_Z}
Z_j := X^0_j + b + \nu,
\qquad
\nu\sim\mathcal{N}(0,\tau^2),
\end{equation}
where $b\in\mathbb{R}$ represents a systematic shift and $\tau^2\ge 0$ represents additional augmentation variance beyond the true conditional variability.
Define the effective supervised target used for coordinate $j$ as
\begin{equation}
\label{eq:toy_A_def}
A :=
\begin{cases}
X^0_j, & M_j=1,\\
Z_j, & M_j=0,
\end{cases}
\quad\Longleftrightarrow\quad
A=\rho S+\eta_j+\mathbb{I}_{M_j=0}(b+\nu).
\end{equation}
Under the MCAR assumption, the indicator $\mathbb{I}_{M_j=0}$ is independent of $(S,\bar{\eta},\eta_j)$ and of the augmentation noise $\nu$.

\paragraph{MSE objectives and a surrogate target}
We adopt a simplified \textit{shared-representation} setup: a scalar predictor $\widehat{Y}$ is used to reconstruct the observed block (so $\widehat{Y}$ is trained to match $Y$), and the $j$th coordinate is predicted as $\rho\widehat{Y}$.
Under this abstraction, the two training objectives reduce to simple quadratic forms in $\widehat{Y}$:
\begin{equation}
\label{eq:toy_obj_mask_full}
\ell_{\mathrm{Mask}}(\widehat{Y}) := (d-1)(\widehat{Y}-Y)^2,
\qquad
\ell_{\mathrm{Full}}(\widehat{Y}) := (d-1)(\widehat{Y}-Y)^2 + (\rho\widehat{Y}-A)^2.
\end{equation}
Thus, AugMask optimizes the observed-only term, whereas AugFull adds an additional supervised term on coordinate $j$ through $A$.

\begin{lemma}[Surrogate target for AugFull]
\label{lem:surrogate_target_A_d}
Let $K:=d-1+\rho^2$ and define
\begin{equation}
\label{eq:Y_tilde_A_d}
\widetilde{Y} := \frac{(d-1)Y+\rho A}{K}.
\end{equation}
Then $\ell_{\mathrm{Full}}(\widehat{Y})$ can be rewritten as
\[
\ell_{\mathrm{Full}}(\widehat{Y})
= K(\widehat{Y}-\widetilde{Y})^2 + C(Y,A),
\]
where $C(Y,A)$ does not depend on $\widehat{Y}$.
In particular, minimizing $\ell_{\mathrm{Full}}$ over $\widehat{Y}$ is equivalent to minimizing $(\widehat{Y}-\widetilde{Y})^2$.
\end{lemma}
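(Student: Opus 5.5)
The plan is to complete the square in $\widehat{Y}$. The objective $\ell_{\mathrm{Full}}(\widehat{Y}) = (d-1)(\widehat{Y}-Y)^2 + (\rho\widehat{Y}-A)^2$ is a quadratic in the scalar $\widehat{Y}$ with $Y$ and $A$ held fixed. I would first expand both squares and collect terms by powers of $\widehat{Y}$:
\[
\ell_{\mathrm{Full}}(\widehat{Y}) = \bigl(d-1+\rho^2\bigr)\widehat{Y}^2 - 2\bigl((d-1)Y+\rho A\bigr)\widehat{Y} + (d-1)Y^2 + A^2 .
\]
The leading coefficient is exactly $K$. The linear coefficient is $-2K\widetilde{Y}$ by the definition of $\widetilde{Y}$ in Eq.~\eqref{eq:Y_tilde_A_d}.

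Next I would factor out $K$. This requires $K>0$, which holds whenever $d\ge 2$; the case $d=1$ with $\rho=0$ is degenerate and I would exclude it explicitly. Factoring gives $K(\widehat{Y}^2 - 2\widetilde{Y}\widehat{Y}) + (d-1)Y^2 + A^2 = K(\widehat{Y}-\widetilde{Y})^2 - K\widetilde{Y}^2 + (d-1)Y^2 + A^2$. So I would set
\[
C(Y,A) := (d-1)Y^2 + A^2 - \frac{\bigl((d-1)Y+\rho A\bigr)^2}{K},
\]
which depends only on $(Y,A)$ and not on $\widehat{Y}$.

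Optionally, I would also simplify $C$ to the form $\frac{(d-1)(\rho Y - A)^2}{K}$ as a sanity check that $C\ge 0$. This step is not needed for the claim.

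The equivalence of the minimization problems then follows at once: since $K>0$ and $C$ is constant in $\widehat{Y}$, $\ell_{\mathrm{Full}}$ is a positive multiple of $(\widehat{Y}-\widetilde{Y})^2$ plus a constant. Hence both functions have the same minimizer $\widehat{Y}=\widetilde{Y}$, and more generally they induce the same ordering over candidate predictors. The argument has no real obstacle; the only care points are keeping the coefficients straight and stating the positivity of $K$.
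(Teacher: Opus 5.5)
Your proposal is correct and matches the paper's proof: expand, read off the leading coefficient $K$ and the linear coefficient $-2K\widetilde{Y}$, complete the square, and take $C(Y,A)=(d-1)Y^2+A^2-K\widetilde{Y}^2$, which is exactly your expression. The paper leaves two of your points implicit: that $K>0$ is needed both for $\widetilde{Y}$ to be defined and for the minimization equivalence, and that $C$ simplifies to $(d-1)(\rho Y-A)^2/K$; both are correct.
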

\begin{proof}
Expand and collect terms in $\widehat{Y}$:
\[
(d-1)(\widehat{Y}-Y)^2 + (\rho\widehat{Y}-A)^2
=(d-1+\rho^2)\widehat{Y}^2 -2\widehat{Y}\big((d-1)Y+\rho A\big) + \big((d-1)Y^2 + A^2\big).
\]
Let $K=d-1+\rho^2$ and $\widetilde{Y}=\frac{(d-1)Y+\rho A}{K}$. Then
\[
K\widehat{Y}^2 -2K\widehat{Y}\widetilde{Y}
=K(\widehat{Y}-\widetilde{Y})^2 -K\widetilde{Y}^2,
\]
which yields the result with
$
C(Y,A):=(d-1)Y^2 + A^2 - K\widetilde{Y}^2,
$
independent of $\widehat{Y}$.
\end{proof}

\paragraph{Implication.}
Lemma~\ref{lem:surrogate_target_A_d} shows that AugFull implicitly trains $\widehat{Y}$ toward the surrogate target $\widetilde{Y}$, which mixes the observed signal $Y$ and the supervised/augmented signal $A$.
AugMask, in contrast, trains directly toward $Y$.
The next section compares these two targets under the toy metric $\mathbb{E}[(\widehat{Y}-S)^2]$ and yields an explicit crossover condition.

\subsection{Derivation of the Crossover Inequality}
\label{app:crossover_inequality}

We analyze a stylized setting to characterize when the observed-only objective (AugMask) can be preferable to supervising all coordinates (AugFull).
As a proxy for representation quality, we evaluate the Mean Squared Error (MSE) with respect to the latent factor $S$:
\[
\mathrm{MSE}(\widehat{Y}) \;:=\; \mathbb{E}\!\left[(\widehat{Y}-S)^2\right].
\]
For tractability, we compare the \emph{per-sample minimizers} of the quadratic objectives, treating $\widehat{Y}$ as a free scalar for each input.
Under this simplification,
\begin{itemize}
    \item \textbf{AugMask estimator:} $\widehat{Y}_{\mathrm{Mask}} = Y$.
    \item \textbf{AugFull estimator:} $\widehat{Y}_{\mathrm{Full}} = \widetilde{Y}$, where $\widetilde{Y}$ is defined in Eq.~\eqref{eq:Y_tilde_A_d}.
\end{itemize}
(When $\rho=0$, the additional AugFull term does not constrain $\widehat{Y}$ and the two objectives coincide; below we assume $\rho\neq 0$ and $d>1$.)

\paragraph{MSE of AugMask}
Using $Y=S+\bar{\eta}$ (Eq.~\eqref{eq:toy_Y}), the estimation error is
\[
Y-S=\bar{\eta}.
\]
Therefore the MSE equals the variance of the observed-block average noise:
\begin{equation}
\label{eq:mse_mask}
\mathrm{MSE}_{\mathrm{Mask}}
=\mathbb{E}[\bar{\eta}^2]
=\mathrm{Var}(\bar{\eta})
=\frac{\sigma_o^2}{d-1}.
\end{equation}

\paragraph{MSE of AugFull}
We analyze $\widetilde{Y}-S$ directly. Recall $\widetilde{Y}=\frac{(d-1)Y+\rho A}{K}$ with $K=d-1+\rho^2$.
Substituting $Y=S+\bar{\eta}$ and
\[
A=\rho S+\eta_j+\mathbb{I}_{M_j=0}(b+\nu),
\]
yields
\begin{align}
\widetilde{Y}
&=\frac{(d-1)(S+\bar{\eta})+\rho(\rho S+\eta_j+\mathbb{I}_{M_j=0}(b+\nu))}{K} \nonumber\\
&= S + \frac{(d-1)\bar{\eta}+\rho\eta_j+\rho\mathbb{I}_{M_j=0}(b+\nu)}{K},
\end{align}
and hence
\begin{equation}
\label{eq:err_full}
\widetilde{Y}-S=\frac{(d-1)\bar{\eta}+\rho\eta_j+\rho\mathbb{I}_{M_j=0}(b+\nu)}{K}.
\end{equation}
Under MCAR, the missingness indicator $\mathbb{I}_{M_j=0}$ is independent of $(S,\bar{\eta},\eta_j)$ and the augmentation noise $\nu$.
Moreover, $\bar{\eta}$, $\eta_j$, and $\nu$ are mutually independent and mean-zero.
Consequently, cross terms vanish in $\mathbb{E}[(\widetilde{Y}-S)^2]$ (since each cross term contains a mean-zero factor mutually independent), and the MSE is the sum of component \emph{second moments} normalized by $K^2$:
\begin{itemize}
    \item $\mathbb{E}[((d-1)\bar{\eta})^2]=(d-1)^2\,\mathrm{Var}(\bar{\eta})=(d-1)\sigma_o^2$.
    \item $\mathbb{E}[(\rho\eta_j)^2]=\rho^2\,\mathrm{Var}(\eta_j)=\rho^2\sigma_m^2$.
    \item $\mathbb{E}[(\rho\mathbb{I}_{M_j=0}(b+\nu))^2]
    =\rho^2\,\mathbb{E}[\mathbb{I}_{M_j=0}(b+\nu)^2]
    =\rho^2 p_{\mathrm{mss}}\,\mathbb{E}[(b+\nu)^2]
    =\rho^2 p_{\mathrm{mss}}(b^2+\tau^2)$.
\end{itemize}
Combining these terms gives
\begin{equation}
\label{eq:mse_full}
\mathrm{MSE}_{\mathrm{Full}}
=\mathbb{E}[(\widetilde{Y}-S)^2]
=\frac{(d-1)\sigma_o^2+\rho^2\sigma_m^2+\rho^2 p_{\mathrm{mss}}(b^2+\tau^2)}{(d-1+\rho^2)^2}.
\end{equation}

\subsection{Crossover condition}
AugFull is preferable under this toy metric when $\mathrm{MSE}_{\mathrm{Full}}<\mathrm{MSE}_{\mathrm{Mask}}$; otherwise AugMask is preferable.
Substituting Eqs.~\eqref{eq:mse_mask}--\eqref{eq:mse_full} and rearranging yields the following explicit inequality.

\begin{corollary}[Crossover inequality under the toy metric]
\label{cor:crossover}
Define the \emph{effective augmentation mismatch} (weighted by missing rate)
\[
\delta^2_{\mathrm{aug}} \;:=\; p_{\mathrm{mss}}(b^2+\tau^2).
\]
Assume $\rho\neq 0$ and $d>1$.
Then $\mathrm{MSE}_{\mathrm{Full}}<\mathrm{MSE}_{\mathrm{Mask}}$ holds if and only if
\begin{equation}
\label{eq:crossover_final}
\delta^2_{\mathrm{aug}}
\;<\;
\sigma_o^2\!\left(2+\frac{\rho^2}{d-1}\right)-\sigma_m^2.
\end{equation}
When the right-hand side is non-positive, \eqref{eq:crossover_final} cannot hold and AugMask is strictly preferable under this toy setup.
\end{corollary}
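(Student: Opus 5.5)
Since Eqs.~\eqref{eq:mse_mask} and \eqref{eq:mse_full} are already available, the corollary reduces to an algebraic rearrangement of $\mathrm{MSE}_{\mathrm{Full}}<\mathrm{MSE}_{\mathrm{Mask}}$. Write $n:=d-1>0$ and $K=n+\rho^2>0$. The inequality then reads
\[
\frac{n\sigma_o^2+\rho^2\sigma_m^2+\rho^2\delta^2_{\mathrm{aug}}}{K^2}<\frac{\sigma_o^2}{n}.
\]
We multiply both sides by the positive quantity $nK^2$, which preserves the direction of the inequality. This turns the statement into an equivalent polynomial inequality.

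Next we expand $K^2=n^2+2n\rho^2+\rho^4$. The right-hand side becomes $\sigma_o^2(n^2+2n\rho^2+\rho^4)$, and the term $n^2\sigma_o^2$ cancels against the matching term on the left. What remains is
\[
n\rho^2\sigma_m^2+n\rho^2\delta^2_{\mathrm{aug}}<2n\rho^2\sigma_o^2+\rho^4\sigma_o^2 .
\]
Because $\rho\neq0$ and $n>0$, we may divide by $n\rho^2>0$. This gives
\[
\sigma_m^2+\delta^2_{\mathrm{aug}}<2\sigma_o^2+\frac{\rho^2}{n}\sigma_o^2,
\]
which is exactly \eqref{eq:crossover_final}. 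Every step was a multiplication or division by a strictly positive quantity, so each step is reversible and the "if and only if" claim follows.

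For the final sentence of the corollary, note that $\delta^2_{\mathrm{aug}}=p_{\mathrm{mss}}(b^2+\tau^2)\ge0$. If the right-hand side of \eqref{eq:crossover_final} is non-positive, the strict inequality cannot hold, and the equivalence gives $\mathrm{MSE}_{\mathrm{Full}}\ge\mathrm{MSE}_{\mathrm{Mask}}$. This means AugFull offers no improvement in that regime.

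The claim of \emph{strict} preference for AugMask needs more care. Equality $\mathrm{MSE}_{\mathrm{Full}}=\mathrm{MSE}_{\mathrm{Mask}}$ occurs only when $\delta^2_{\mathrm{aug}}$ equals the right-hand side, which forces both to be $0$. So the statement is strict except in that degenerate boundary case, and I would note this qualification in the proof. The only real obstacle is this bookkeeping of positivity and strictness; the rest is routine algebra.
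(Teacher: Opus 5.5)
Your proof is correct and matches the paper's argument step for step: substitute the two MSE formulas, multiply through by $K^2(d-1)>0$, expand $K^2$, cancel $(d-1)^2\sigma_o^2$, and divide by $(d-1)\rho^2>0$. Your strictness remark goes slightly beyond the paper, whose proof never addresses the final sentence, and it is right: when the right-hand side equals $0$ and $\delta^2_{\mathrm{aug}}=0$ (e.g.\ $p_{\mathrm{mss}}=0$ or $b=\tau=0$), the two MSEs coincide, so ``strictly preferable'' holds only outside that boundary case.
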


\begin{proof}
Starting from $\mathrm{MSE}_{\mathrm{Full}}<\mathrm{MSE}_{\mathrm{Mask}}$ and substituting
\eqref{eq:mse_mask}--\eqref{eq:mse_full} gives
\[
\frac{(d-1)\sigma_o^2+\rho^2\sigma_m^2+\rho^2\delta^2_{\mathrm{aug}}}{K^2}
<
\frac{\sigma_o^2}{d-1},
\qquad K=d-1+\rho^2.
\]
Multiplying both sides by $K^2(d-1)$ yields
\[
(d-1)\big((d-1)\sigma_o^2+\rho^2\sigma_m^2+\rho^2\delta^2_{\mathrm{aug}}\big)
<
K^2\sigma_o^2.
\]
Expanding $K^2=((d-1)+\rho^2)^2=(d-1)^2+2(d-1)\rho^2+\rho^4$ and canceling $(d-1)^2\sigma_o^2$ on both sides gives
\[
(d-1)\rho^2\sigma_m^2+(d-1)\rho^2\delta^2_{\mathrm{aug}}
<
\big(2(d-1)\rho^2+\rho^4\big)\sigma_o^2.
\]
Dividing by $(d-1)\rho^2$ (using $\rho\neq 0$ and $d>1$) yields
\[
\sigma_m^2+\delta^2_{\mathrm{aug}}
<
\left(2+\frac{\rho^2}{d-1}\right)\sigma_o^2,
\]
which rearranges to \eqref{eq:crossover_final}.
\end{proof}

\paragraph{Interpretation.}
Eq.~\eqref{eq:crossover_final} gives a simple tolerance condition under the simple toy setup.
AugFull can help only when the augmentation mismatch on missing examples,
$\delta^2_{\mathrm{aug}}=p_{\mathrm{mss}}(b^2+\tau^2)$, is small enough; otherwise the additional supervised term pulls the shared representation toward a biased/noisy surrogate target.
The tolerance increases with $\sigma_o^2$ and $|\rho|$, and decreases with $\sigma_m^2$.

\newpage
\section{Implementation Details}\label{app:impl}

\subsection{Datasets}\label{app:datasets}

\begin{table}[h]
\centering
\begin{tabular}{lrrrl}
\toprule
\textbf{Dataset} & \textbf{\# Rows} & \textbf{\# Num} & \textbf{\# Cat} & \textbf{Task} \\
\midrule
Adult     & 48,842  & 6  & 9  & Classification \\
Default   & 30,000  & 14 & 11 & Classification \\
Shoppers  & 12,330  & 10 & 8  & Classification \\
Magic     & 19,019  & 10 & 1  & Classification \\
Beijing   & 43,824  & 7  & 5  & Regression     \\
News      & 39,644  & 46 & 2  & Regression     \\
\hline
\end{tabular}
\end{table}


\subsection{Preprocessing}\label{app:preprocessing}
The order of preprocessing operations (scaling vs. imputation) critically impacts downstream performance. We distinguish between two strategies:
\begin{itemize}
    \item \textbf{Impute-then-scale:} Imputes missing values first, then computes scaling parameters (e.g., mean, variance) on the completed dataset.
    \item \textbf{Scale-then-impute:} Computes scaling parameters using \textit{only observed entries}, transforms the data, and then imputes.
\end{itemize}

We adopt the \textbf{Scale-then-impute} strategy (see Figure~\ref{scale_impute}). Unlike ``Impute-then-scale,'' which allows imputed values to bias the scaling statistics (often distorting the distribution), ``Scale-then-impute'' ensures transformations are derived solely from observed data. This stabilizes the variance and yields more representative inputs for the model.

\begin{figure}[t]
    \centering
    \includegraphics[width=.7\linewidth]{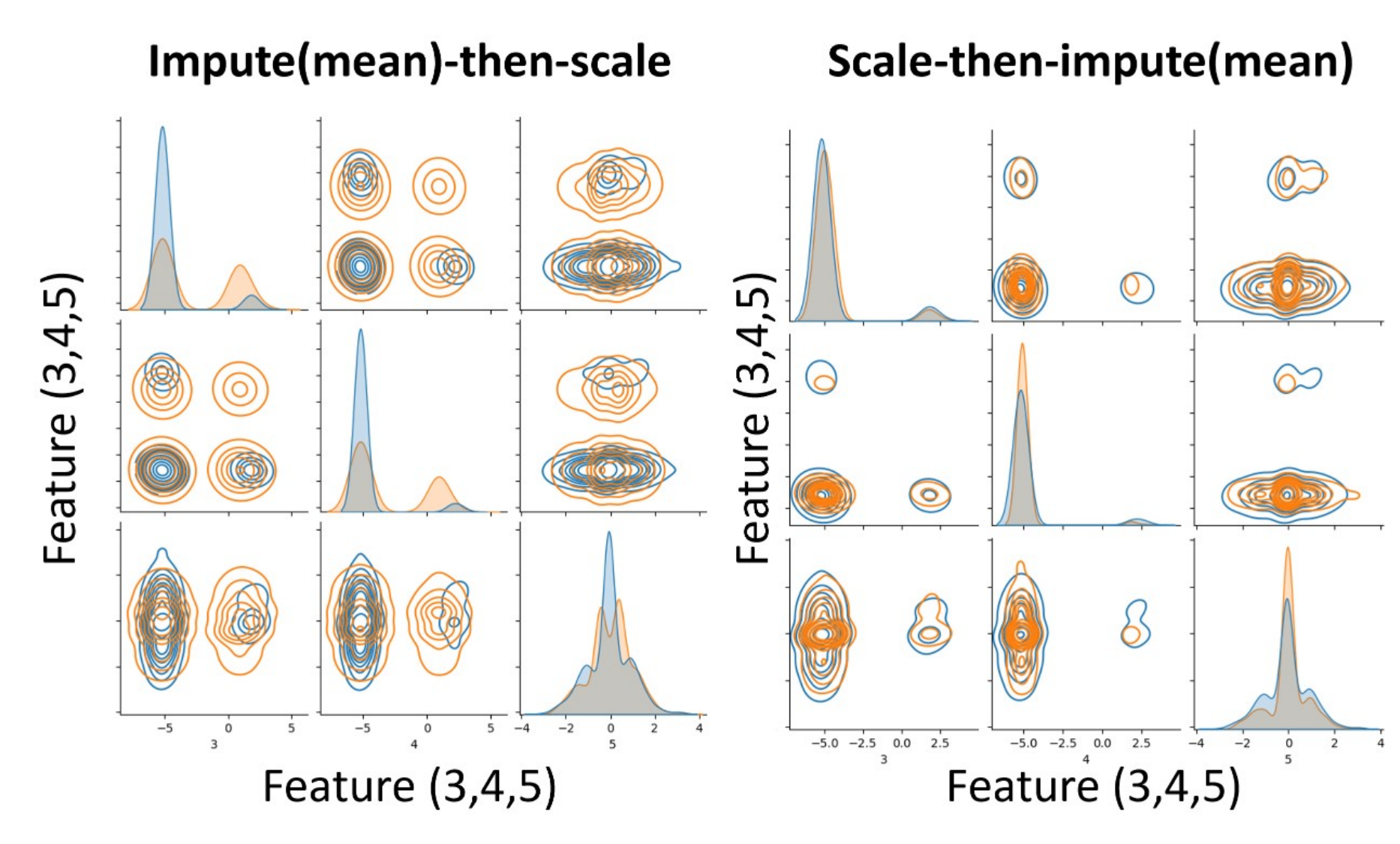}
    \caption{\textbf{Preprocessing Order.} (Left) Impute-then-scale allows imputed values to skew transformation statistics. (Right) Scale-then-impute (ours) calculates statistics solely on observed data, stabilizing the distribution.}
    \label{scale_impute}
\end{figure}

\subsubsection{Continuous variables}
Following the preprocessing setup of CDTD~\citep{mueller2025continuous}, we apply scikit-learn's QuantileTransformer to continuous variables and then standardize them to zero mean and unit variance~\citep{pedregosa2011scikit}.

\subsubsection{Categorical variables}
We encode categorical variables into integer labels using \texttt{LabelEncoder}. For models requiring embeddings (e.g., TabSyn, CDTD), we additionally use \texttt{OrdinalEncoder} to map these labels to contiguous indices.

To handle unseen categories in the validation set, we reserve extra tokens in the embedding matrix during training. At inference, unseen categories are mapped to these reserved indices and explicitly excluded from loss computation (e.g., cross-entropy).

\subsection{Missing Mechanisms \& Mask Generation}\label{app:missing}
The data generation process governing whether each entry is observed or missing~\cite{rubin1976inference} can be categorized as:
\begin{itemize}
    \item \textbf{M}issing \textbf{C}ompletely \textbf{A}t \textbf{R}andom (MCAR): the missingness is independent of the data, $\mathbf{M} \perp \mathbf{X}$.
    \item \textbf{M}issing \textbf{A}t \textbf{R}andom (MAR): the missingness depends only on observed entries, $\mathbf{M} \perp \mathbf{X}^\mathrm{mis} \mid \mathbf{X}^\mathrm{obs}$.
    \item \textbf{M}issing \textbf{N}ot \textbf{A}t \textbf{R}andom (MNAR): all other cases.
\end{itemize}

\subsection{Generating MAR Masks}
\label{app:mar}

For a target missing ratio $p$, we generate a binary mask $\mathbf{M}\in\{0,1\}^{n\times d_{\mathrm{full}}}$ on the full table
$\mathbf{X}_{\mathrm{full}}=[\mathbf{X},\mathbf{y}]$, so all columns (including $y$) share the same overall rate.
MAR masks are sampled entrywise as $M_{ij}\sim\mathrm{Bernoulli}(p_{ij})$, where $p_{ij}$ comes from a covariate-dependent
logistic propensity whose intercept is calibrated to match the marginal rate $p$.
For each $(p,\text{seed})$, we generate one mask and reuse it across all methods.
See \citet{zhang2025diffputer} for details.

\subsection{Constructing Augmented Dataset}
\label{app:aug-dist}

\subsubsection{Univariate baselines (feature-wise)}

\paragraph{Zero / Mean (deterministic).}
\textbf{Zero} uses a fixed placeholder (0 for numerical; a sentinel token for categorical features).
\textbf{Mean} uses a fixed marginal summary (mean for numerical; mode for categorical).
Both correspond to point-mass choices of $T$.

\paragraph{Noise / E-CDF (marginal stochastic sampling).}
These baselines sample each missing feature independently from its \emph{marginal} distribution estimated on observed
training entries. For numerical features, \textbf{Noise} samples from a fitted Gaussian
$z_j\sim\mathcal{N}(\hat\mu_j,\hat\sigma_j^2)$, whereas \textbf{E-CDF} samples with replacement from the empirical pool
of observed values. For categorical features, both sample
$z_j\sim\mathrm{Categorical}(\hat{\pi}_j)$ using observed class frequencies.

\subsubsection{Multivariate baselines (joint / conditional)}

\paragraph{Iterative imputers (MICE / HyperImpute).}
Both methods construct a completed dataset by iterating multivariate imputation models.
In our evaluation, we use the default configurations provided by each implementation and run the imputer once to
produce a single completion of the missing part. See \citep{shah2014comparison,jarrett2022hyperimpute}.

\begin{table}[b]
\centering
\small
\setlength{\tabcolsep}{7pt}
\begin{tabular}{ll|ll}
\toprule
\textbf{Parameter} & \textbf{Value} & \textbf{Parameter} & \textbf{Value} \\
\midrule
boosting\_type      & \texttt{gbdt}  & subsample           & 1.0 \\
num\_leaves         & 31             & subsample\_freq     & 0 \\
max\_depth          & $-1$            & colsample\_bytree   & 1.0 \\
learning\_rate      & 0.1            & reg\_alpha          & 0.0 \\
n\_estimators       & 50             & reg\_lambda         & 0.0 \\
subsample\_for\_bin & 200000         & random\_state       & 42 \\
min\_split\_gain    & 0.0            & importance\_type    & \texttt{split} \\
min\_child\_weight  & 0.001          & min\_child\_samples & 20 \\
\bottomrule
\end{tabular}
\caption{\textbf{LightGBM defaults.} Hyperparameters used for all per-feature conditional models in LGBM-D/LGBM-S.}
\label{tab:lgbm-defaults-horizontal}
\end{table}
\paragraph{LightGBM conditional modules (LGBM-D / LGBM-S).}
Both variants follow Algorithm~\ref{alg:stochastic} and fit one per-feature LightGBM predictor~\citep{ke2017lightgbm}
on observed rows ($m_{ij}=1$), conditioning on the observed context and the mask
$\mathbf{u}_i=(\tilde{\mathbf{x}}_{i,-j},\mathbf{m}_{i,-j})$ (line~7).
Including $\mathbf{m}_{i,-j}$ ensures the augmentor directly conditions on the missingness pattern.
\textbf{LGBM-D} uses point predictions (deterministic fill), whereas \textbf{LGBM-S} uses the sampling steps in
Algorithm~\ref{alg:stochastic}.

\paragraph{LightGBM hyperparameters.}
Unless stated otherwise, all LightGBM models used in LGBM-D and LGBM-S (Algorithm~\ref{alg:stochastic}) are trained
with the default settings in Table~\ref{tab:lgbm-defaults-horizontal} (shared across regression and classification heads),
with \texttt{random\_state=42} for reproducibility.

\begin{table}[!h]
\centering
\caption{\textbf{Sampling frequency ablations.} Cycling (non-stationary inputs) hurts recall, while a higher anchor
fraction mitigates the drop. Full cycling can slightly improve MLE (utility).}
\label{tab:cache_summary_compact}
\small
\begin{tabular}{lcc}
\toprule
Setting & Beta-Recall (All) $\uparrow$ & MLE $\uparrow$ \\
\midrule
LGBM-S, cycled ($n_{\mathrm{cache}}=1$, used in paper)  & 0.3305 & 0.8877 \\
LGBM-S, cycled ($n_{\mathrm{cache}}=10$) & 0.1553 & 0.8877 \\
LGBM-S, cycled ($n_{\mathrm{cache}}=30$) & 0.1679 & 0.8924 \\
\midrule
Anchor + cycle ($n_{\mathrm{cache}}=30$), 100\% Cycle              & 0.2125 & 0.8978 \\
Anchor + cycle ($n_{\mathrm{cache}}=30$), 50\% Fixed / 50\% Cycle  & 0.2481 & 0.8906 \\
Anchor + cycle ($n_{\mathrm{cache}}=30$), 80\% Fixed / 20\% Cycle  & 0.2724 & 0.8884 \\
\bottomrule
\end{tabular}
\end{table}

\subsubsection{How many times do we sample? (fixed vs.\ cycled cached augmentations)}
\label{app:aug-sampling-frequency}

\paragraph{Setup: what is a ``cached augmentation''?}
For a stochastic augmentation rule $T(\cdot\mid \mathbf{x}^{\mathrm{obs}},\mathbf{m})$ 
a \emph{cached augmentation} is a pre-constructed augmented dataset
$\mathcal{D}^{A,(k)}=\{(\mathbf{x}^{A,(k)}_i,\mathbf{m}_i)\}_{i=1}^N$ where, for each training point $i$ and each missing coordinate $j$,
we draw $z^{(k)}_{ij}\sim T(\cdot\mid \mathbf{x}^{\mathrm{obs}}_i,\mathbf{m}_i)$ once and set
$x^{A,(k)}_{ij}=m_{ij}x^{\mathrm{obs}}_{ij}+(1-m_{ij})z^{(k)}_{ij}$.
We generate $n_{\mathrm{cache}}$ such datasets $\{\mathcal{D}^{A,(k)}\}_{k=1}^{n_{\mathrm{cache}}}$ \emph{offline} and reuse them during training.

\paragraph{Default (used in the paper): sample once and keep it fixed.}
We construct a single augmented dataset $\mathcal{D}^{A,(1)}$ and keep it fixed across all epochs.
This makes the mapping from \textit{observed context} $\rightarrow$ \textit{augmented input} stationary and yields the best coverage (recall) in Table~\ref{tab:cache_summary_compact}.

\paragraph{Cycling cached augmentations changes the per-sample input across epochs.}
We also tested \emph{cycling} among $n_{\mathrm{cache}}$ cached datasets:
at the beginning of each epoch (or training pass), we switch the entire training set to one cache element
$\mathcal{D}^{A,(k)}$ (e.g., round-robin over $k\in\{1,\dots,n_{\mathrm{cache}}\}$).
Although the loss is masked to observed coordinates, the augmented values still enter the denoiser \emph{as inputs};
thus, cycling alters the per-sample inputs over time and can create a \emph{moving-target} optimization effect or the "multi-task learning challenge",
reducing fidelity/coverage (recall). When $n_{\mathrm{cache}}=1$, cycling is identical to the fixed setting.

\paragraph{Why this is consistent with ``estimating variance is enough.''}
Our theory motivates the leading effect of marginalizing a missing coordinate through its conditional moments:
the variance term controls the invariance/sensitivity correction in the local RB expansion (Prop.~3.1).
In LGBM-S, each stochastic fill is generated \emph{using} an explicit variance estimate $\hat\sigma^2(\mathbf{x}^{\mathrm{obs}},\mathbf{m})$.
Thus, the variance information already enters training through the distribution of $Z_j$; repeatedly cycling multiple imputations primarily reduces Monte Carlo noise (and can encourage exploitation) but is not \textit{required to realize the leading variance-weighted regularization effect.}
\emph{Hypothetically, if the conditional moment estimates $(\hat\mu,\hat\sigma^2)$ are misspecified, further reducing Monte Carlo noise does not reduce this bias; instead, it can make the model more consistently optimize against the same biased augmentation distribution, potentially yielding overconfident (shrunken) completions. In contrast, fixing a single stochastic draw preserves additional randomness in the training inputs, which can act as a stabilizing regularizer against such misspecification.}

\paragraph{Anchor + cycling (cache size $n_{\mathrm{cache}}=30$).}
To isolate the role of input drift, we mix a fixed \emph{anchor} augmentation $\mathcal{D}^{A,\mathrm{anc}}$
with a cycled cache $\{\mathcal{D}^{A,(k)}\}_{k=1}^{30}$: each epoch uses the anchor with probability $\rho$ and otherwise uses a cache element.
As $\rho$ decreases (more cycling), recall drops (Table~\ref{tab:cache_summary_compact}), supporting the moving-target explanation.

\paragraph{Utility--coverage tradeoff and future prospects.}
We observe a mild tradeoff: more cycling can slightly improve utility (MLE) but tends to reduce recall (coverage).
A promising direction is a \emph{curriculum} (high $\rho$ early, lower $\rho$ later) or multi-objective schedules that adapt $\rho$ (or $n_{\mathrm{cache}}$) to target a desired utility--coverage balance, potentially with stronger conditional models or calibrated uncertainty estimators.

\subsection{Baseline Tabular Generative Models}\label{app:baselines}
\subsubsection{Missing-Aware}\label{app:baseline-aware}

\begin{enumerate}
    \item \textbf{MIWAE} \cite{mattei2019miwae}: We have modified the MIWAE imputation plugin from \url{https://github.com/vanderschaarlab/hyperimpute}, which fits $p_\theta(\mathrm{x}^{\mathrm{obs}}|\mathrm{z})$ and $q_{\gamma}(\mathrm{z}|\mathrm{x}^{\mathrm{obs}})$ so that it can be refactored into a generative model. We sample $\mathrm{z}$ from the prior distribution $p(z)$ and generate $ \textbf{x}= p_\theta(\textbf{x}|\textbf{z})$.
    For training, we set \texttt{n\_epochs} to 800 (default: 500), \texttt{latent\_size} to 64 (default: 1), \texttt{n\_hidden} to 256 (default: 1), and $K=40$ (default: 20).
    This configuration resulted in a substantially improved results, which may be different from the findings of \citet{zhang2025diffputer}, where thy used the default hyperparameters.
    Increasing these hyperparameters values further did not improve performance, but only increased computation time and memory usage.
    
    \item \textbf{MissDiff} We use the implementation provided in the supplementary material at \url{https://openreview.net/forum?id=PyyoSwPaSa}, which is a variant of TabCSDI~\cite{zheng2022diffusion} (see \url{https://github.com/pfnet-research/TabCSDI}).
    To align the model size with other methods, we set \texttt{batchsize} to 64, \texttt{diffusion\_embedding\_dim} and \texttt{timeembed} to 1024, \texttt{layers} to 5, and \texttt{channels} to 256.
    We observed that increasing \texttt{batchsize} beyond 128 caused memory issues, so we fixed it at 64. 
    The number of training epochs was set to 250.
    
    \item \textbf{ForestDiff} \cite{jolicoeur2024generating} We use the default implementation from \url{https://github.com/SamsungSAILMontreal/ForestDiffusion}, setting $n_t = 20$ and $\mathrm{duplicate}_K = 10$ as recommended by the authors to reduce training time. 
    The default parameters ($n_t = 50$, $\mathrm{duplicate}_K = 100$) did not converge within 3600 seconds.
    For the \textit{news} dataset (48 columns), we additionally apply chunking with a size of 8,000, as processing all samples simultaneously failed to converge within 3600 seconds.

    \item \textbf{DiffPuter}~\citep{zhang2025diffputer}.
    The official implementation (\url{https://github.com/hengruizhang98/DiffPuter}) applies binary encoding for categorical variables, while the paper describes one-hot encoding. We tested both and found that binary encoding can introduce imputation artifacts (e.g., spurious categories) and consistently underperformed one-hot encoding. We therefore follow the paper and use one-hot encoding for categoricals.    
    The paper does not explicitly state whether parameters are carried across EM iterations; the official code
    re-initializes the model at the beginning of each E-step, which we adopt. Finally, to guard against early-stage divergence we monitor validation loss during EM and use it as a safety check (early termination / checkpoint selection).
    
\end{enumerate}

\subsubsection{Missing-Unaware}\label{app:baseline-unaware}
Note that for ARF, CTGAN, TVAE, TabSyn, and CDTD, we follow the experimental setup described in \cite{mueller2025continuous}. Unless stated otherwise, we use the official/default configurations provided by each baseline and keep the training budget fixed (30k steps, batch size 4096) for consistency across methods.

\begin{enumerate}
    \item \textbf{ARF} \cite{watson2023adversarial}: We use the authors’s suggested default hyperparameters. In particular, we use 20 trees, $\delta$ = 0 and a minimum node size of 5. We follow the official package implementation
and set the maximum number of iterations to 10 (see https://github.com/bips-hb/arfpy). 

    \item \textbf{CTGAN} \cite{xu2019modeling}: We follow the popular implementation in the Synthetic Data Vault package (see https://github.com/sdv-dev/CTGAN). 
    For this model to work, the batch size must be divisible by 10. 
    Therefore, we adjust the batch size if necessary. We use a 256-dimensional embedding (instead of the default embedding dimension of 128) to better align the CTGAN architecture with TVAE, TabSyn and CDTD.
    \item \textbf{TVAE} \cite{xu2019modeling}: We again follow the implementation in the Synthetic Data Vault. We use a 256-dimensional embedding to better align the architecture with CTGAN, TabSyn and CDTD.
    \item \textbf{TabSyn} \cite{zhang2023mixed}: We use the default hyperparameters as suggested by the authors. The training steps that go towards training the VAE and the denoising network follow the proportions given in the official code (see https://github.com/amazon-science/tabsyn). To improve comparability to CDTD, we use the same neural network architecture as TabDDPM, which only differs slightly from the original architecture. We leave the VAE untouched.
    \item \textbf{TabDDPM}~\citep{kotelnikov2023tabddpm}:
    3-layer MLP with 256 hidden units; 1000 diffusion steps with cosine scheduler and time embedding dim 128.
    We train for 30k steps with Adam (lr $2\cdot 10^{-4}$) and EMA decay 0.999; sampling batch size is 2000.

    \item \textbf{TabDiff}~\citep{shi2025tabdiff}:
    Transformer-based denoiser with 2 layers ($d_{\text{token}}=4$, 1 head) and a 5-layer MLP head (801 units),
    time embedding dim 256. Diffusion uses 50 timesteps with EDM-style preconditioning (e.g., $\sigma_{\min}=0.002$,
    $\sigma_{\max}=80$, $\sigma_{\text{data}}=1.0$). We train for 30k steps with Adam (lr $10^{-3}$) and EMA decay 0.997; we use 200 generation steps at sampling time.

    \item \textbf{CDTD} \cite{mueller2025continuous}: We set the hidden layers to 796, number of layers to 5, dimension of the MLP/time embedding to 256. Also, as CDTD uses the EDM framework \cite{karras2022elucidating}, with   $\sigma_{\mathrm{min}}^{\mathrm{cat}} = 0, 
 \sigma_{\mathrm{max}}^{\mathrm{cat}} = 100, 
  \sigma_{\mathrm{min}}^{\mathrm{cont}} = 0,
  \sigma_{\mathrm{max}}^{\mathrm{cont}} = 80, 
  \sigma_{\mathrm{data}}^{\mathrm{cat}} = 1.0, 
  \sigma_{\mathrm{data}}^{\mathrm{cont}} = 1.0. $, we do not change this. Also, we use the learned noise-schedule configuration for \textit{by-type}, which uses common noise-scheduling for the same data-types.
  For further details please refer to the Appendix in ~\citep{mueller2025continuous}.

\end{enumerate}

\subsection{Applying Loss Masking to Tabular Diffusion Models}
\label{app:loss-masking}

This section describes how we implement \emph{loss masking} when training tabular diffusion models on incomplete inputs.
Let $b \in \{1,\dots,B\}$ index samples in a minibatch and $j \in \{1,\dots,d\}$ index features.
We write $m_{b,j}\in\{0,1\}$ for the observation mask ($m_{b,j}=1$ if feature $j$ is observed in sample $b$).
When we need type-specific masking, we use $m_{b,j}^{\mathrm{cont}}$ and $m_{b,j}^{\mathrm{cat}}$ (equal to $m_{b,j}$ on numerical / categorical features, and $0$ otherwise).

\paragraph{Mask-then-reduce.}
All implementations follow the same pattern:
(i) compute a per-feature loss $\ell_{b,j}$, (ii) multiply by the mask, and (iii) normalize using either
\emph{per-sample} or \emph{global} normalization.
With a small constant $\varepsilon>0$ for stability, we use:
\begin{align}
\mathrm{Mean}_{\mathrm{ps}}(\ell; m)
&:= \frac{1}{B}\sum_{b=1}^B
\frac{\sum_{j=1}^d m_{b,j}\,\ell_{b,j}}{\sum_{j=1}^d m_{b,j}+\varepsilon},
\\
\mathrm{Mean}_{\mathrm{gl}}(\ell; m)
&:= \frac{\sum_{b=1}^B\sum_{j=1}^d m_{b,j}\,\ell_{b,j}}{\sum_{b=1}^B\sum_{j=1}^d m_{b,j}+\varepsilon}.
\end{align}
Per-sample normalization keeps each row’s contribution comparable even if rows have different missingness rates, while global normalization keeps the overall gradient scale stable across batches.

\subsubsection{TabDDPM (hybrid Gaussian + multinomial diffusion)}

\textbf{Design.}
TabDDPM models numerical features with Gaussian diffusion and categorical features with multinomial diffusion, using a shared network backbone with type-specific heads.

\textbf{Masking + reduction (per-sample normalization).}
We compute feature-wise losses and average them \emph{within each sample} over observed entries.

\begin{itemize}
\item \textbf{Numerical noise-prediction loss.}
For a numerical feature $j$ at timestep $t$:
\begin{equation}
\ell^{\mathrm{simple}}_{b,j}
= \left\| \epsilon_{b,j} - \epsilon_{\theta}(x_{t,b}, t)_{j} \right\|^{2},
\qquad
L_{\mathrm{simple}}
= \mathrm{Mean}_{\mathrm{ps}}\!\big(\ell^{\mathrm{simple}};\, m^{\mathrm{cont}}\big).
\end{equation}

\item \textbf{Categorical VLB/KL term.}
For a categorical feature $j$:
\begin{equation}
\ell^{\mathrm{vlb}}_{b,j}
= \mathbb{D}_{\mathrm{KL}}\!\Big(
q(x_{t-1,b,j}\mid x_{t,b,j}, x_{0,b,j})
\;\big\|\;
p_{\theta}(x_{t-1,b,j}\mid x_{t,b,j})
\Big),
\qquad
L_{\mathrm{vlb}}
= \mathrm{Mean}_{\mathrm{ps}}\!\big(\ell^{\mathrm{vlb}};\, m^{\mathrm{cat}}\big).
\end{equation}
\end{itemize}

The (masked) TabDDPM training objective is then $ \mathcal{L}_{\mathrm{TabDDPM}} = L_{\mathrm{simple}} + L_{\mathrm{vlb}}$ (up to any model-specific scalar weights).

\subsubsection{TabDiff (joint continuous-time diffusion with a Transformer)}

\textbf{Design.}
TabDiff uses a Transformer-based denoiser. Numerical features are trained with an EDM-style regression loss, while categorical features use an absorbing/noising mechanism and a discrete likelihood term.

\textbf{Masking + reduction (global normalization).}
We sum masked losses over the batch and divide by the total number of observed entries (separately per type).

\begin{itemize}
\item \textbf{Numerical EDM-style loss.}
\begin{equation}
\ell^{\mathrm{edm}}_{b,j}
= w(\sigma_t)\,\big\| D_{\theta}(x_{t,b}, \sigma_t)_{j} - x_{0,b,j} \big\|^{2},
\qquad
L_{\mathrm{edm}}
= \mathrm{Mean}_{\mathrm{gl}}\!\big(\ell^{\mathrm{edm}};\, m^{\mathrm{cont}}\big).
\end{equation}

\item \textbf{Categorical absorbing / likelihood loss.}
We use the negative log-likelihood form:
\begin{equation}
\ell^{\mathrm{abs}}_{b,j}
= w_t^{\mathrm{elbo}}\;\big(-\log p_{\theta}(x_{0,b,j}\mid x_{t,b,j})\big),
\qquad
L_{\mathrm{abs}}
= \mathrm{Mean}_{\mathrm{gl}}\!\big(\ell^{\mathrm{abs}};\, m^{\mathrm{cat}}\big).
\end{equation}
\end{itemize}

Finally,
\begin{equation}
\mathcal{L}_{\mathrm{TabDiff}} = L_{\mathrm{edm}} + L_{\mathrm{abs}}.
\end{equation}

\subsubsection{CDTD (continuous diffusion with calibrated mixed-type losses)}

\textbf{Design.}
CDTD maps all features into a continuous diffusion space (categoricals via embeddings / score interpolation) and uses auxiliary networks (e.g., \textit{TimeWarping}, \textit{WeightNet}) to stabilize training by matching and reweighting loss magnitudes.

\textbf{Masking.}
We mask (i) the main per-feature diffusion loss and (ii) auxiliary objectives that regress to that per-feature loss; otherwise, missing features would contribute spurious gradients via nonzero auxiliary outputs.

\paragraph{Calibrated per-feature loss.}
Let $\ell^{\mathrm{mse}}_{b,j}$ be the numerical MSE-type loss and $\ell^{\mathrm{ce}}_{b,j}$ be the categorical CE-type loss (as used by the model).
We apply a feature weight $\omega_j$ to balance heterogeneous scales; for categorical features, $\omega_j$ is typically based on an entropy-like normalization constant $Z_j$:
\begin{equation}
\omega_{j} =
\begin{cases}
1/Z_{j}, & \text{if $j$ is categorical},\\
1, & \text{if $j$ is numerical}.
\end{cases}
\end{equation}
Then
\begin{equation}
\ell^{\mathrm{calib}}_{b,j}
=
\omega_j\cdot
\begin{cases}
\ell^{\mathrm{ce}}_{b,j}, & j\ \text{categorical},\\
\ell^{\mathrm{mse}}_{b,j}, & j\ \text{numerical},
\end{cases}
\qquad
L^{\mathrm{calib}}_{b,j} = m_{b,j}\,\ell^{\mathrm{calib}}_{b,j}.
\end{equation}

\paragraph{Auxiliary losses (masked).}
Let $\widehat{L}^{\mathrm{calib}}_{b,j}$ denote the auxiliary prediction of the calibrated loss (for timewarping),
and let $\mathrm{reweight}_t$ denote the scalar output used for time-dependent reweighting.
We mask both auxiliary terms:
\begin{align}
L^{\mathrm{warp}}_{b,j}
&= m_{b,j}\;\frac{\big(\widehat{L}^{\mathrm{calib}}_{b,j}-L^{\mathrm{calib}}_{b,j}\big)^2}{p(t)+\varepsilon},
\\
L^{\mathrm{weight}}_{b,j}
&= m_{b,j}\;\Big(\exp(\mathrm{reweight}_t)-L^{\mathrm{calib}}_{b,j}\Big)^2.
\end{align}

\paragraph{Total objective.}
We reduce by the number of observed entries to keep the overall scale comparable across different missingness rates:
\begin{equation}
\mathcal{L}_{\mathrm{CDTD}}
=
\frac{\sum_{b=1}^B\sum_{j=1}^d
\Big(
L^{\mathrm{calib}}_{b,j}
+ L^{\mathrm{warp}}_{b,j}
+ L^{\mathrm{weight}}_{b,j}
\Big)}
{\sum_{b=1}^B\sum_{j=1}^d m_{b,j}+\varepsilon}.
\end{equation}

\subsection{Hardware Specifications}\label{app:hardware}
\begin{table}[h]
\centering
\begin{tabular}{ll}
\toprule
\textbf{Component}    & \textbf{Details} \\ \midrule
GPU      & NVIDIA RTX 4090 (24GB), Driver 550.127.05 \\
CPU      & AMD EPYC 9254 24-Core Processor \\
RAM      & 256 GB \\
OS       & Ubuntu 22.04.3 LTS (Linux 5.15.0-105-generic) \\
PyTorch  & 2.2.2 \\
CUDA     & 12.1 \\
cuDNN    & 8902 \\
\bottomrule
\end{tabular}
\caption{System specifications used for all experiments.}
\end{table}

\subsection{Evaluation Metrics}\label{app:metrics}

Let $\mathcal{D}_{r}=\{\mathbf{x}^{(i)}\}_{i=1}^{n_r}$ be the real dataset and
$\mathcal{D}_{s}=\{\tilde{\mathbf{x}}^{(i)}\}_{i=1}^{n_s}$ be the synthetic set.
We report \textbf{Shape}, \textbf{Trend}, \textbf{$\alpha$-Precision}, \textbf{$\beta$-Recall}, and \textbf{MLE} using our TMTC protocol in Fig.~\ref{fig:TMTC}.

\paragraph{(1) Shape (column-wise density match).}
For each numerical column $j\in\mathcal{J}_{\mathrm{num}}$, let $F^r_j$ and $F^s_j$ be the empirical CDFs
from $\mathcal{D}_{r}$ and $\mathcal{D}_{s}$. The Kolmogorov--Smirnov distance is
\begin{equation}
\mathrm{KST}_j := \sup_{x}\bigl|F^r_j(x)-F^s_j(x)\bigr|.
\end{equation}
For each categorical column $j\in\mathcal{J}_{\mathrm{cat}}$ with category set $K_j$, let
$R_j(\omega)$ and $S_j(\omega)$ be empirical frequencies. The total-variation distance is
\begin{equation}
\mathrm{TVD}_j := \frac{1}{2}\sum_{\omega\in K_j}\bigl|R_j(\omega)-S_j(\omega)\bigr|.
\end{equation}
We aggregate them into a single similarity score (larger is better) by
\begin{equation}
\mathrm{Shape}
:= 1-\frac{1}{d_{\mathrm{full}}}
\left(
\sum_{j\in\mathcal{J}_{\mathrm{num}}}\mathrm{KST}_j
+
\sum_{j\in\mathcal{J}_{\mathrm{cat}}}\mathrm{TVD}_j
\right).
\end{equation}

\paragraph{(2) Trend (pair-wise dependency match).}
For numerical columns $a,b\in\mathcal{J}_{\mathrm{num}}$, let $\rho^r_{a,b}$ and $\rho^s_{a,b}$
denote Pearson correlation coefficients computed on $\mathcal{D}_r$ and $\mathcal{D}_s$.
The Pearson discrepancy (averaged over pairs) is
\begin{equation}
\mathrm{PearsonScore}
:=\frac{1}{2}\,\mathbb{E}_{(a,b)}\bigl|\rho^r_{a,b}-\rho^s_{a,b}\bigr|.
\end{equation}
For categorical columns $A,B\in\mathcal{J}_{\mathrm{cat}}$, let $R_{\alpha,\beta}$ and $S_{\alpha,\beta}$
be the empirical joint frequencies in the contingency tables. The contingency discrepancy is
\begin{equation}
\mathrm{ContingencyScore}
:=\frac{1}{2}\sum_{\alpha\in A}\sum_{\beta\in B}\bigl|R_{\alpha,\beta}-S_{\alpha,\beta}\bigr|.
\end{equation}

\paragraph{(3) $\alpha$-Precision / (4) $\beta$-Recall (sample-wise fidelity and coverage).}
We report the sample-wise metrics of \citet{alaa2022faithful}: $\alpha$-\textsc{Precision} measures \emph{fidelity},
i.e., the fraction of synthetic samples that lie within the support of the real data (plausible samples), while
$\beta$-\textsc{Recall} measures \emph{coverage}, i.e., the extent to which the synthetic set covers the real-data
distribution (real samples are close to some synthetic sample). Higher is better for both.

\paragraph{(5) MLE (machine learning efficiency / downstream utility).}
We evaluate downstream utility under the \textbf{TSTR} protocol: we train a fixed XGBoost classifier/regressor on
synthetic data and evaluate it on a held-out \emph{real} test set (AUROC for classification; RMSE for regression).
For comparability across datasets, we normalize by the corresponding \textbf{TRTR} score (train and test on real data),
which serves as an upper bound. Let $s_{\mathrm{TSTR}}$ and $s_{\mathrm{TRTR}}$ denote the downstream scores; we report
\[
\mathrm{MLE} =
\begin{cases}
\displaystyle \frac{s_{\mathrm{TSTR}}}{s_{\mathrm{TRTR}}}, & \text{classification (AUROC; higher is better)},\\[8pt]
\displaystyle \frac{s_{\mathrm{TRTR}}}{s_{\mathrm{TSTR}}}, & \text{regression (RMSE; lower is better)}.
\end{cases}
\]
Thus, $\mathrm{MLE}\approx 1$ indicates synthetic data matches real-data utility, while smaller values indicate a gap.

\newpage
\section{Detailed Experimental Results}\label{app:results}
\subsection{Full Results (Rank) for MAR setting}
\begin{table*}[h]
\centering
\small
\setlength{\tabcolsep}{2.5pt}
\setlength{\extrarowheight}{2pt}

\caption{
\textbf{MAR Results (90 Scenarios): Comparative Rank Analysis.} 
Cells display Average Rank ($\downarrow$) $\pm$ {\small Standard Deviation} across 90 scenarios: 6 datasets $\times$ 5 missing ratios [0.1--0.9] $\times$ 3 seeds. 
Ranks are computed per experimental block; \textbf{Summary Rank} is the mean rank across all metrics within each block. 
$\Delta$ denotes absolute rank gain over the \texttt{+NoAug} baseline. 
$^{**}$ and $^{*}$ indicate statistical \textbf{Gain} and \textbf{Match} via Wilcoxon test ($p < 0.05$) against the \textit{best-performing missing-aware baseline} per block among: MIWAE, MissDiff, DiffPuter, or ForestDiff. 
Boldface denotes results statistically superior to this entire baseline group.
}
\resizebox{\linewidth}{!}{
\begin{tabular}{l | *{4}{cc} | cc | > {}w{c}{0.9cm} >{}w{c}{0.9cm}}
\hline \hline

& \multicolumn{8}{c|}{\textbf{Generative Quality}} 
& \multicolumn{2}{c|}{\textbf{Utility}} 
& \multicolumn{2}{c}{} \\ 
\hhline{~|--------|--|~~}
\textbf{Method}

& \multicolumn{2}{c}{\textbf{$\alpha$-Precision}}
& \multicolumn{2}{c}{\textbf{$\beta$-Recall}}
& \multicolumn{2}{c}{\textbf{Trend}}
& \multicolumn{2}{c}{\textbf{Shape}}
& \multicolumn{2}{c|}{\textbf{ML Task}}
& \multicolumn{2}{c}{\multirow{-2}{*}{\textbf{Summary}}}
\\
\hhline{~|--------|--|--}

& Rank  & $\Delta$
& Rank  & $\Delta$
& Rank  & $\Delta$
& Rank  & $\Delta$
& Rank  & $\Delta$
& \textbf{Rank} & $\Delta$
\\
\hline \hline

\multicolumn{13}{l}{\textbf{(A) Missing-aware baselines}} \\
\hline
MIWAE & 7.2 $\pm$ {\small 4.0} & -- & 10.2 $\pm$ {\small 4.4} & -- & 8.9 $\pm$ {\small 4.6} & -- & 8.2 $\pm$ {\small 4.6} & -- & 10.3 $\pm$ {\small 4.5} & -- & 9.0 & -- \\
MissDiff & 16.0 $\pm$ {\small 4.8} & -- & 17.1 $\pm$ {\small 4.5} & -- & 14.1 $\pm$ {\small 3.7} & -- & 16.7 $\pm$ {\small 2.8} & -- & 16.9 $\pm$ {\small 3.0} & -- & 16.2 & -- \\
DiffPuter & 11.7 $\pm$ {\small 4.5} & -- & 14.2 $\pm$ {\small 4.2} & -- & 13.1 $\pm$ {\small 3.8} & -- & 12.6 $\pm$ {\small 3.9} & -- & 17.6 $\pm$ {\small 4.2} & -- & 13.8 & -- \\
ForestDiff & 10.9 $\pm$ {\small 4.6} & -- & 6.3 $\pm$ {\small 4.3} & -- & 9.5 $\pm$ {\small 3.5} & -- & 11.6 $\pm$ {\small 3.2} & -- & 7.8 $\pm$ {\small 3.4} & -- & 9.2 & -- \\

\hline
\multicolumn{13}{l}{\textbf{(B) Missing-unaware baselines}} \\
\hline
TVAE & 17.8 $\pm$ {\small 3.4} & -- & 18.4 $\pm$ {\small 2.2} & -- & 17.6 $\pm$ {\small 2.0} & -- & 18.7 $\pm$ {\small 2.2} & -- & 15.8 $\pm$ {\small 2.9} & -- & 17.7 & -- \\
\rowcolor{gray!10} $\quad \llcorner$ +AugFull & 11.6 $\pm$ {\small 5.0} & \textcolor{black!60!green}{(+6.3)} & 11.8 $\pm$ {\small 4.5} & \textcolor{black!60!green}{(+6.6)} & 10.2 $\pm$ {\small 3.1} & \textcolor{black!60!green}{(+7.3)} & 10.3 $\pm$ {\small 2.1} & \textcolor{black!60!green}{(+8.4)} & 10.5 $\pm$ {\small 3.6} & \textcolor{black!60!green}{(+5.3)} & 10.9 & \textcolor{black!60!green}{(+6.8)} \\
CTGAN & 13.5 $\pm$ {\small 3.7} & -- & 16.8 $\pm$ {\small 3.1} & -- & 16.1 $\pm$ {\small 2.7} & -- & 16.8 $\pm$ {\small 2.4} & -- & 17.9 $\pm$ {\small 3.1} & -- & 16.2 & -- \\
\rowcolor{gray!10} $\quad \llcorner$ +AugFull & 11.7 $\pm$ {\small 5.1} & \textcolor{black!60!green}{(+1.9)} & 13.3 $\pm$ {\small 5.0} & \textcolor{black!60!green}{(+3.5)} & 11.7 $\pm$ {\small 3.1} & \textcolor{black!60!green}{(+4.4)} & 10.5 $\pm$ {\small 3.6} & \textcolor{black!60!green}{(+6.3)} & 14.7 $\pm$ {\small 4.1} & \textcolor{black!60!green}{(+3.2)} & 12.4 & \textcolor{black!60!green}{(+3.9)} \\
ARF & 17.0 $\pm$ {\small 2.5} & -- & 18.4 $\pm$ {\small 2.0} & -- & 20.5 $\pm$ {\small 0.7} & -- & 20.2 $\pm$ {\small 0.8} & -- & 17.0 $\pm$ {\small 2.5} & -- & 18.6 & -- \\
\rowcolor{gray!10} $\quad \llcorner$ +AugFull & 8.9 $\pm$ {\small 5.9} & \textcolor{black!60!green}{(+8.1)} & 11.8 $\pm$ {\small 5.1} & \textcolor{black!60!green}{(+6.5)} & 18.2 $\pm$ {\small 3.0} & \textcolor{black!60!green}{(+2.3)} & 11.2 $\pm$ {\small 5.7} & \textcolor{black!60!green}{(+9.0)} & 11.2 $\pm$ {\small 5.7} & \textcolor{black!60!green}{(+5.8)} & 12.3 & \textcolor{black!60!green}{(+6.3)} \\
TabSyn & 14.1 $\pm$ {\small 4.0} & -- & 13.5 $\pm$ {\small 2.2} & -- & 12.5 $\pm$ {\small 3.4} & -- & 13.8 $\pm$ {\small 3.2} & -- & 12.6 $\pm$ {\small 3.6} & -- & 13.3 & -- \\
\rowcolor{gray!10} $\quad \llcorner$ +AugFull & 6.1 $\pm$ {\small 2.2}$^{*}$ & \textcolor{black!60!green}{(+8.0)} & 7.3 $\pm$ {\small 1.9} & \textcolor{black!60!green}{(+6.2)} & \textbf{3.6 $\pm$ {\small 1.9}$^{**}$} & \textcolor{black!60!green}{(+8.9)} & \textbf{4.1 $\pm$ {\small 2.5}$^{**}$} & \textcolor{black!60!green}{(+9.7)} & 6.5 $\pm$ {\small 3.2}$^{*}$ & \textcolor{black!60!green}{(+6.2)} & \textbf{5.5$^{**}$} & \textcolor{black!60!green}{(+7.8)} \\

\hline
TabDDPM & 15.3 $\pm$ {\small 3.2} & -- & 12.5 $\pm$ {\small 2.4} & -- & 15.2 $\pm$ {\small 2.3} & -- & 13.6 $\pm$ {\small 1.9} & -- & 12.8 $\pm$ {\small 2.7} & -- & 13.9 & -- \\
\rowcolor{gray!10} $\quad \llcorner$ +AugFull & 8.2 $\pm$ {\small 5.1} & \textcolor{black!60!green}{(+7.1)} & 6.5 $\pm$ {\small 3.6} & \textcolor{black!60!green}{(+6.0)} & 9.0 $\pm$ {\small 5.2}$^{*}$ & \textcolor{black!60!green}{(+6.2)} & 8.0 $\pm$ {\small 5.0}$^{*}$ & \textcolor{black!60!green}{(+5.6)} & 6.3 $\pm$ {\small 4.2}$^{*}$ & \textcolor{black!60!green}{(+6.5)} & 7.6$^{*}$ & \textcolor{black!60!green}{(+6.3)} \\
\rowcolor{blue!7} $\quad \llcorner$ \textbf{+AugMask} & 8.1 $\pm$ {\small 5.6} & \textbf{\textcolor{black!60!green}{(+7.1)}} & 5.7 $\pm$ {\small 4.1}$^{*}$ & \textbf{\textcolor{black!60!green}{(+6.9)}} & 8.9 $\pm$ {\small 4.9}$^{*}$ & \textbf{\textcolor{black!60!green}{(+6.3)}} & 6.9 $\pm$ {\small 5.0}$^{*}$ & \textbf{\textcolor{black!60!green}{(+6.7)}} & \textbf{5.2 $\pm$ {\small 5.1}$^{**}$} & \textbf{\textcolor{black!60!green}{(+7.6)}} & 6.9$^{*}$ & \textbf{\textcolor{black!60!green}{(+6.9)}} \\
CDTD & 13.1 $\pm$ {\small 3.6} & -- & 14.3 $\pm$ {\small 3.5} & -- & 13.0 $\pm$ {\small 3.0} & -- & 15.4 $\pm$ {\small 2.9} & -- & 12.3 $\pm$ {\small 4.1} & -- & 13.6 & -- \\
\rowcolor{gray!10} $\quad \llcorner$ +AugFull & 4.8 $\pm$ {\small 2.7}$^{*}$ & \textcolor{black!60!green}{(+8.3)} & 6.6 $\pm$ {\small 2.3} & \textcolor{black!60!green}{(+7.7)} & \textbf{4.4 $\pm$ {\small 2.2}$^{**}$} & \textcolor{black!60!green}{(+8.6)} & \textbf{5.1 $\pm$ {\small 2.3}$^{**}$} & \textcolor{black!60!green}{(+10.2)} & 6.0 $\pm$ {\small 3.0}$^{*}$ & \textcolor{black!60!green}{(+6.3)} & \textbf{5.4$^{**}$} & \textcolor{black!60!green}{(+8.2)} \\
\rowcolor{blue!7} $\quad \llcorner$ \textbf{+AugMask} & \textbf{3.8 $\pm$ {\small 2.5}$^{**}$} & \textbf{\textcolor{black!60!green}{(+9.3)}} & \textbf{4.4 $\pm$ {\small 3.2}$^{**}$} & \textbf{\textcolor{black!60!green}{(+9.9)}} & \textbf{3.5 $\pm$ {\small 2.0}$^{**}$} & \textbf{\textcolor{black!60!green}{(+9.5)}} & \textbf{3.9 $\pm$ {\small 2.3}$^{**}$} & \textbf{\textcolor{black!60!green}{(+11.5)}} & \textbf{5.3 $\pm$ {\small 3.0}$^{**}$} & \textbf{\textcolor{black!60!green}{(+7.0)}} & \textbf{4.2$^{**}$} & \textbf{\textcolor{black!60!green}{(+9.4)}} \\
TabDiff & 16.1 $\pm$ {\small 3.4} & -- & 11.1 $\pm$ {\small 3.3} & -- & 12.3 $\pm$ {\small 2.8} & -- & 13.2 $\pm$ {\small 3.1} & -- & 11.7 $\pm$ {\small 3.3} & -- & 12.9 & -- \\
\rowcolor{gray!10} $\quad \llcorner$ +AugFull & 6.6 $\pm$ {\small 3.7}$^{*}$ & \textcolor{black!60!green}{(+9.5)} & \textbf{4.4 $\pm$ {\small 3.7}$^{**}$} & \textcolor{black!60!green}{(+6.7)} & \textbf{3.7 $\pm$ {\small 2.0}$^{**}$} & \textcolor{black!60!green}{(+8.6)} & \textbf{4.5 $\pm$ {\small 2.5}$^{**}$} & \textcolor{black!60!green}{(+8.7)} & \textbf{5.0 $\pm$ {\small 2.8}$^{**}$} & \textcolor{black!60!green}{(+6.7)} & \textbf{4.8$^{**}$} & \textcolor{black!60!green}{(+8.0)} \\
\rowcolor{blue!7} $\quad \llcorner$ \textbf{+AugMask} & \textbf{4.7 $\pm$ {\small 4.6}$^{**}$} & \textbf{\textcolor{black!60!green}{(+11.4)}} & \textbf{3.6 $\pm$ {\small 4.5}$^{**}$} & \textbf{\textcolor{black!60!green}{(+7.5)}} & \textbf{2.9 $\pm$ {\small 2.5}$^{**}$} & \textbf{\textcolor{black!60!green}{(+9.4)}} & \textbf{3.2 $\pm$ {\small 2.9}$^{**}$} & \textbf{\textcolor{black!60!green}{(+10.0)}} & \textbf{2.9 $\pm$ {\small 2.2}$^{**}$} & \textbf{\textcolor{black!60!green}{(+8.8)}} & \textbf{3.5$^{**}$} & \textbf{\textcolor{black!60!green}{(+9.4)}} \\
\bottomrule
\end{tabular}
}
\label{app:MAR_full}
\end{table*}
\subsection{Detailed Dataset-Specific Performance Breakdown}
\label{appendix:detailed_results}

In this section, we provide a dataset-wise breakdown of the performance metrics summarized in the main paper. Each table reports the \textbf{Mean Performance Score} ($\uparrow$) $\times 100 \pm$ \textbf{Standard Deviation} aggregated across all 90 experimental scenarios (6 datasets $\times$ 5 noise ratios $\times$ 3 seeds). To quantify the impact of our proposed strategies, we report the \textbf{Relative Improvement} ($\Delta\%$) for each variant, calculated as the percentage gain over its respective base (\texttt{+NoAug}) model.
Statistical significance is assessed following the methodology in the main text. We conduct a paired \textbf{Wilcoxon signed-rank test} on the raw metric scores against the best-performing missing-aware baseline within each experimental block. A marker of $^{**}$ denotes a statistically significant gain ($p < 0.05$), while $^{*}$ indicates a statistical match. \textbf{Boldface} is applied to all scores and relative gains that achieve a statistical gain ($^{**}$) over the baseline group. The \textbf{Summary} column represents the global arithmetic mean across all datasets, serving as a measure of the overall robustness of each training strategy.


\begin{table*}[t]
\centering
\small
\setlength{\tabcolsep}{2.5pt}
\setlength{\extrarowheight}{2pt}

\caption{
\textbf{$\alpha$-Precision} (MCAR)
}
\resizebox{1.0\linewidth}{!}{

}
\end{table*}

\begin{table*}[t]
\centering
\small
\setlength{\tabcolsep}{2.5pt}
\setlength{\extrarowheight}{2pt}

\caption{
\textbf{$\beta$-Recall} (MCAR)
}

\resizebox{1.0\linewidth}{!}{
%
}
\end{table*}

\begin{table*}[t]
\centering
\small
\setlength{\tabcolsep}{2.5pt}
\setlength{\extrarowheight}{2pt}

\caption{
\textbf{Trend} (MCAR)
}
\resizebox{1.0\linewidth}{!}{
%
}
\end{table*}

\begin{table*}[t]
\centering
\small
\setlength{\tabcolsep}{2.5pt}
\setlength{\extrarowheight}{2pt}

\caption{
\textbf{Shape} (MCAR)
}
\resizebox{1.0\linewidth}{!}{
%
}
\end{table*}

\begin{table*}[t]
\centering
\small
\setlength{\tabcolsep}{2.5pt}
\setlength{\extrarowheight}{2pt}

\caption{
\textbf{ML-Task Scores}(MCAR)
}
\resizebox{1.0\linewidth}{!}{
%
}
\end{table*}

\begin{table*}[t]
\centering
\small
\setlength{\tabcolsep}{2.5pt}
\setlength{\extrarowheight}{2pt}

\caption{
\textbf{$\alpha$ Precision} (MAR)
}
\resizebox{1.0\linewidth}{!}{
%
}
\end{table*}

\begin{table*}[t]
\centering
\small
\setlength{\tabcolsep}{2.5pt}
\setlength{\extrarowheight}{2pt}

\caption{
\textbf{$\beta$-Recall} (MAR)
}
\resizebox{1.0\linewidth}{!}{
%
}
\end{table*}

\begin{table*}[t]
\centering
\small
\setlength{\tabcolsep}{2.5pt}
\setlength{\extrarowheight}{2pt}

\caption{
\textbf{Trend} (MAR)
}
\resizebox{1.0\linewidth}{!}{
%
}
\end{table*}

\begin{table*}[t]
\centering
\small
\setlength{\tabcolsep}{2.5pt}
\setlength{\extrarowheight}{2pt}

\caption{
\textbf{Shape} (MAR)
}
\resizebox{1.0\linewidth}{!}{
%
}
\end{table*}

\begin{table*}[t]
\centering
\small
\setlength{\tabcolsep}{2.5pt}
\setlength{\extrarowheight}{2pt}

\caption{
\textbf{ML Task} (MAR)
}
\resizebox{1.0\linewidth}{!}{
%
}
\end{table*}

\clearpage
\subsection{Robustness Plots (MCAR)}
\begin{figure}[!b] 
    \centering
    \resizebox{0.91\columnwidth}{!}{%
    \begin{minipage}{\linewidth} %
    
    \includegraphics[width=1.0\linewidth]{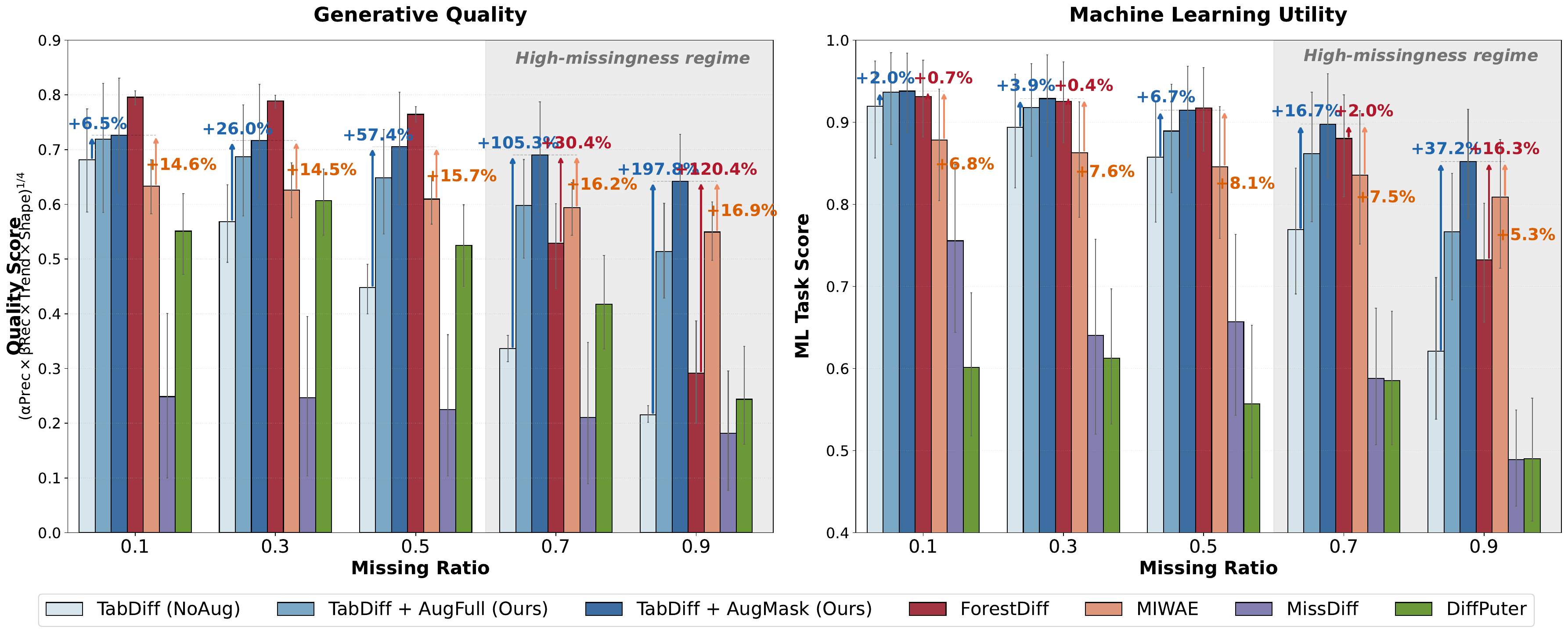}
    \caption{Robustness across missing ratios (\textbf{MCAR}) for \textbf{TabDiff} compared across all missing-aware baselines. \textbf{Generative Quality} (Left) and \textbf{Machine Learning Utility} (Right)}
    \includegraphics[width=1.0\linewidth]{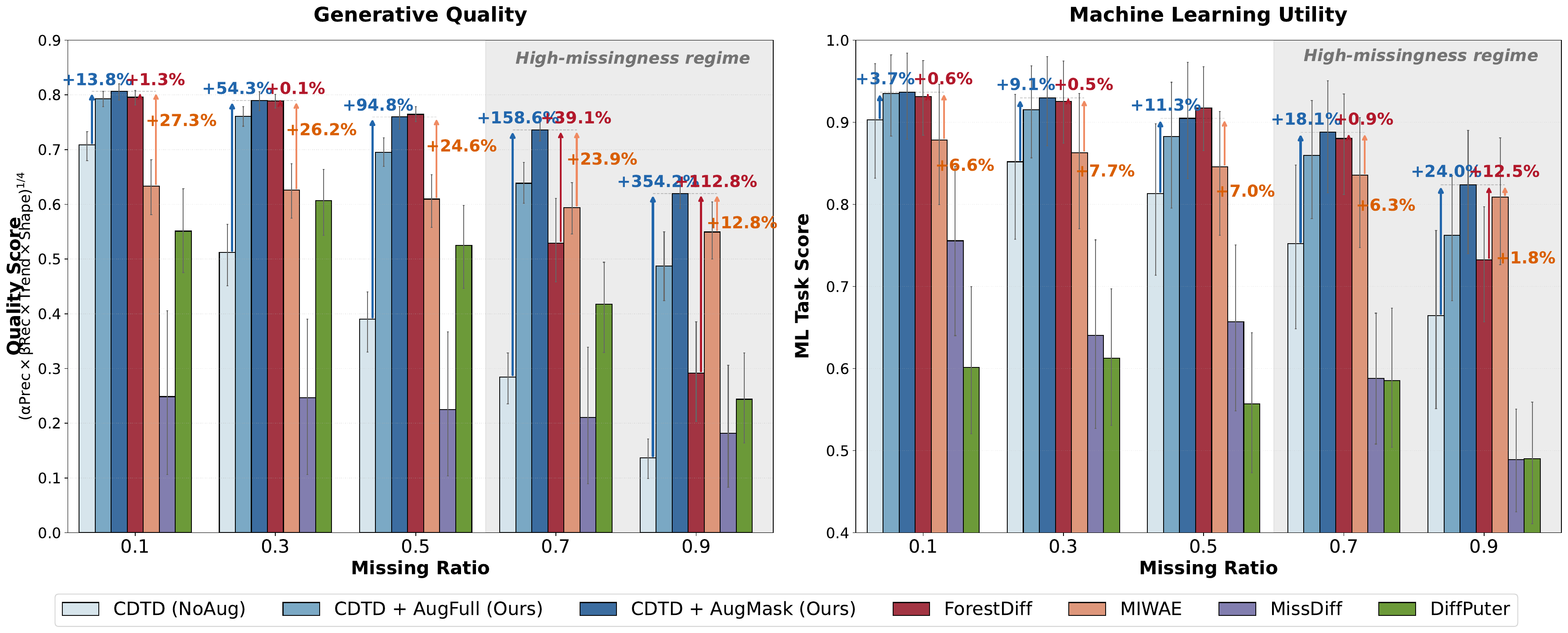}
    \caption{Robustness across missing ratios (\textbf{MCAR}) for \textbf{CDTD} compared across all missing-aware baselines. \textbf{Generative Quality} (Left) and \textbf{Machine Learning Utility} (Right)}
    
    \includegraphics[width=1.0\linewidth]{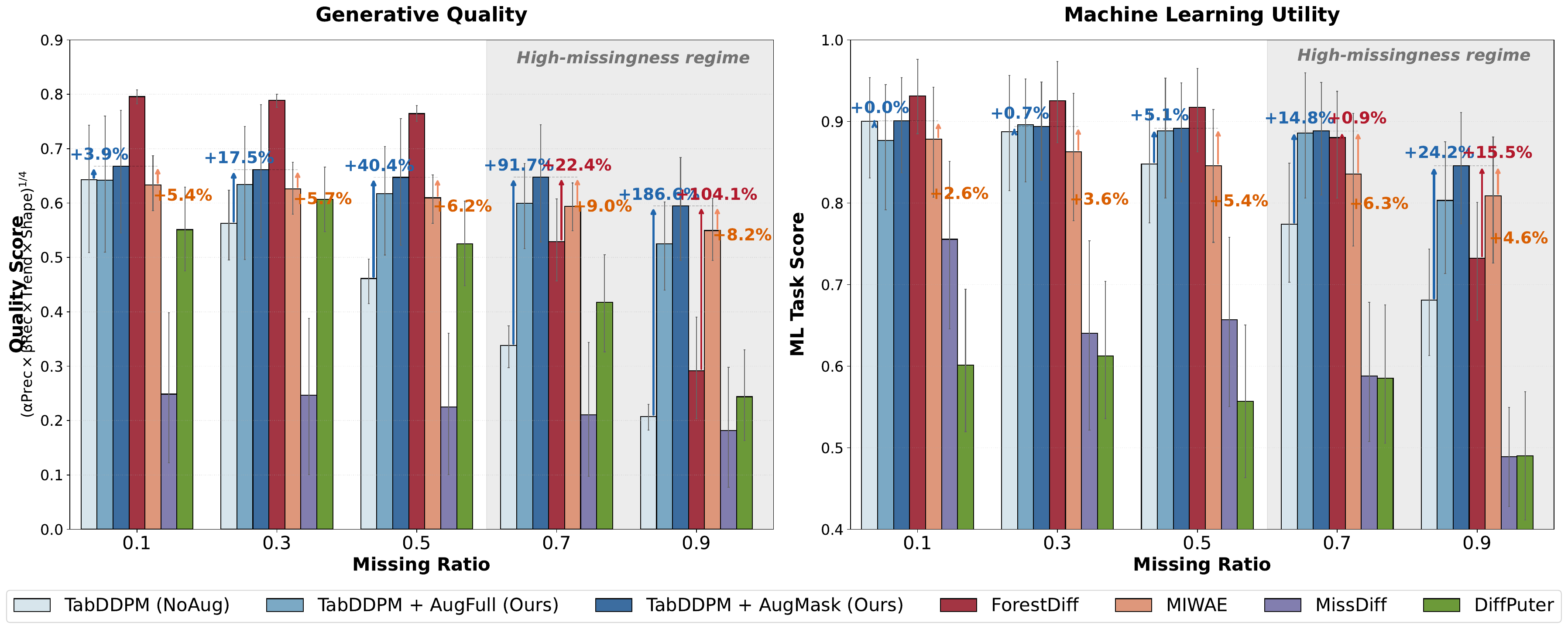}
    \caption{Robustness across missing ratios (\textbf{MCAR}) for \textbf{TabDDPM} compared across all missing-aware baselines. \textbf{Generative Quality} (Left) and \textbf{Machine Learning Utility} (Right)}
    \end{minipage}%
    }
    \label{robustness_mcar}
\end{figure}

\newpage
\subsection{Robustness Plots (MAR)}
\begin{figure}[!b] 
    \centering
    \resizebox{0.91\columnwidth}{!}{%
    \begin{minipage}{\linewidth} %
    
    \includegraphics[width=1.0\linewidth]{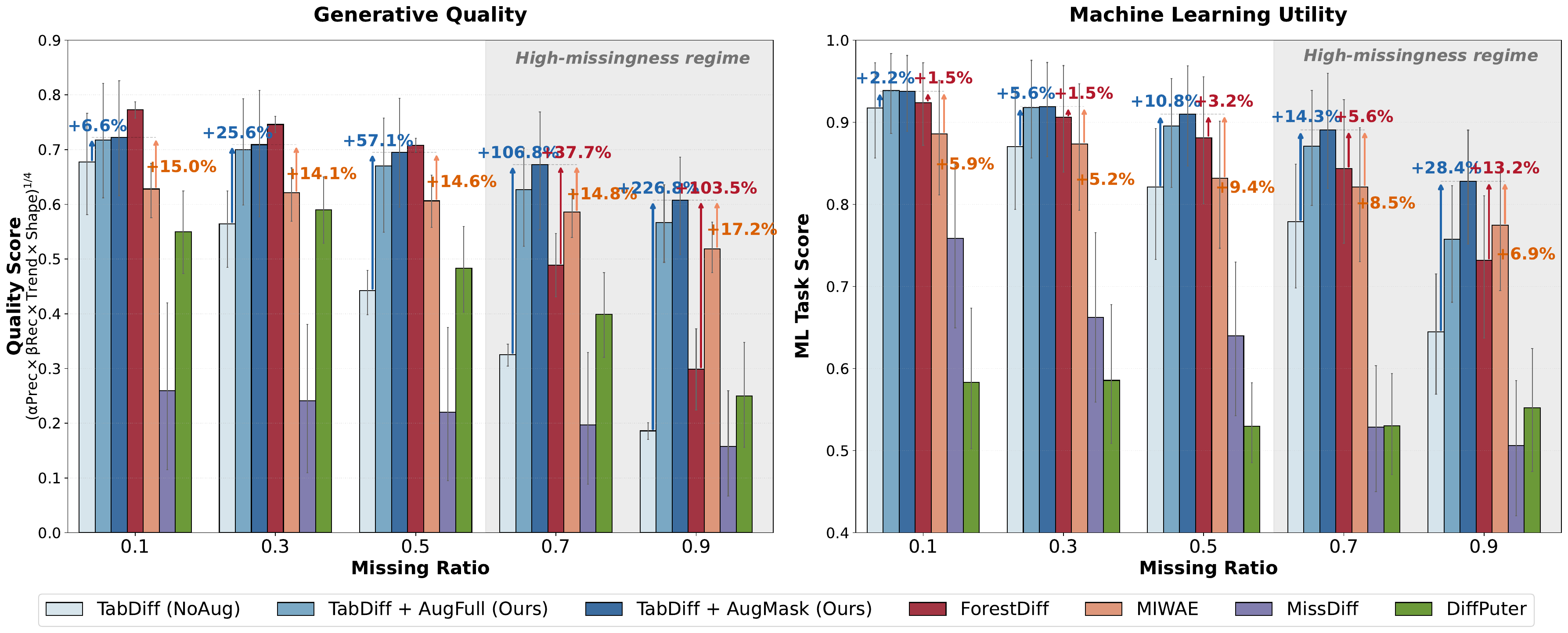}
    \caption{Robustness across missing ratios (\textbf{MAR}) for \textbf{TabDiff} compared across all missing-aware baselines. \textbf{Generative Quality} (Left) and \textbf{Machine Learning Utility} (Right)}
    \includegraphics[width=1.0\linewidth]{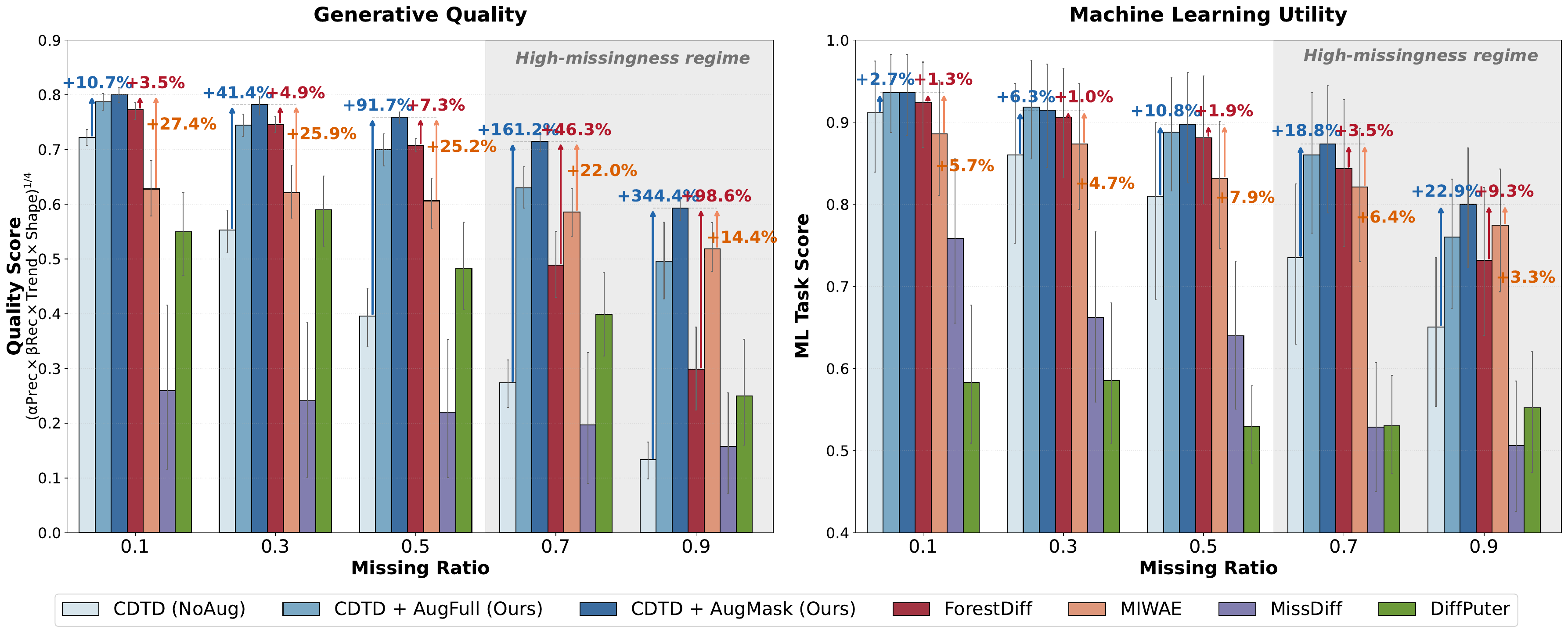}
    \caption{Robustness across missing ratios (\textbf{MAR}) for \textbf{CDTD} compared across all missing-aware baselines. \textbf{Generative Quality} (Left) and \textbf{Machine Learning Utility} (Right)}
    
    \includegraphics[width=1.0\linewidth]{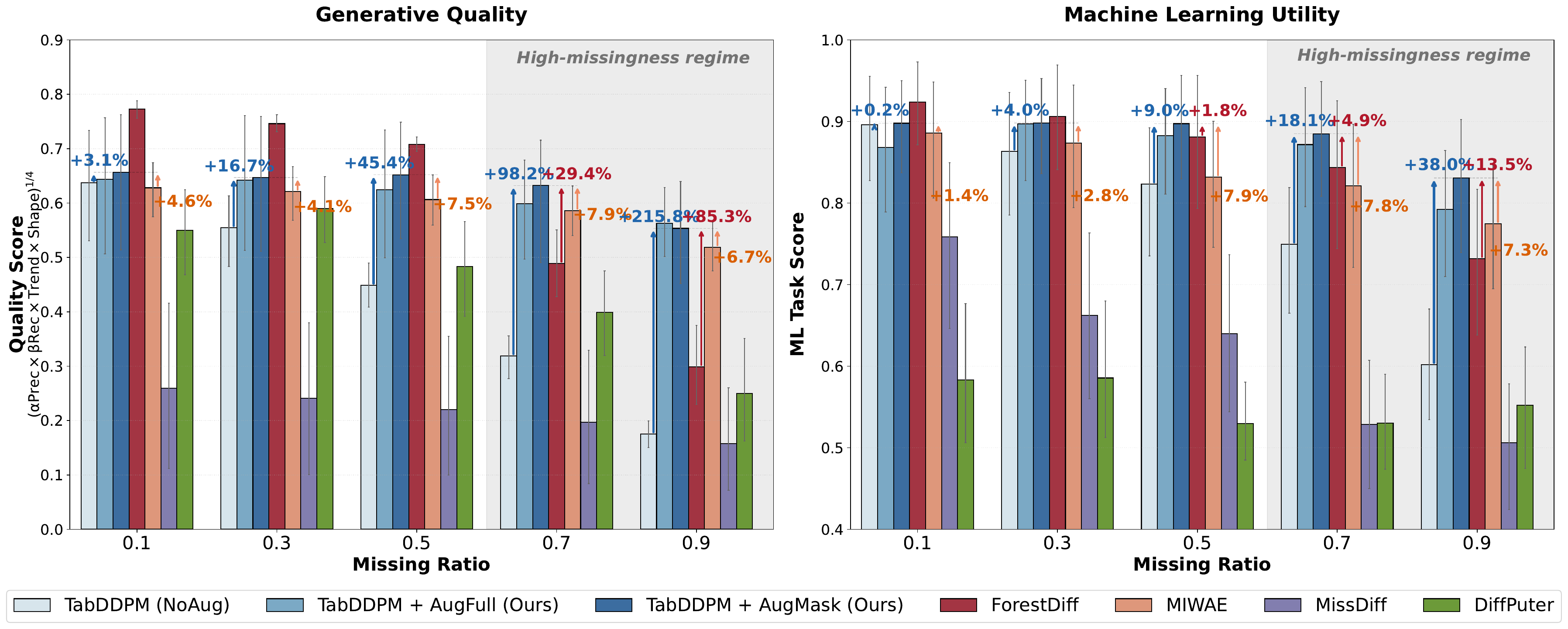}
    \caption{Robustness across missing ratios (\textbf{MAR}) for \textbf{TabDDPM} compared across all missing-aware baselines. \textbf{Generative Quality} (Left) and \textbf{Machine Learning Utility} (Right)}
    \end{minipage}%
    }
    \label{robustness_MAR}
\end{figure}

\end{document}